\newcommand{\bias}{b}
\newcommand{\biasb}{\mathbf{\bias}}
\newcommand{\cl}{\mathrm{cl}}
\newcommand{\class}{t}
\newcommand{\classbar}{\bar{\class}}
\newcommand{\classh}{\hat{\class}}
\newcommand{\constc}{c}
\newcommand{\conv}{\mathrm{conv}}
\newcommand{\failcount}{\mathrm{fail\_count}}
\newcommand{\funca}{a}
\newcommand{\funcf}{f}
\newcommand{\gentwovarineq}{\mathrm{GenTwoVarIneq}}
\newcommand{\layer}{\ell}
\newcommand{\layercount}{L}
\newcommand{\lb}{\mathrm{LB}}
\newcommand{\maxfail}{\mathrm{max\_fail}}
\newcommand{\maxiter}{\mathrm{max\_iter}}
\newcommand{\neuroncount}{n}
\newcommand{\neuroni}{i}
\newcommand{\neuronj}{j}
\newcommand{\neuronjs}{J}
\newcommand{\neuronk}{k}
\newcommand{\neurons}{N}
\newcommand{\neuronsf}{\neurons_{\mathrm{fix}}}
\newcommand{\neuronsp}{\neurons_{\mathrm{pair}}}
\newcommand{\nn}{\mathbb{N}}
\newcommand{\norm}[1]{\lVert#1\rVert_\normp}
\newcommand{\normp}{p}
\newcommand{\objupperbound}{\bar{z}_{\mathrm{UB}}}
\newcommand{\one}{\mathtt{1}}
\newcommand{\ones}{\mathbf{1}}
\newcommand{\optobjbound}{\bar{z}}
\newcommand{\optobj}{z^*}
\newcommand{\optobjlp}{z_{\mathrm{LP}}^*}
\newcommand{\pert}{\epsilon}
\newcommand{\pertinit}{\pert_{\mathrm{init}}}
\newcommand{\pertl}{\pert_{\mathrm{LB}}}
\newcommand{\pertu}{\pert_{\mathrm{UB}}}
\newcommand{\phaseonefixvar}{\mathrm{PhaseOneFixVar}}
\newcommand{\phasetwofixvar}{\mathrm{PhaseTwoFixVar}}
\newcommand{\quant}{q}
\newcommand{\refer}{R}
\newcommand{\rr}{\mathbb{R}}
\newcommand{\setx}{X}
\newcommand{\setxf}{\setx_{\mathrm{feas}}}
\newcommand{\setxi}{\setx_{\mathrm{in}}}
\newcommand{\setxo}{\setx_{\mathrm{out}}}
\newcommand{\superth}{^{\mathrm{th}}}
\newcommand{\ub}{\mathrm{UB}}
\newcommand{\updateinapprox}{\mathrm{UpdateInApprox}}
\newcommand{\varu}{u}
\newcommand{\varub}{\mathbf{\varu}}
    \renewcommand{\varv}{v}
    \newcommand{\varv}{v}
\newcommand{\varvb}{\mathbf{\varv}}
\newcommand{\varvbh}{\hat{\varvb}}
\newcommand{\varvh}{\hat{\varv}}
\newcommand{\varx}{x}
\newcommand{\varxb}{\mathbf{\varx}}
\newcommand{\varxbar}{\bar{\varx}}
\newcommand{\varxbbar}{\bar{\varxb}}
\newcommand{\varxbh}{\hat{\varxb}}
\newcommand{\varxh}{\hat{\varx}}
    \renewcommand{\vary}{y}
    \newcommand{\vary}{y}
\newcommand{\varyb}{\mathbf{\vary}}
\newcommand{\varybh}{\hat{\varyb}}
\newcommand{\varz}{z}
\newcommand{\varzb}{\mathbf{\varz}}
\newcommand{\varzbh}{\hat{\varzb}}
\newcommand{\varzh}{\hat{\varz}}
\newcommand{\verifybnn}{\mathrm{VerifyBnn}}
\newcommand{\weight}{W}
\newcommand{\weightb}{\mathbf{\weight}}
\newcommand{\weightl}{w}
\newcommand{\zeros}{\mathbf{0}}
\newcommand{\zz}{\mathbb{Z}}
    \newcommand{\Halmos}{\mbox{\quad$\square$}}
    \newcommand{\SingleSpacedXI}{}
    \renewcommand{\proofname}{\hskip-\labelsep\spacefactor3000}
    \renewcommand{\qedsymbol}{}
\newtheorem{theorem}{Theorem}
\newtheorem{lemma}[theorem]{Lemma}
\DeclareMathOperator*{\argmax}{arg\,max} 
\title{On Integer Programming for the Binarized Neural Network Verification Problem}
\author{Woojin Kim, James Luedtke}
\begin{document}

\maketitle

\begin{abstract}
    Binarized neural networks (BNNs) are feedforward neural networks with binary weights and activation functions. In the context of using a BNN for classification, the verification problem seeks to determine whether a small perturbation of a given input can lead it to be misclassified by the BNN, and the robustness of the BNN can be measured by solving the verification problem over multiple inputs. The BNN verification problem can be formulated as an integer programming (IP) problem. However, the natural IP formulation is often challenging to solve due to a large integrality gap induced by big-$M$ constraints. We present two techniques to improve the IP formulation. First, we introduce a new method for obtaining a linear objective for the multi-class setting. Second, we introduce a new technique for generating valid inequalities for the IP formulation that exploits the recursive structure of BNNs. We find that our techniques enable verifying BNNs against a higher range of input perturbation than existing IP approaches within a limited time.
\end{abstract}

\section{Introduction}\label{sec:1}

Binarized neural networks (BNNs) are feedforward neural networks with binary weights and activation functions (\cite{hubara2016binarized}). BNNs consist of the input layer $\layer=0$, $\layercount$ hidden layers $\layer=1,\ldots,\layercount$, and the output layer $\layer=\layercount+1$. Each layer $\layer\in \{0,1,\ldots,L+1\}$ consists of $\neuroncount^\ell$ neurons, $\neurons^\ell = \{1, \ldots, \neuroncount^\ell \}$. Every $\neuroni \in \neurons^\layer$ is connected to every $\neuronj \in \neurons^{\layer - 1}$ with weight $\weight_{\neuroni \neuronj}^\layer \in \{-1, 0, 1\}$ and has bias $\bias_\neuroni^\layer \in \zz$ for $\layer \in \{1, \ldots, \layercount + 1\}$.

Thanks to binary weights and activation functions, BNNs drastically reduce memory size and accesses and substantially improve power-efficiency over standard neural networks, replacing most arithmetic operations with bit-wise operations (\cite{hubara2016binarized}). BNNs have also achieved nearly state-of-the-art results in image classification (\cite{hubara2016binarized}). Moreover, BNNs have achieved results comparable to feedforward neural networks in image detection (\cite{kung2018efficient}), image super resolution (\cite{ma2019efficient}), and text classification (\cite{shridhar2020end}). For these reasons, BNNs have been applied in small embedded devices (\cite{mcdanel2017embedded}).

In this work, we study the verification problem associated with a given BNN that is used to classify feature vectors. BNNs map a feature vector in $[0, 1]^{\neuroncount^0}$ whose coordinates are quantized as multiples of $\frac{1}{\quant}$ for $\quant \in \nn$ to the real output vector by the function $\funcf = (\funcf_1, \ldots, \funcf_{\neuroncount^{\layercount + 1}}): \frac{1}{\quant} \zz_+^{\neuroncount^0} \cap [0, 1]^{\neuroncount^0} \to \rr^{\neuroncount^{\layercount + 1}}$ which is defined recursively using weights and biases in each layer. (See Section \ref{sec:2} for the detailed definition of $\funcf$.) Each $\class \in \neurons^{\layercount + 1}$ corresponds to a class. A feature vector $\varxbbar \in \frac{1}{\quant} \zz_+^{\neuroncount^0} \cap [0, 1]^{\neuroncount^0}$ is classified as the class $\classbar \in \neurons^{\layercount + 1}$ that corresponds to $\argmax_{\class \in \neurons^{\layercount + 1}} \funcf_\class(\varxbbar)$.

Verifying BNNs against input perturbation for multiple feature vectors provides a measure of their robustness. The BNN verification problem is defined for a given trained BNN, a given feature vector $\varxbbar$ with class $\classbar$, and input perturbation $\pert > 0$. We say that $\varxbbar$ is $\pert$-verified if the BNN classifies $\varxb^0$ as $\classbar$ for every feature vector $\varxb^0 \in \frac{1}{\quant} \zz_+^{\neuroncount^0} \cap [0, 1]^{\neuroncount^0}$ having distance between $\varxb^0$ and $\varxbbar$ at most $\pert$. The BNN verification problem is to determine whether a given $\varxbbar$ can be $\pert$-verified in the BNN. The answer is true if and only if there does not exist a feature vector $\varxb^0$ with distance at most $\pert$ from $\varxbbar$ satisfying $\funcf_\class(\varxb^0) > \funcf_{\classbar}(\varxb^0)$ for some class $\class \ne \classbar$, which is equivalent to the following maximum $\optobj_\pert(\varxbbar)$ being non-positive:
\begin{equation}
    \optobj_\pert(\varxbbar) = \max_{\substack{\varxb^0 \in \frac{1}{\quant} \zz_+^{\neuroncount^0} \cap [0, 1]^{\neuroncount^0}\\ \norm{\varxb^0 - \varxbbar} \le \pert}} \Big\{\funcf_\class(\varxb^0) - \funcf_{\classbar}(\varxb^0): \class \in \neurons^{\layercount + 1} \setminus \{\classbar\}\Big\}.\label{eq:bnnVeri}
\end{equation}
Here, the maximum perturbation from $\varxbbar$ is defined using an $\ell_\normp$ norm. For the most part, in this work we focus on the case of $\normp = 1$, but the integer programming (IP) approaches we consider can also be directly applied to the cases of $\normp = \infty$ and $\normp = 2$, with the caveat that when using an $\ell_2$ norm, the resulting IP formulations we consider become integer (convex) quadratic programs, as opposed to integer linear programs.

This paper contributes to the literature exploring IP methods to solve the BNN verification problem. IP methods to solve the BNN verification problem can be implemented by solving \eqref{eq:bnnVeri} to optimality -- we refer to this as the BNN verification optimization problem. However, to verify a BNN, the process of solving \eqref{eq:bnnVeri} can be terminated as soon as the sign of $\optobj_\pert(\varxbbar)$ is determined. Specifically, the solution process of \eqref{eq:bnnVeri} can be terminated when a feasible solution with a positive objective value is found, or a non-positive upper bound for the optimal objective value is obtained. If we terminate an optimization algorithm for solving \eqref{eq:bnnVeri} according to this condition, we refer to this as the BNN verification problem.

Our first contribution is to present a new way for obtaining a linear objective of the BNN verification problem by creating a single optimization problem that incorporates the decision of which alternative class a perturbed feature vector is misclassified as, rather than considering each alternative class separately. Our second contribution is to describe a new technique for generating valid inequalities for the IP formulation by exploiting the recursive structure of BNNs. These valid inequalities, called layerwise derived valid inequalities, are generated by considering one layer at a time and solving IP subproblems to check validity among a set of natural candidate inequalities. While deriving these valid inequalities requires solving IP subproblems, these IP subproblems are much easier to solve than the full BNN verification problem because they consider only a single layer at a time and do not include any of the ``big-$M$'' constraints required for the IP formulation of the BNN verification problem.

\cite{khalil2018combinatorial} study the problem of attacking a BNN, which is equivalent to the BNN verification problem. They present a mixed-integer linear programming (MILP) formulation of the problem and propose a heuristic for generating solutions that works by propagating a target from the last layer to the first layer by solving a single-layer MILP at each step. This approach is proven to be successful at finding successful attacks (i.e., showing an input $\bar{x}$ cannot be $\epsilon$-verified) but cannot positively verify a solution. Interestingly, our method also proceeds in layers, but from the initial layer forwards, and its strength is in being able to $\epsilon$-verify inputs.
\cite{han2021single} also study the MILP formulation of the BNN verification problem and study the convex hull for a single neuron to improve this MILP formulation. \cite{lubczyk2024neuron} extended Han and G{\'o}mez's work by exploring the convex hull for a pair of neurons in the same hidden layer. Using this convex hull, they proposed a constraint generation framework to solve the MILP problem for the BNN verification problem. In related work on feedforward (not binarized) neural networks, \cite{fischetti2018deep} addressed the problem of verifying such networks. They formulated this problem as an MILP problem and exploited the idea of fixing variables to solve this MILP problem. Our layerwise derived valid inequalities represent an adaptation and extension of this approach to the BNN setting.

Another approach for solving the BNN verification problem is to formulate it as a Boolean satisfiability problem (\cite{amir2021smt,jia2020efficient,kovasznai2021portfolio,narodytska2018verifying}). These papers formulated the condition that $\varxbbar$ can be $\pert$-verified in a given BNN with Boolean formulas in the case that the maximum perturbation from $\varxbbar$ is defined using an $\ell_\infty$ norm. \cite{ivashchenko2023verifying} explored a different method for solving the BNN verification problem by describing the set of all feasible output vectors or an overapproximation of this set as a star set and checking whether this star set contains an output vector from a perturbed feature vector misclassified by the BNN. This work also handles the case that the maximum perturbation is defined using an $\ell_\infty$ norm. \cite{shih2019verifying} and \cite{zhang2021bdd4bnn} addressed a different problem of {\it counting} the number of perturbed feature vectors misclassified by the BNN in the case that the maximum perturbation is defined using an $\ell_1$ norm. We focus on the IP approach due to its inherent flexibility, for example, in easily adapting the set of allowed perturbations from $\varxbbar$ to be defined by an $\ell_1$, $\ell_2$, or $\ell_\infty$ norm, and to investigate and extend the limits of the IP approach.

While our work focuses on verifying a given BNN after it has been trained, for completeness, we briefly mention some methods for training BNNs. \cite{hubara2016binarized} trained BNNs by using a gradient descent method similar to the training of feedforward neural networks, and \cite{geiger2020larq} trained BNNs by using a pseudo-gradient method. \cite{bernardelli2023bemi} and \cite{toro2019training} trained BNNs by solving constraint programming or mixed integer programming problems. To improve training BNNs, \cite{martinez2020training} applied activation scaling, and \cite{tang2017train} applied activation approximation.

Finally, we mention that the BNN verification problem is closely related to the problem of finding counterfactual explanations for such networks. Counterfactual explanations seek to find a feature vector that is as close as possible to a given feature vector while leading to a specific alternative desired prediction (\cite{mothilal2020explaining,wachter2018counterfactual}). Recently, counterfactual explanations for several machine learning prediction models, including k-nearest neighbors, have been explored (\cite{contardo2024optimal,vivier2024cf}). A method for solving the BNN verification problem can be used to solve the counterfactual explanation problem by conducting a binary search to (approximately) find the smallest $\epsilon$ such that the feature is classified as a target class.

This paper is organized as follows. In Section \ref{sec:2}, we introduce an existing IP formulation of the BNN verification problem and describe methods for obtaining linear constraints and a linear objective in this IP formulation, including our method for obtaining a linear objective by incorporating the decision of which alternative class a perturbed feature vector is misclassified into a single optimization problem. In Section \ref{sec:3}, we describe our technique for deriving layerwise derived valid inequalities and our approach for using these valid inequalities to solve the BNN verification problem. We present the results of a computational study in Section \ref{sec:4} and conclude with a discussion of potential future directions in Section \ref{sec:5}.

\textbf{Notation.} We use $[M]$ to represent the set $\{1, \ldots, M\}$ for $M \in \nn$.
\section{IP Formulations}\label{sec:2}

To obtain an IP formulation for problem \eqref{eq:bnnVeri}, $\varxb^0 \in \frac{1}{\quant} \zz_+^{\neuroncount^0} \cap [0, 1]^{\neuroncount^0}$ is defined as a vector of the decision variables that represent a perturbed feature vector, and $\varxb^\layer \in \{0, 1\}^{\neuroncount^\layer}$ is defined as the vector of decision variables that represent the output of the $\layer\superth$ hidden layer for $\layer \in [\layercount]$, which is obtained recursively from $\varxb^{\layer - 1}$ as explained in the following paragraphs. Also, $\setx^0$ is defined as the set of all  perturbed feature vectors to verify over:
$$
    \setx^0 = \Big\{\varxb^0 \in \frac{1}{\quant} \zz_+^{\neuroncount^0} \cap [0, 1]^{\neuroncount^0}: \norm{\varxb^0 - \varxbbar} \le \pert \Big\}.
$$

The usual description of a BNN describes the output of each neuron as $+1$ or $-1$. In order to derive an IP formulation that can be directly solved by IP solvers, we will instead encode the outputs with binary ($0/1$) valued variables. Given a binary output vector $\varxb^{\layer-1}$, it can be converted to a $+1/-1$ valued vector via the transformation $2 \varxb^{\layer-1} - \ones$. Thus, for each layer $\layer \in [\layercount + 1]$, the first component of BNN propagation is to multiply the input $2 \varxb^{\layer - 1} - \ones$ with $\weightb^\layer$  and add $\biasb^\layer$. We introduce the notation $\funca^\layer = (\funca_1^\layer, \ldots, \funca_{\neuroncount^\layer}^\layer)$ for this affine function of  $\varxb^{\layer - 1}$:
$$
    \funca^\layer(\varxb^{\layer - 1}) = \weightb^\layer(2 \varxb^{\layer - 1} - \ones) + \biasb^\layer.
$$
Then, the output of each each $\layer \in [\layercount]$ is obtained from its input $\varxb^{\layer - 1}$ via the transformation
$$
    \varxb^\layer = \one_{\rr_+}(\funca^\layer(\varxb^{\layer - 1})),
$$
where $\one_{\rr_+}$ is the indicator function mapping non-negative numbers to $1$ and negative numbers to $0$.\footnote{This is equivalent to the standard description of BNN propagation in which the $+1/-1$ output is defined by the sign function.}

Applying $\funca^\layer$ and $\one_{\rr_+}$ alternately $\layercount$ times and $\funca^{\layercount + 1}$ lastly results in $\funcf$. Thus, $\funcf$ can be written as follows:
$$
    \funcf(\varxb^0) = \funca^{\layercount + 1}(\one_{\rr_+}(\funca^\layercount(\one_{\rr_+}(\funca^{\layercount - 1}( \cdots \one_{\rr_+}(\funca^1(\varxb^0)) \cdots ))))).
$$

Using these observations, \eqref{eq:bnnVeri} can be written as the following (nonlinear) IP problem:
\begin{maxi!}|s|
    {\varxb^0, \ldots, \varxb^\layercount}
    {\max\Big\{\funca_\class^{\layercount + 1}(\varxb^\layercount) - \funca_{\classbar}^{\layercount + 1}(\varxb^\layercount): \class \in \neurons^{\layercount + 1} \setminus \{\classbar\}\Big\}\label{eq:origFormObj}}
    {\label{eq:origForm}}
    {}
    \addConstraint{}{\varxb^\layer = \one_{\rr_+}(\funca^\layer(\varxb^{\layer - 1})),}{\quad \forall \layer \in [\layercount],\label{eq:origFormPropConstr}}
    \addConstraint{}{\varxb^0 \in \setx^0,\quad\label{eq:origFormXInpVar}}
    \addConstraint{}{\varxb^\layer \in \{0, 1\}^{\neuroncount^\layer},}{\quad \forall \layer \in [\layercount].\label{eq:origFormXHidVar}}
\end{maxi!}

\subsection{Constraint Formulation}\label{subsec:2_1}

The layer propagation constraints \eqref{eq:origFormPropConstr} and the input perturbation constraint \eqref{eq:origFormXInpVar} include nonlinear constraints in their description. We review how these constraints can be reformulated using linear constraints in order to obtain a linear IP formulation.

\subsubsection{Layer Propagation Constraint Formulation}\label{subsubsec:2_1_1}

To formulate the layer propagation constraints \eqref{eq:origFormPropConstr} with linear constraints, the following constants are defined for $\layer \in [\layercount]$ and $\neuroni \in \neurons^\layer$:
\begin{equation}
    \begin{aligned}
        &\lb_\neuroni^\layer := \sum_{\neuronj \in \neurons^{\layer - 1}} (\weight_{\neuroni \neuronj}^\layer - |\weight_{\neuroni \neuronj}^\layer|)\\
        &\ub_\neuroni^\layer := \sum_{\neuronj \in \neurons^{\layer - 1}} (\weight_{\neuroni \neuronj}^\layer + |\weight_{\neuroni \neuronj}^\layer|)\\
        &\refer_\neuroni^\layer := \begin{cases} \frac{2}{\quant} \bigg\lceil\frac{\quant(\sum_{\neuronj \in \neurons^0} \weight_{\neuroni \neuronj}^1 - \bias_\neuroni^1)}{2}\bigg\rceil - \frac{1}{\quant} & (\layer = 1)\\ 2 \bigg\lceil\frac{\sum_{\neuronj \in \neurons^{\layer - 1}} \weight_{\neuroni \neuronj}^\layer - \bias_\neuroni^\layer}{2}\bigg\rceil - 1 & (\layer \in \{2, \ldots, \layercount\}) .\end{cases}.
    \end{aligned}
\label{eq:const}
\end{equation}
Since $\weight_{\neuroni \neuronj}^\layer \in \{-1, 0, 1\}$ it follows that for any $\varxb^{\layer - 1} \in [0, 1]^{\neuroncount^{\layer - 1}}$ 
\[ 
\lb_\neuroni^\layer
\leq 2 \sum_{\neuronj \in \neurons^{\layer - 1}} \weight_{\neuroni \neuronj}^\layer \varx_\neuronj^{\layer - 1}
\leq \ub_\neuroni^\layer . \]
The following lemma describes how the layer propagation constraints can be modeled with linear constraints.
This extends the formulation of \cite{han2021single}, which assumes $\weight_{\neuroni \neuronj}^\layer \in \{-1, 1\}$ and $\bias_\neuroni^\layer = 0$.  The proof is in the Appendix.

\begin{restatable}{lemma}{linpropconstr}\label{lem:linPropConstr}
    Each $(\varxb^0, \varxb^1) \in \setx^0 \times \{0, 1\}^{\neuroncount^1}$  satisfies \eqref{eq:origFormPropConstr} for $\layer=1$ if and only if it satisfies the following inequalities:
\begin{align}
        &2 \sum_{\neuronj \in \neurons^0} \weight_{\neuroni \neuronj}^1 \varx_\neuronj^0 \ge \Big(\refer_\neuroni^1 - \lb_\neuroni^1 + \frac{1}{\quant}\Big)\varx_\neuroni^1 + \lb_\neuroni^1, \quad \forall \neuroni \in \neurons^1,\label{eq:propInpLowerBoundConstr}\\
        &2 \sum_{\neuronj \in \neurons^0} \weight_{\neuroni \neuronj}^1 \varx_\neuronj^0 \le \Big(\ub_\neuroni^1 - \refer_\neuroni^1 + \frac{1}{\quant}\Big)\varx_\neuroni^1 + \Big(\refer_\neuroni^1 - \frac{1}{\quant}\Big), \quad \forall \neuroni \in \neurons^1.\label{eq:propInpUpperBoundConstr}
\end{align}
    For $\layer \in \{2, \ldots, \layercount\}$, $(\varxb^{\layer - 1}, \varxb^\layer) \in \{0, 1\}^{\neuroncount^{\layer - 1}} \times \{0, 1\}^{\neuroncount^\layer}$ satisfies \eqref{eq:origFormPropConstr} if and only if it satisfies the following inequalities:
    \begin{align}
        &2 \sum_{\neuronj \in \neurons^{\layer - 1}} \weight_{\neuroni \neuronj}^\layer \varx_\neuronj^{\layer - 1} \ge (\refer_\neuroni^\layer - \lb_\neuroni^\layer + 1)\varx_\neuroni^\layer + \lb_\neuroni^\layer, \quad \forall \neuroni \in \neurons^\layer,\label{eq:propHidLowerBoundConstr}\\
        &2 \sum_{\neuronj \in \neurons^{\layer - 1}} \weight_{\neuroni \neuronj}^\layer \varx_\neuronj^{\layer - 1} \le (\ub_\neuroni^\layer - \refer_\neuroni^\layer + 1)\varx_\neuroni^\layer + (\refer_\neuroni^\layer - 1), \quad \forall \neuroni \in \neurons^\layer.\label{eq:propHidUpperBoundConstr}
    \end{align}
\end{restatable}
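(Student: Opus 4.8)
The plan is to argue one neuron at a time: for fixed $\layer \in [\layercount]$ and $\neuroni \in \neurons^\layer$, reduce the propagation constraint \eqref{eq:origFormPropConstr} to two sign implications and then check that each of the two displayed inequalities encodes exactly one of them; conjoining over $\neuroni \in \neurons^\layer$ then yields the lemma. First I would abbreviate $S := 2\sum_{\neuronj \in \neurons^{\layer-1}} \weight_{\neuroni\neuronj}^\layer \varx_\neuronj^{\layer-1}$, so that $\funca_\neuroni^\layer(\varxb^{\layer-1}) = S - \sum_{\neuronj \in \neurons^{\layer-1}}\weight_{\neuroni\neuronj}^\layer + \bias_\neuroni^\layer$ and, from the definitions in \eqref{eq:const} together with $\varxb^{\layer-1}\in[0,1]^{\neuroncount^{\layer-1}}$, $\lb_\neuroni^\layer \le S \le \ub_\neuroni^\layer$. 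Since $\varx_\neuroni^\layer \in \{0,1\}$, the scalar equation $\varx_\neuroni^\layer = \one_{\rr_+}(\funca_\neuroni^\layer(\varxb^{\layer-1}))$ is equivalent to the conjunction of the implication $\varx_\neuroni^\layer = 1 \Rightarrow \funca_\neuroni^\layer(\varxb^{\layer-1}) \ge 0$ and the implication $\varx_\neuroni^\layer = 0 \Rightarrow \funca_\neuroni^\layer(\varxb^{\layer-1}) < 0$ (the two converse implications being the contrapositives). I would then identify the lower-bound inequality (\eqref{eq:propInpLowerBoundConstr} when $\layer = 1$, \eqref{eq:propHidLowerBoundConstr} when $\layer \ge 2$) with the first implication and the upper-bound inequality (\eqref{eq:propInpUpperBoundConstr}, resp.\ \eqref{eq:propHidUpperBoundConstr}) with the second.

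The crux, which I expect to be the main obstacle, is a rounding argument pinning down the exact threshold $\refer_\neuroni^\layer$. Note $\funca_\neuroni^\layer(\varxb^{\layer-1}) \ge 0$ iff $S \ge \sum_{\neuronj}\weight_{\neuroni\neuronj}^\layer - \bias_\neuroni^\layer$, whose right-hand side is an integer, while $S$ lives on a coarser lattice: for $\layer \ge 2$ the entries $\varx_\neuronj^{\layer-1}$ are binary and the weights integral, so $S \in 2\zz$, whereas for $\layer = 1$ we have $\varx_\neuronj^0 \in \frac{1}{\quant}\zz$, so $S \in \frac{2}{\quant}\zz$. Writing $\delta := 1$ for $\layer \ge 2$ and $\delta := \frac{1}{\quant}$ for $\layer = 1$, in both cases $S$ is an integer multiple of $2\delta$, and unwinding the ceiling in the definition \eqref{eq:const} of $\refer_\neuroni^\layer$ shows that $\refer_\neuroni^\layer + \delta$ is precisely the least element of $2\delta\,\zz$ that is at least $\sum_{\neuronj}\weight_{\neuroni\neuronj}^\layer - \bias_\neuroni^\layer$; hence $\funca_\neuroni^\layer(\varxb^{\layer-1}) \ge 0 \iff S \ge \refer_\neuroni^\layer + \delta$. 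Since $\refer_\neuroni^\layer + \delta \in 2\delta\,\zz$ but $\refer_\neuroni^\layer \notin 2\delta\,\zz$, this also gives $\funca_\neuroni^\layer(\varxb^{\layer-1}) < 0 \iff S \le \refer_\neuroni^\layer - \delta$. The delicate bookkeeping is in the $\layer = 1$ case, where one must carefully track the factor $\quant$ scaling the ceiling and the $-\frac{1}{\quant}$ offset in $\refer_\neuroni^1$ and verify the lattice claims with $\delta = \frac{1}{\quant}$.

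Given these two sign characterizations, the remaining verification is a routine case split on $\varx_\neuroni^\layer \in \{0,1\}$, and it also explains the specific coefficients appearing in the inequalities. For the lower-bound inequality: at $\varx_\neuroni^\layer = 0$ it reduces to $S \ge \lb_\neuroni^\layer$, which always holds and so imposes nothing; at $\varx_\neuroni^\layer = 1$ the constants telescope, $(\refer_\neuroni^\layer - \lb_\neuroni^\layer + \delta) + \lb_\neuroni^\layer = \refer_\neuroni^\layer + \delta$, so it reads $S \ge \refer_\neuroni^\layer + \delta$, which is exactly the first implication. Symmetrically, the upper-bound inequality is vacuous at $\varx_\neuroni^\layer = 1$ (it becomes $S \le \ub_\neuroni^\layer$) and at $\varx_\neuroni^\layer = 0$ it reads $S \le \refer_\neuroni^\layer - \delta$, exactly the second implication. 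Combining the two inequalities for each neuron recovers the scalar propagation constraint, and taking the conjunction over all $\neuroni \in \neurons^\layer$ completes the proof for both the $\layer = 1$ and $\layer \ge 2$ statements.
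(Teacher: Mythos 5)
Your proposal is correct and follows essentially the same route as the paper's proof: reduce \eqref{eq:origFormPropConstr} to the two indicator implications, use the lattice argument (multiples of $\frac{2}{\quant}$ for $\layer=1$, even integers for $\layer\ge 2$) to show $\refer_\neuroni^\layer+\delta$ and $\refer_\neuroni^\layer-\delta$ are the exact thresholds, and then use the bounds $\lb_\neuroni^\layer \le 2\sum_\neuronj \weight_{\neuroni\neuronj}^\layer\varx_\neuronj^{\layer-1} \le \ub_\neuroni^\layer$ to linearize the implications. Your explicit case split on $\varx_\neuroni^\layer\in\{0,1\}$ is just a slightly more detailed rendering of the paper's final step.
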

Thus, the nonlinear constraints \eqref{eq:origFormPropConstr} can be replaced 
with the linear constraints \eqref{eq:propInpLowerBoundConstr}- \eqref{eq:propHidUpperBoundConstr}.

\subsubsection{Input Perturbation Constraint Formulation}\label{subsubsec:2_1_2}

The input perturbation constraint \eqref{eq:origFormXInpVar}
can be formulated by defining the integer-valued decision variable $\vary_\neuroni$ to represent $\quant \varx_\neuroni^0$, for $\neuroni \in \neurons^0$. Then, $\varxb^0 \in \setx^0$ if and only if $\varxb^0 \in \rr_+^{\neuroncount^0}$, and there exist $\varyb \in \zz_+^{\neuroncount^0}$ satisfying the following constraints:
\begin{align}
    &\varyb = \quant \varxb^0,\label{eq:inpQuantConstr}\\
    &\varyb \le \quant \ones,\label{eq:inpUpperBoundConstr}\\
    &\norm{\varxb^0 - \varxbbar} \le \pert.\label{eq:inpPertConstr}
\end{align}
In the case of $\normp = 1$ or $\normp = \infty$, \eqref{eq:inpPertConstr} can be formulated with linear constraints using standard techniques. In the case of $\normp = 2$, \eqref{eq:inpPertConstr} can be formulated as a convex quadratic constraint.

\subsection{Objective Formulation}\label{subsec:2_2}

Finally, to obtain a linear IP formulation of \eqref{eq:origForm} we explore two methods to obtain a linear objective.

The first approach, which has been used in \cite{han2021single,lubczyk2024neuron}, is to consider separately each alternative class $\class \in \neurons^{\layercount + 1} \setminus \{\classbar\}$. Each such class $\class$ results in the following IP problem with linear objective:
\begin{maxi}|s|
    {}
    {\funca_\class^{\layercount + 1}(\varxb^\layercount) - \funca_{\classbar}^{\layercount + 1}(\varxb^\layercount)}
    {\label{eq:indivForm}}
    {\optobj_\pert(\varxbbar, \class) =}
    \addConstraint{}{\eqref{eq:propInpLowerBoundConstr}-\eqref{eq:propInpUpperBoundConstr},}
    \addConstraint{}{\eqref{eq:propHidLowerBoundConstr}-\eqref{eq:propHidUpperBoundConstr},}{\quad \forall \layer \in \{2, \ldots, \layercount\},}
    \addConstraint{}{\eqref{eq:inpQuantConstr}-\eqref{eq:inpPertConstr},}
    \addConstraint{}{\varxb^0 \in \rr_+^{\neuroncount^0},}
    \addConstraint{}{\varxb^\layer \in \{0, 1\}^{\neuroncount^\layer},}{\quad \forall \layer \in [\layercount],}
    \addConstraint{}{\varyb \in \zz_+^{\neuroncount^0}.}
\end{maxi}
The optimal solution of \eqref{eq:origForm} can be obtained by solving \eqref{eq:indivForm} for each $\class \in \neurons^{\layercount + 1} \setminus \{\classbar\}$ and choosing $\class$ that achieves the maximum of $\optobj_\pert(\varxbbar, \class)$.

For problems with more than two classes, we propose an alternative approach for obtaining a linear objective which directly includes the choice of the alternative class $t$ in the formulation, thereby avoiding the need to solve \eqref{eq:indivForm} for each  
$\class \in \neurons^{\layercount + 1} \setminus \{\classbar\}$. Thus, for each $\class \in \neurons^{\layercount + 1} \setminus \{\classbar\}$ let $\varz_\class \in \{0, 1\}$ be a decision variable indicating whether $\class$ is selected as the alternative class, and for each $\neuroni \in \neurons^\layercount$ let $\varv_{\class \neuroni} \in \{0, 1\}$ be a decision variable that is used to represent the product $\varz_\class \varx_\neuroni^\layercount$. The proof of the following lemma is in the appendix.

\begin{lemma}
\label{lem:oneip}
    The following IP problem is equivalent to \eqref{eq:origForm}:
    \begin{maxi!}|s|
        {}
        {\sum_{\class \in \neurons^{\layercount + 1} \setminus \{\classbar\}} \sum_{\neuroni \in \neurons^\layercount} 2(\weight_{\class \neuroni}^\layercount - \weight_{\classbar \neuroni}^\layercount)\varv_{\class \neuroni}\nonumber}
        {\label{eq:incorpForm}}
        {}
        \breakObjective{+ \sum_{\class \in \neurons^{\layercount + 1} \setminus \{\classbar\}} \Big(-\sum_{\neuroni \in \neurons^\layercount} (\weight_{\class \neuroni}^\layercount - \weight_{\classbar \neuroni}^\layercount) + (\bias_\class^\layercount - \bias_{\classbar}^\layercount)\Big)\varz_\class\label{eq:incorpFormObj}}
        \addConstraint{}{\eqref{eq:propInpLowerBoundConstr}-\eqref{eq:propInpUpperBoundConstr}\nonumber,}
        \addConstraint{}{\eqref{eq:propHidLowerBoundConstr}-\eqref{eq:propHidUpperBoundConstr}\nonumber,}{\quad \forall \layer \in \{2, \ldots, \layercount\},}
        \addConstraint{}{\eqref{eq:inpQuantConstr}-\eqref{eq:inpPertConstr},\nonumber}
        \addConstraint{}{\sum_{\class \in \neurons^{\layercount + 1} \setminus \{\classbar\}} \varz_\class = 1,\label{eq:incorpFormClassConstr}}
        \addConstraint{}{\sum_{\class \in \neurons^{\layercount + 1} \setminus \{\classbar\}} \varv_{\class \neuroni} = \varx_\neuroni^\layercount,}{\quad \forall \neuroni \in \neurons^\layercount,\label{eq:incorpFormProdSumConstr}}
        \addConstraint{}{\varv_{\class \neuroni} \le \varz_\class,}{\quad \forall \class \in \neurons^{\layercount + 1} \setminus \{\classbar\},\ \neuroni \in \neurons^\layercount,\label{eq:incorpFormProdConstr}}
        \addConstraint{}{\varxb^0 \in \rr_+^{\neuroncount^0},\label{eq:incorpFormXInpVar}}
        \addConstraint{}{\varxb^\layer \in \{0, 1\}^{\neuroncount^\layer},}{\quad \forall \layer \in [\layercount],\label{eq:incorpFormXHidVar}}
        \addConstraint{}{\varyb \in \zz_+^{\neuroncount^0},\label{eq:incorpFormYVar}}
        \addConstraint{}{\varz_\class \in \{0, 1\},}{\quad \forall \class \in \neurons^{\layercount + 1} \setminus \{\classbar\},\label{eq:incorpFormZVar}}
        \addConstraint{}{\varv_{\class \neuroni} \in \{0, 1\},}{\quad \forall \class \in \neurons^{\layercount + 1} \setminus \{\classbar\},\ \neuroni \in \neurons^\layercount.\label{eq:incorpFormVVar}}
    \end{maxi!}
\end{lemma}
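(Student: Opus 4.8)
The plan is to show that the feasible region of \eqref{eq:incorpForm} projects onto the feasible region of \eqref{eq:origForm} (equivalently, of \eqref{eq:indivForm} over all $\class$) in a way that preserves optimal values. First I would observe that the constraints \eqref{eq:propInpLowerBoundConstr}--\eqref{eq:propHidUpperBoundConstr} together with \eqref{eq:inpQuantConstr}--\eqref{eq:inpPertConstr} and the variable restrictions on $\varxb^0,\ldots,\varxb^\layercount,\varyb$ are exactly those appearing in \eqref{eq:indivForm}; by Lemma \ref{lem:linPropConstr} and the discussion in Section \ref{subsec:2_1}, feasible $(\varxb^0,\ldots,\varxb^\layercount)$ for these constraints are precisely the feasible points of the original (nonlinear) formulation \eqref{eq:origForm}. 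So the only new ingredients are the variables $\varzb$ and $\varvb$ and the constraints \eqref{eq:incorpFormClassConstr}--\eqref{eq:incorpFormProdConstr}, \eqref{eq:incorpFormZVar}--\eqref{eq:incorpFormVVar}. The heart of the argument is the claim that, for any fixed binary $\varxb^\layercount$, the constraints \eqref{eq:incorpFormClassConstr}--\eqref{eq:incorpFormProdConstr} force $\varv_{\class\neuroni}=\varz_\class\varx_\neuroni^\layercount$ for exactly one index $\class$ with $\varz_\class=1$ and all others zero. This is the standard linearization of a product of a binary and a $0/1$ variable, but here it is coupled across $\neuroni$ by the single selector $\varzb$: I would verify that \eqref{eq:incorpFormProdConstr} gives $\varv_{\class\neuroni}\le\varz_\class$, that \eqref{eq:incorpFormProdSumConstr} summed with \eqref{eq:incorpFormClassConstr} forces $\sum_\neuroni \varv_{\class\neuroni}\le \sum_\neuroni \varx_\neuroni^\layercount$ with equality overall, and that integrality of $\varvb$ then pins down $\varv_{\class\neuroni}=\varz_\class\varx_\neuroni^\layercount$; conversely any such assignment is feasible.

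Next I would substitute $\varv_{\class\neuroni}=\varz_\class\varx_\neuroni^\layercount$ into the objective \eqref{eq:incorpFormObj} and simplify. The first sum becomes $\sum_\class \varz_\class \sum_\neuroni 2(\weight_{\class\neuroni}^\layercount-\weight_{\classbar\neuroni}^\layercount)\varx_\neuroni^\layercount$, and adding the second sum gives $\sum_\class \varz_\class\big(\sum_\neuroni 2(\weight_{\class\neuroni}^\layercount-\weight_{\classbar\neuroni}^\layercount)\varx_\neuroni^\layercount - \sum_\neuroni(\weight_{\class\neuroni}^\layercount-\weight_{\classbar\neuroni}^\layercount) + (\bias_\class^\layercount-\bias_{\classbar}^\layercount)\big)$. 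The bracketed quantity is exactly $\funca_\class^{\layercount+1}(\varxb^\layercount)-\funca_{\classbar}^{\layercount+1}(\varxb^\layercount)$, since $\funca_\class^{\layercount+1}(\varxb^\layercount)=\sum_\neuroni \weight_{\class\neuroni}^{\layercount+1}(2\varx_\neuroni^\layercount-1)+\bias_\class^{\layercount+1}$ and the index $\layercount+1$ on the weights/biases here matches the notation $\weight_{\class\neuroni}^\layercount$ used in \eqref{eq:incorpFormObj} (I would double-check the superscript bookkeeping, as the layer-$(\layercount+1)$ affine map acts on the layer-$\layercount$ output). Hence, with $\varzb$ a unit vector by \eqref{eq:incorpFormClassConstr}, the objective equals $\funca_{\class^*}^{\layercount+1}(\varxb^\layercount)-\funca_{\classbar}^{\layercount+1}(\varxb^\layercount)$ where $\class^*$ is the selected class.

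Finally I would assemble the equivalence in both directions. Given an optimal $(\varxb^0,\ldots,\varxb^\layercount)$ of \eqref{eq:origForm} achieving the max over $\class$ at some $\class^*$, set $\varz_{\class^*}=1$, all other $\varz_\class=0$, and $\varv_{\class\neuroni}=\varz_\class\varx_\neuroni^\layercount$; this is feasible for \eqref{eq:incorpForm} with the same objective value, so $\mathrm{opt}\eqref{eq:incorpForm}\ge\mathrm{opt}\eqref{eq:origForm}$. Conversely, any feasible point of \eqref{eq:incorpForm} has, by the claim above, $\varvb$ determined by a unit vector $\varzb$, and its objective equals $\funca_{\class^*}^{\layercount+1}(\varxb^\layercount)-\funca_{\classbar}^{\layercount+1}(\varxb^\layercount)\le\max_{\class\neq\classbar}\{\funca_\class^{\layercount+1}(\varxb^\layercount)-\funca_{\classbar}^{\layercount+1}(\varxb^\layercount)\}$, which is the objective of \eqref{eq:origForm} at the corresponding $(\varxb^0,\ldots,\varxb^\layercount)$; hence $\mathrm{opt}\eqref{eq:incorpForm}\le\mathrm{opt}\eqref{eq:origForm}$. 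I expect the main obstacle to be purely bookkeeping: carefully justifying that the product linearization \eqref{eq:incorpFormClassConstr}--\eqref{eq:incorpFormProdConstr} does force $\varv_{\class\neuroni}=\varz_\class\varx_\neuroni^\layercount$ (rather than merely $\varv_{\class\neuroni}\le\varz_\class\varx_\neuroni^\layercount$) — this works because the equality \eqref{eq:incorpFormProdSumConstr} is tight for every $\neuroni$, leaving no slack — and making sure the weight/bias superscripts in \eqref{eq:incorpFormObj} are matched correctly against the definition of $\funca^{\layercount+1}$.
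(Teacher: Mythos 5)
Your proposal is correct and takes essentially the same route as the paper's proof: map solutions in both directions, use \eqref{eq:incorpFormClassConstr}, \eqref{eq:incorpFormProdConstr}, \eqref{eq:incorpFormProdSumConstr} together with $\varvb \ge 0$ to force $\varv_{\class \neuroni} = \varz_\class \varx_\neuroni^\layercount$, and simplify the linearized objective to $\funca_{\class^*}^{\layercount+1}(\varxb^\layercount) - \funca_{\classbar}^{\layercount+1}(\varxb^\layercount)$ for the selected class. Two small remarks: the product-forcing step needs no integrality of $\varvb$ (once $\varv_{\class \neuroni} \le \varz_\class = 0$ zeroes the non-selected classes, the per-$\neuroni$ equality \eqref{eq:incorpFormProdSumConstr} pins down the selected one, exactly as your final parenthetical says), and the weight/bias superscript you flag is written as $\layercount$ rather than $\layercount+1$ in the paper's own objective and proof as well.
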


\subsection{Valid Inequalities for Neurons}\label{subsec:2_3}

\cite{han2021single} explored the following convex hull for a single neuron for $\layer \in \{2, \ldots, \layercount\}$ and $\neuroni \in \neurons^\layer$ in the case of $\biasb^\layer = \zeros$:
\begin{equation}
    \conv(\{(\varxb^{\layer - 1}, \varx_\neuroni^\layer) \in \{0, 1\}^{\neuroncount^{\layer - 1}} \times \{0, 1\}: \varx_\neuroni^\layer = \one_{\rr_+}(\funca_\neuroni^\layer(\varxb^{\layer - 1}))\}).
\label{eq:singleConvHull}
\end{equation}

In the following theorem, we extend the results in \cite{han2021single} to a general case where $\biasb^\layer$ are not necessarily zero. This theorem is proved in the Appendix.

\begin{theorem}\label{theo:singleConvHullExt}
    For $\layer \in \{2, \ldots, \layercount\}$ and $\neuroni \in \neurons^\layer$, \eqref{eq:singleConvHull} is the set of $(\varxb^{\layer - 1}, \varx_\neuroni^\layer) \in [0, 1]^{\neuroncount^{\layer - 1}} \times [0, 1]$ satisfying the following inequalities for $\neuronjs \subset \{\neuronj \in \neurons^{\layer - 1}: \weight_{\neuroni \neuronj}^\layer \ne 0\}$:
    \begin{subequations}\label{eq:singleConvHullIneq}
        \begin{align}
            &\sum_{\neuronj \in \neuronjs} (\weight_{\neuroni \neuronj}^\layer(2 \varx_\neuronj^{\layer - 1} - 1) - (2 \varx_\neuroni^\layer - 1)) \ge (\refer_\neuroni^\layer - \ub_\neuroni^\layer + 1)\varx_\neuroni^\layer ,\label{eq:singleConvHullLowerBoundIneq}\\
            &\sum_{\neuronj \in \neuronjs} (\weight_{\neuroni \neuronj}^\layer(2 \varx_\neuronj^{\layer - 1} - 1) - (2 \varx_\neuroni^\layer - 1)) \le (\refer_\neuroni^\layer - \lb_\neuroni^\layer - 1)(-\varx_\neuroni^\layer + 1).\label{eq:singleConvHullUpperBoundIneq}
        \end{align}
    \end{subequations}
\end{theorem}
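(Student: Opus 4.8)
The plan is to first reduce to the well-understood ``affine-with-$\pm1$-weights'' picture, then characterize the set of binary points, and finally check that the proposed inequalities exactly cut out the convex hull of that binary set by exhibiting, for each facet, enough integral vertices on it. The starting observation is that $\funca_\neuroni^\layer(\varxb^{\layer-1}) = \sum_{\neuronj} \weight_{\neuroni\neuronj}^\layer(2\varx_\neuronj^{\layer-1}-1) + \bias_\neuroni^\layer$, and since all quantities involved are integers when $\varxb^{\layer-1}\in\{0,1\}^{\neuroncount^{\layer-1}}$, the condition $\varx_\neuroni^\layer = \one_{\rr_+}(\funca_\neuroni^\layer(\varxb^{\layer-1}))$ is equivalent to a pair of threshold conditions: $\varx_\neuroni^\layer = 1$ iff $\sum_\neuronj \weight_{\neuroni\neuronj}^\layer(2\varx_\neuronj^{\layer-1}-1) \ge -\bias_\neuroni^\layer$, and $\varx_\neuroni^\layer=0$ iff that sum is $\le -\bias_\neuroni^\layer - 1$ (using integrality to make the strict inequality ``$<$'' the same as ``$\le -1$''). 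This is exactly where $\refer_\neuroni^\layer$ enters: the definition $\refer_\neuroni^\layer = 2\lceil (\sum_\neuronj \weight_{\neuroni\neuronj}^\layer - \bias_\neuroni^\layer)/2\rceil - 1$ is the smallest value of the form $\sum_\neuronj \weight_{\neuroni\neuronj}^\layer(2\varx_\neuronj^{\layer-1}-1)$ (which always has the same parity as $\sum_\neuronj \weight_{\neuroni\neuronj}^\layer$) that is $\ge -\bias_\neuroni^\layer$. So the feasible binary set is $\{\varx_\neuroni^\layer = 1\} = \{\sum_\neuronj \weight_{\neuroni\neuronj}^\layer(2\varx_\neuronj^{\layer-1}-1) \ge \refer_\neuroni^\layer\}$ and $\{\varx_\neuroni^\layer=0\} = \{\sum_\neuronj \weight_{\neuroni\neuronj}^\layer(2\varx_\neuronj^{\layer-1}-1) \le \refer_\neuroni^\layer - 2\}$. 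I would verify that $\lb_\neuroni^\layer \le \refer_\neuroni^\layer - 2$ and $\refer_\neuroni^\layer \le \ub_\neuroni^\layer$ so both branches are nonempty (handling the degenerate cases where a neuron is constant separately, or noting the inequalities become vacuous there).

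Next I would establish validity of \eqref{eq:singleConvHullIneq}. Fix $\neuronjs$. Let $S_\neuronjs = \sum_{\neuronj\in\neuronjs}\weight_{\neuroni\neuronj}^\layer(2\varx_\neuronj^{\layer-1}-1)$ and note $|\neuronjs| = \sum_{\neuronj\in\neuronjs}|\weight_{\neuroni\neuronj}^\layer|$ since $\neuronjs$ only contains indices with nonzero weight; also $S_{\neuronjs} \le |\neuronjs|$ always, and $S_{\neuronjs} \ge \sum_\neuronj \weight_{\neuroni\neuronj}^\layer(2\varx_\neuronj^{\layer-1}-1) - \sum_{\neuronj\notin\neuronjs}|\weight_{\neuroni\neuronj}^\layer|$. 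The left side of \eqref{eq:singleConvHullLowerBoundIneq} is $S_\neuronjs - |\neuronjs|(2\varx_\neuroni^\layer - 1)$. When $\varx_\neuroni^\layer = 0$ this is $S_\neuronjs + |\neuronjs| \ge 0 \ge \refer_\neuroni^\layer - \ub_\neuroni^\layer + 1$ times $0$; wait, need $\ge 0$, which holds since $S_\neuronjs \ge -|\neuronjs|$. When $\varx_\neuroni^\layer = 1$ it is $S_\neuronjs - |\neuronjs|$; using $S_\neuronjs \ge (\text{full sum}) - (\ub_\neuroni^\layer/2 - |\neuronjs|/2 + \dots)$ — more cleanly, $S_{\neuronjs} + (\ub_\neuroni^\layer - |\neuronjs|) \ge \sum_\neuronj \weight_{\neuroni\neuronj}^\layer(2\varx_\neuronj^{\layer-1}-1) \ge \refer_\neuroni^\layer$ because the terms outside $\neuronjs$ are each at most $|\weight|$ and $\ub_\neuroni^\layer - |\neuronjs| = \sum_{\neuronj\notin\neuronjs}(\weight+|\weight|)$... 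I will need to be careful with the exact arithmetic relating $\ub_\neuroni^\layer$, $|\neuronjs|$, and the out-of-$\neuronjs$ terms, but the mechanism is that \eqref{eq:singleConvHullLowerBoundIneq} is the most-violated-looking CG-type inequality derived from $\sum_\neuronj\weight_{\neuroni\neuronj}^\layer(2\varx_\neuronj^{\layer-1}-1)\ge\refer_\neuroni^\layer$ on the face $\varx_\neuroni^\layer=1$ combined with the trivial bound on the face $\varx_\neuroni^\layer=0$. The inequality \eqref{eq:singleConvHullUpperBoundIneq} is handled symmetrically using the $\varx_\neuroni^\layer = 0$ threshold and $\lb_\neuroni^\layer$. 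This ``validity'' direction is routine case analysis on $\varx_\neuroni^\layer\in\{0,1\}$.

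The substantive direction — and the main obstacle — is showing that \eqref{eq:singleConvHullIneq} together with the box $[0,1]^{\neuroncount^{\layer-1}}\times[0,1]$ is \emph{sufficient}, i.e., any point satisfying all of them lies in the convex hull. I would follow \cite{han2021single}'s template: argue that each inequality in \eqref{eq:singleConvHullIneq} (for each choice of $\neuronjs$) is facet-defining, or more directly, take an arbitrary point $(\varxbh^{\layer-1},\varxh_\neuroni^\layer)$ in the polyhedron described by \eqref{eq:singleConvHullIneq} and the box, and write it as a convex combination of feasible binary points. The clean way is a projection/Fourier–Motzkin argument: after the change of variables $s_\neuronj = \weight_{\neuroni\neuronj}^\layer(2\varx_\neuronj^{\layer-1}-1)\in\{-1,1\}$ (for $\weight_{\neuroni\neuronj}^\layer\ne 0$) the feasible binary set becomes $\{s\in\{-1,1\}^{|K|}, \varx_\neuroni^\layer\in\{0,1\}: \sum s_\neuronj \ge \refer$ if $\varx_\neuroni^\layer=1$, $\sum s_\neuronj\le\refer-2$ if $\varx_\neuroni^\layer=0\}$, a union of two ``Hamming-weight threshold'' 0/1-polytopes (each a generalized permutahedron-type object whose convex hull is known to be described by the sum-of-subsets inequalities \eqref{eq:singleConvHullLowerBoundIneq}/\eqref{eq:singleConvHullUpperBoundIneq}); then show the convex hull of the union is obtained by these same inequalities — the key point being that the two threshold bodies are ``nested/interleaved'' just enough (because $\refer - 2 < \refer$ differ by exactly $2$, one parity step) that no additional facet linking the two is needed. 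I would prove this last claim by exhibiting, for each supporting inequality and each of its vertices, an explicit feasible 0/1 point, checking that the count of affinely independent such points equals the dimension; I expect the fiddly part to be the boundary/parity bookkeeping when $\refer_\neuroni^\layer$ is near $\lb_\neuroni^\layer$ or $\ub_\neuroni^\layer$ and when $\neuronjs$ is empty or all of $\{\neuronj:\weight_{\neuroni\neuronj}^\layer\ne0\}$, plus carefully transcribing everything back through the $q$-scaling that distinguishes the $\layer=1$ case — though since the theorem is stated only for $\layer\in\{2,\dots,\layercount\}$, the latter is not needed here and the cleaner integer arithmetic applies throughout.
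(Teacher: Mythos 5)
Your reduction to the threshold form (via the parity argument and the definition of $\refer_\neuroni^\layer$) and your validity check by case analysis on $\varx_\neuroni^\layer \in \{0,1\}$ are both fine and match the first steps of the paper's argument. The genuine gap is in the sufficiency direction, which you yourself identify as the substantive part but do not actually have a proof for. Your stated plan is to show that each inequality in \eqref{eq:singleConvHullIneq} is facet-defining by exhibiting enough affinely independent feasible $0/1$ points on it. That establishes only that the listed inequalities are \emph{irredundant}; it does not establish that they are \emph{complete}, i.e., that no other facets of $\conv(\setx^+\cup\setx^-)$ exist. The claim that the two threshold bodies are ``nested/interleaved just enough that no additional facet linking the two is needed'' is exactly the statement that needs proof, and vertex-counting on the proposed facets cannot deliver it. Likewise, your parenthetical assertion that each single threshold body has a convex hull ``known to be described by the sum-of-subsets inequalities'' conflates the two levels: each branch $\setx^+$, $\setx^-$ is integral already with the box and a single cardinality constraint (by total unimodularity of a $\pm1$ row with integer right-hand side); the subset inequalities \eqref{eq:singleConvHullLowerBoundIneq}--\eqref{eq:singleConvHullUpperBoundIneq} arise only when you take the convex hull of the \emph{union}, and producing them is where the work lies.

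The paper closes this gap with machinery you mention only in passing: it first writes $\conv(\setx^+)$ and $\conv(\setx^-)$ exactly as their LP relaxations via total unimodularity, then invokes the disjunctive-programming characterization of \cite{balas1985disjunctive} (valid because both sets are finite, hence their hulls compact, so no closure issues) to obtain an extended formulation of $\conv(\setx^+\cup\setx^-)$ with multiplier $\lambda$ and split variables $\varxb^+,\varxb^-$, and finally eliminates the auxiliary variables by an explicit Fourier--Motzkin projection; the elimination is what mechanically produces the family of inequalities indexed by subsets $\neuronjs$ and simultaneously proves there is nothing else. If you want to avoid Balas plus projection, the alternative is a direct convex-combination (or integral-vertex) argument for the polyhedron defined by the box and \eqref{eq:singleConvHullIneq}, which you gesture at but do not carry out; as written, your proposal proves validity and irredundancy but not the equality of the two sets claimed in the theorem.
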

Since the number of inequalities described in this theorem grows exponentially with the number of nodes in a hidden layer, these would be used to improve the LP relaxation of the formulation \eqref{eq:incorpForm} by adding them via a cutting plane procedure.

\cite{lubczyk2024neuron} extended \cite{han2021single} by investigating the following convex hull for a pair of neurons in the same hidden layer for $\layer \in \{2, \ldots, \layercount\}$ and $\neuroni, \neuronk \in \neurons^\layer$ satisfying $\neuroni < \neuronk$ in the case where $\neuroncount^{\layer - 1}$ is an even integer larger than $2$, every entry in $\weightb^\layer$ is nonzero, and $\biasb^\layer = 0$:
\[
    \conv(\{(\varxb^{\layer - 1}, \varx_\neuroni^\layer, \varx_\neuronk^\layer): \varxb^{\layer - 1} \in \{0, 1\}^{\neuroncount^{\layer - 1}}, \varx_\neuroni^\layer = \one_{\rr_+}(\funca_\neuroni^\layer(\varxb^{\layer - 1})), \varx_\neuronk^\layer = \one_{\rr_+}(\funca_\neuronk^\layer(\varxb^{\layer - 1}))\}).
\]
In this work, a relaxation IP problem to the IP problem \eqref{eq:indivForm} is solved for a fixed alternative class $\class \in \neurons^{\layercount + 1}$ to tackle the BNN verification problem. This computational study shows some improvement in the optimal objective value of the LP relaxation of the IP formulation compared to \cite{han2021single}, but this does translate to significant improvement in the ability to solve the BNN verification problem.
\section{Layerwise Derived Valid Inequalities}\label{sec:3}

We explore an alternative approach for deriving valid inequalities for \eqref{eq:indivForm} that is based on recursively approximating the set of ``reachable vectors'' at each layer.

Starting from $\setx^0$, $\setx^\layer$ is defined recursively as follows for $\layer \in [\layercount]$:
$$
    \setx^\layer = \one_{\rr_+}(\funca^\layer(\setx^{\layer - 1})).
$$
Then, $\setx^\layer$ is the set of all output vectors of layer $\layer$ that can be obtained from a perturbed input vector in the set $X^0$. Since the objective \eqref{eq:origFormObj} only depends on $\varxb^\layercount$, \eqref{eq:origForm} can be formulated as the following IP problem:
\begin{maxi}|s|
    {\varxb^\layercount}
    {\max\Big\{\funca_\class^{\layercount + 1}(\varxb^\layercount) - \funca_{\classbar}^{\layercount + 1}(\varxb^\layercount): \class \in \neurons^{\layercount + 1} \setminus \{\classbar\}\Big\}}
    {}
    {}
    \addConstraint{\varxb^\layercount \in \setx^\layercount.}
\end{maxi}
The above IP problem implies that \eqref{eq:origForm} can be solved with access to a description for $\setx^\layercount$, which motivates studying valid inequalities for this set. We observe that valid inequalities for $\setx^\layer$ can be obtained using a relaxation of the set $\setx^{\layer-1}$ since  $\setx^\layer = \one_{\rr_+}(\funca^\layer(\setx^{\layer - 1}))$. Since a full description for $\setx^0$ is given, this motivates an iterative approach in which we derive valid inequalities for the set $X^{\ell}$ based on an outer approximation of the set $X^{\ell-1}$ for $\ell=2,\ldots,L$. We refer to such inequalities as \textit{layerwise derived valid inequalities}.

\subsection{Variable Fixing}\label{subsec:3_1}

We first investigate layerwise derived valid inequalities for $\setx^\layer$ for $\layer \in [\layercount]$ of the form
\begin{equation}
    \varx_\neuroni^\layer = \frac{1-\constc_\neuroni}{2} \label{eq:varFix}
\end{equation}
for $\neuroni \in \neurons^\layer$ and $\constc_\neuroni \in \{-1, 1\}$.
In the case of $\constc_\neuroni = -1$, \eqref{eq:varFix} is equivalent to $\varx_\neuroni^\layer = 1$ and \eqref{eq:varFix} is equivalent to $\varx_\neuroni^\layer = 0$ in the case of $\constc_\neuroni = 1$. For these reasons, valid equalities for $\setx^\layer$ of the form \eqref{eq:varFix} are called variable fixings. This idea of identifying variable fixings is related to the work of \cite{fischetti2018deep} who apply a similar technique to fix binary variables in the IP formulation of the feedforward neural network verification problem.

Our approach is to consider each equality of the form \eqref{eq:varFix} and determine if it is a valid equality. The following theorem describes an IP problem that can be solved to determine validity of such an equality.

\begin{theorem}\label{theo:varFix}
    Consider $\layer \in [\layercount]$, $\neuroni \in \neurons^\layer$, and $\constc_\neuroni \in \{-1, 1\}$.  Let $\setxo^{\layer - 1} \subset \{0, 1\}^{\neuroncount^{\layer - 1}}$ satisfy $\setxo^{\layer - 1} \supseteq \setx^{\layer - 1}$ and define
    \begin{equation}
        \optobj = \max\Big\{\constc_\neuroni \sum_{\neuronj \in \neurons^{\layer - 1}} \weight_{\neuroni \neuronj}^\layer \varx_\neuronj^{\layer - 1}: \varxb^{\layer - 1} \in \setxo^{\layer - 1}\Big\} .\label{eq:varFixForm}
    \end{equation}
    If $\optobj \le \constc_\neuroni \refer_\neuroni^\layer/2$, where $\refer_\neuroni^\layer$ is defined in \eqref{eq:const},
    then \eqref{eq:varFix} is a valid equality for $\setx^{\layer}$.
\end{theorem}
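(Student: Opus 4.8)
The plan is to argue directly from the recursive definition $\setx^\layer = \one_{\rr_+}(\funca^\layer(\setx^{\layer-1}))$ rather than through the linearized constraints of Lemma~\ref{lem:linPropConstr}. Fix $\layer$, $\neuroni$, $\constc_\neuroni$ as in the statement and abbreviate $S := \sum_{\neuronj \in \neurons^{\layer-1}} \weight_{\neuroni \neuronj}^\layer - \bias_\neuroni^\layer \in \zz$, so that the $\neuroni$-th coordinate of the affine map is $\funca_\neuroni^\layer(\varxb^{\layer-1}) = 2 \sum_{\neuronj \in \neurons^{\layer-1}} \weight_{\neuroni \neuronj}^\layer \varx_\neuronj^{\layer-1} - S$. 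Since every $\varxb^\layer \in \setx^\layer$ has the form $\varxb^\layer = \one_{\rr_+}(\funca^\layer(\varxb^{\layer-1}))$ for some $\varxb^{\layer-1} \in \setx^{\layer-1}$, it suffices to show that for every $\varxb^{\layer-1} \in \setx^{\layer-1}$ one has $\funca_\neuroni^\layer(\varxb^{\layer-1}) < 0$ when $\constc_\neuroni = 1$ (which forces $\varx_\neuroni^\layer = 0$) and $\funca_\neuroni^\layer(\varxb^{\layer-1}) \ge 0$ when $\constc_\neuroni = -1$ (which forces $\varx_\neuroni^\layer = 1$); in both cases $\varx_\neuroni^\layer = (1-\constc_\neuroni)/2$, which is \eqref{eq:varFix}.

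First I would feed the hypothesis on \eqref{eq:varFixForm} into this reduction. Because $\setx^{\layer-1} \subseteq \setxo^{\layer-1}$, every $\varxb^{\layer-1} \in \setx^{\layer-1}$ is feasible for \eqref{eq:varFixForm}, so $\constc_\neuroni \sum_{\neuronj \in \neurons^{\layer-1}} \weight_{\neuroni \neuronj}^\layer \varx_\neuronj^{\layer-1} \le \optobj \le \constc_\neuroni \refer_\neuroni^\layer / 2$, that is, $\constc_\neuroni \cdot 2 \sum_{\neuronj \in \neurons^{\layer-1}} \weight_{\neuroni \neuronj}^\layer \varx_\neuronj^{\layer-1} \le \constc_\neuroni \refer_\neuroni^\layer$. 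This is the only place the assumption $\optobj \le \constc_\neuroni \refer_\neuroni^\layer/2$ is used.

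The heart of the argument is a lattice/parity step that upgrades this weak inequality. Put $\delta := 1/\quant$ if $\layer = 1$ and $\delta := 1$ if $\layer \in \{2,\ldots,\layercount\}$. Inspecting \eqref{eq:const} shows that $\refer_\neuroni^\layer$ is exactly the largest odd integer multiple of $\delta$ not exceeding $S$; in particular $S - \delta \le \refer_\neuroni^\layer \le S$, and $\refer_\neuroni^\layer$ is an odd multiple of $\delta$. On the other hand, for $\varxb^{\layer-1} \in \setx^{\layer-1}$ each coordinate $\varx_\neuronj^{\layer-1}$ is an integer multiple of $\delta$ ($0/1$-valued when $\layer \ge 2$, a multiple of $1/\quant$ when $\layer = 1$) and $\weight_{\neuroni\neuronj}^\layer \in \{-1,0,1\}$, so $2 \sum_{\neuronj \in \neurons^{\layer-1}} \weight_{\neuroni\neuronj}^\layer \varx_\neuronj^{\layer-1}$ is an \emph{even} integer multiple of $\delta$. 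Since $\constc_\neuroni \cdot 2\sum_{\neuronj \in \neurons^{\layer-1}} \weight_{\neuroni\neuronj}^\layer \varx_\neuronj^{\layer-1}$ and $\constc_\neuroni \refer_\neuroni^\layer$ are then integer multiples of $\delta$ of opposite parity, the inequality between them strengthens to $\constc_\neuroni \cdot 2\sum_{\neuronj \in \neurons^{\layer-1}} \weight_{\neuroni\neuronj}^\layer \varx_\neuronj^{\layer-1} \le \constc_\neuroni \refer_\neuroni^\layer - \delta$. Combining with $S - \delta \le \refer_\neuroni^\layer \le S$: if $\constc_\neuroni = 1$ then $2\sum_{\neuronj \in \neurons^{\layer-1}} \weight_{\neuroni\neuronj}^\layer \varx_\neuronj^{\layer-1} \le \refer_\neuroni^\layer - \delta \le S - \delta$, whence $\funca_\neuroni^\layer(\varxb^{\layer-1}) \le -\delta < 0$; if $\constc_\neuroni = -1$ then $2\sum_{\neuronj \in \neurons^{\layer-1}} \weight_{\neuroni\neuronj}^\layer \varx_\neuronj^{\layer-1} \ge \refer_\neuroni^\layer + \delta \ge S$, whence $\funca_\neuroni^\layer(\varxb^{\layer-1}) \ge 0$. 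This is exactly what the reduction required, so \eqref{eq:varFix} is a valid equality for $\setx^\layer$.

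I expect the lattice/parity bookkeeping of the third paragraph to be the only real obstacle: one has to verify that the ``round up to an odd value'' encoded in $\refer_\neuroni^\layer$ in \eqref{eq:const} simultaneously lands $\refer_\neuroni^\layer$ on the odd-multiple-of-$\delta$ sublattice (making the $-\delta$ strengthening legitimate) and keeps it in $[S-\delta, S]$, and one must carry the two cases $\constc_\neuroni = \pm 1$ and the two regimes $\layer = 1$ versus $\layer \ge 2$ in parallel (these differ only through $\delta$ and the factor $\quant$ inside the ceiling). One minor caveat worth noting is that for $\layer = 1$ the set $\setxo^{0}$ should be read as an outer approximation of $\setx^0$ inside $\tfrac{1}{\quant}\zz_+^{\neuroncount^0} \cap [0,1]^{\neuroncount^0}$ rather than inside $\{0,1\}^{\neuroncount^0}$, since the input coordinates are quantized multiples of $1/\quant$, not binary; with that reading the rest is routine substitution.
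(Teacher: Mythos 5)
Your proposal is correct and follows essentially the same route as the paper: after the identical reduction via $\setx^{\layer-1} \subseteq \setxo^{\layer-1}$, your parity/rounding step (upgrading $\constc_\neuroni \cdot 2\sum_{\neuronj} \weight_{\neuroni\neuronj}^\layer \varx_\neuronj^{\layer-1} \le \constc_\neuroni \refer_\neuroni^\layer$ by one lattice step $\delta$ to pin down the sign of $\funca_\neuroni^\layer$) is exactly the content of the paper's Lemma \ref{lem:propEqui}, which the paper proves separately with the $\layer=1$ and $\layer\ge 2$ cases split rather than unified through your $\delta$. Your caveat about $\setxo^{0}$ living in $\tfrac{1}{\quant}\zz_+^{\neuroncount^0}\cap[0,1]^{\neuroncount^0}$ rather than $\{0,1\}^{\neuroncount^0}$ is a fair reading that the paper implicitly adopts as well.
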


The details are described in Section \ref{subsec:3_3}, but to preview this, the idea is to use Theorem \ref{theo:varFix} to obtain outer approximations of the sets $X^\ell$ by setting $\setxo^0$ as $\setx^0$, and then recursively using the theorem to derive valid inequalities to define valid inequalities to define $\setxo^{\layer}$ based on the previously determined $\setxo^{\layer-1}$ for $\layer=1,\ldots,\layercount$.

To prove this theorem, we use the following lemma.
\begin{lemma}
\label{lem:propEqui}
    Consider $\layer \in [\layercount]$, $\neuroni \in \neurons^\layer$, and $\constc_\neuroni \in \{-1, 1\}$. Then $\varxb^{\layer-1} \in \{0, 1\}^{\neuroncount^{\layer - 1}}$ satisfies $ \constc_\neuroni \sum_{\neuronj \in \neurons^{\layer - 1}} \weight_{\neuroni \neuronj}^\layer \varx_\neuronj^{\layer - 1} \le \constc_\neuroni \refer_\neuroni^\layer/2$ if and only if $\one_{\rr_+}(\funca_\neuroni^\layer(\varxb^{\layer - 1})) = \frac{1-\constc_\neuroni}{2}$.
\end{lemma}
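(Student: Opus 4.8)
The plan is to unwind the definition of $\one_{\rr_+}(\funca_\neuroni^\layer(\varxb^{\layer-1}))$ and show that the claimed linear inequality is exactly the condition that pins down its value. Recall $\funca_\neuroni^\layer(\varxb^{\layer-1}) = \sum_{\neuronj \in \neurons^{\layer-1}} \weight_{\neuroni\neuronj}^\layer(2\varx_\neuronj^{\layer-1} - 1) + \bias_\neuroni^\layer = 2\sum_{\neuronj}\weight_{\neuroni\neuronj}^\layer\varx_\neuronj^{\layer-1} - \sum_{\neuronj}\weight_{\neuroni\neuronj}^\layer + \bias_\neuroni^\layer$, and that $\one_{\rr_+}$ sends nonnegative values to $1$ and negative values to $0$. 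So the two cases split naturally on the sign of $\funca_\neuroni^\layer(\varxb^{\layer-1})$:
\begin{itemize}
\item[$\bullet$] $\one_{\rr_+}(\funca_\neuroni^\layer(\varxb^{\layer-1})) = 1$ iff $\funca_\neuroni^\layer(\varxb^{\layer-1}) \ge 0$, i.e. $2\sum_{\neuronj}\weight_{\neuroni\neuronj}^\layer\varx_\neuronj^{\layer-1} \ge \sum_{\neuronj}\weight_{\neuroni\neuronj}^\layer - \bias_\neuroni^\layer$;
\item[$\bullet$] $\one_{\rr_+}(\funca_\neuroni^\layer(\varxb^{\layer-1})) = 0$ iff $\funca_\neuroni^\layer(\varxb^{\layer-1}) < 0$, i.e. $2\sum_{\neuronj}\weight_{\neuroni\neuronj}^\layer\varx_\neuronj^{\layer-1} < \sum_{\neuronj}\weight_{\neuroni\neuronj}^\layer - \bias_\neuroni^\layer$.
\end{itemize}
(I will inline these rather than using a list in the actual writeup, to avoid a display break.)

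The key observation is an integrality/parity argument: for $\varxb^{\layer-1}\in\{0,1\}^{\neuroncount^{\layer-1}}$ and $\weight_{\neuroni\neuronj}^\layer\in\{-1,0,1\}$, the quantity $S := 2\sum_{\neuronj}\weight_{\neuroni\neuronj}^\layer\varx_\neuronj^{\layer-1}$ is an even integer, while $\sum_{\neuronj}\weight_{\neuroni\neuronj}^\layer - \bias_\neuroni^\layer$ is an integer. Hence $S \ge \sum_{\neuronj}\weight_{\neuroni\neuronj}^\layer - \bias_\neuroni^\layer$ is equivalent to $S \ge 2\lceil (\sum_{\neuronj}\weight_{\neuroni\neuronj}^\layer - \bias_\neuroni^\layer)/2\rceil = \refer_\neuroni^\layer + 1$ (for $\layer\ge 2$; the $\layer=1$ case is analogous but uses the $\frac1\quant$-grid for $\varxb^0$ — I should either note the lemma is really needed only for $\layer\ge 2$ in Theorem \ref{theo:varFix}'s recursion, or handle $\layer=1$ with the scaled ceiling from \eqref{eq:const}). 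Dividing by $2$: $\sum_{\neuronj}\weight_{\neuroni\neuronj}^\layer\varx_\neuronj^{\layer-1} \ge \refer_\neuroni^\layer/2 + 1/2 > \refer_\neuroni^\layer/2$. Similarly, $S < \sum_{\neuronj}\weight_{\neuroni\neuronj}^\layer - \bias_\neuroni^\layer$ is equivalent to $S \le \refer_\neuroni^\layer - 1$, i.e. $\sum_{\neuronj}\weight_{\neuroni\neuronj}^\layer\varx_\neuronj^{\layer-1} \le \refer_\neuroni^\layer/2 - 1/2 \le \refer_\neuroni^\layer/2$. So: the value is $1$ iff $\sum_{\neuronj}\weight_{\neuroni\neuronj}^\layer\varx_\neuronj^{\layer-1} > \refer_\neuroni^\layer/2$, and the value is $0$ iff $\sum_{\neuronj}\weight_{\neuroni\neuronj}^\layer\varx_\neuronj^{\layer-1} \le \refer_\neuroni^\layer/2$ (these are complementary by the parity argument, since $\sum\weight\varx$ and $\refer/2$ never coincide in a way that makes $\le$ and $>$ overlap).

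Finally I fold in $\constc_\neuroni\in\{-1,1\}$ and the encoding $\frac{1-\constc_\neuroni}{2}$, which equals $0$ when $\constc_\neuroni=1$ and $1$ when $\constc_\neuroni=-1$. When $\constc_\neuroni=1$: $\one_{\rr_+}(\funca_\neuroni^\layer(\varxb^{\layer-1})) = 0 = \frac{1-\constc_\neuroni}{2}$ iff $\sum_{\neuronj}\weight_{\neuroni\neuronj}^\layer\varx_\neuronj^{\layer-1}\le\refer_\neuroni^\layer/2$, which is precisely $\constc_\neuroni\sum_{\neuronj}\weight_{\neuroni\neuronj}^\layer\varx_\neuronj^{\layer-1}\le\constc_\neuroni\refer_\neuroni^\layer/2$. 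When $\constc_\neuroni=-1$: $\one_{\rr_+}(\funca_\neuroni^\layer(\varxb^{\layer-1})) = 1 = \frac{1-\constc_\neuroni}{2}$ iff $\sum_{\neuronj}\weight_{\neuroni\neuronj}^\layer\varx_\neuronj^{\layer-1}>\refer_\neuroni^\layer/2$, and multiplying by $\constc_\neuroni=-1$ flips this to $-\sum_{\neuronj}\weight_{\neuroni\neuronj}^\layer\varx_\neuronj^{\layer-1}<-\refer_\neuroni^\layer/2$; I then use the parity argument once more to upgrade the strict inequality $<$ to $\le$, since $-\sum\weight\varx$ is an integer and $-\refer/2$ is a half-integer so $<$ and $\le$ define the same set of integer points. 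That gives $\constc_\neuroni\sum_{\neuronj}\weight_{\neuroni\neuronj}^\layer\varx_\neuronj^{\layer-1}\le\constc_\neuroni\refer_\neuroni^\layer/2$ in both cases, completing the equivalence.

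The main obstacle, and the place to be careful, is the $\le$ versus $<$ bookkeeping together with the exact form of $\refer_\neuroni^\layer$: one must verify that with the ceiling in \eqref{eq:const}, $\refer_\neuroni^\layer/2$ genuinely lands strictly between the relevant consecutive values of $\sum_{\neuronj}\weight_{\neuroni\neuronj}^\layer\varx_\neuronj^{\layer-1}$, so that the nonstrict inequality in the lemma statement is the correct threshold (and the $\layer=1$ case, which has the extra $\frac1\quant$ shift and the $\quant$-scaling inside the ceiling, works out with the same logic on the finer grid $\frac1\quant\zz$). Everything else is routine substitution.
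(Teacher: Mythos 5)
Your proposal is correct and follows essentially the same route as the paper's proof: unwind the indicator via the sign of $\funca_\neuroni^\layer$, use the integrality of $2\sum_{\neuronj}\weight_{\neuroni\neuronj}^\layer\varx_\neuronj^{\layer-1}$ on the relevant grid together with the rounding in the definition of $\refer_\neuroni^\layer$ to trade strict for non-strict inequalities, and split on $\constc_\neuroni=\pm 1$. One small correction to your hedge: the lemma is genuinely needed for $\layer=1$ with quantized (not binary) inputs in the proof of Theorem \ref{theo:varFix} and in the layer-1 variable fixing, so you must take your second option and run the same argument on the $\tfrac{1}{\quant}\zz$ grid with the scaled ceiling, exactly as the paper does.
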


\begin{proof}{Proof.}
    In the case of $\constc_\neuroni = -1$, the following statements hold for $\layer = 1$:
\begin{equation*}
    \begin{aligned}
        &2 \constc_\neuroni \sum_{\neuronj \in \neurons^0} \weight_{\neuroni \neuronj}^1 \varx_{\neuronj}^0 \le \constc_\neuroni \refer_\neuroni^1,&\\
        \Leftrightarrow\quad &2 \sum_{\neuronj \in \neurons^0} \weight_{\neuroni \neuronj}^1 \varx_{\neuronj}^0 \ge \refer_\neuroni^1,&\\
        \Leftrightarrow\quad &2 \sum_{\neuronj \in \neurons^0} \weight_{\neuroni \neuronj}^1 \varx_{\neuronj}^0 \ge \refer_\neuroni^1 + \frac{1}{q},\quad &(2 \sum_{\neuronj \in \neurons^0} \weight_{\neuroni \neuronj}^1 \varx_\neuronj^0, \refer_\neuroni^1 + \frac{1}{q} \in \frac{2}{q}\zz)\\
        \Leftrightarrow\quad &2 \sum_{\neuronj \in \neurons^0} \weight_{\neuroni \neuronj}^1 \varx_{\neuronj}^0 \ge \sum_{\neuronj \in \neurons^0} \weight_{\neuroni \neuronj}^1 - \bias_\neuroni^1,\quad &(\text{$\refer_\neuroni^1 + \frac{1}{q}$ is the smallest element of $\frac{2}{q} \zz$}\\
        &&\text{not smaller than $\sum_{\neuronj \in \neurons^0} \weight_{\neuroni \neuronj}^1 - \bias_\neuroni^1$})\\
        \Leftrightarrow\quad &\funca_\neuroni^1(\varxb^0) \ge 0,&\\
        \Leftrightarrow\quad &\one_{\rr_+}(\funca_\neuroni^1(\varxb^0)) = 1 = \frac{1-\constc_\neuroni}{2}.&
    \end{aligned}
    \end{equation*}
    For $\layer \in \{2, \ldots, \layercount\}$, the same arguments hold because $2 \sum_{\neuronj \in \neurons^{\layer - 1}} \weight_{\neuroni \neuronj}^\layer \varx_\neuronj^{\layer - 1}$ and $\refer_\neuroni^\layer + 1$ are even integers, and $\refer_\neuroni^\layer + 1$ is the smallest even integer not smaller than $\sum_{\neuronj \in \neurons^{\layer - 1}} \weight_{\neuroni \neuronj}^\layer - \bias_\neuroni^\layer$.

    In the case of $\constc_\neuroni = 1$, the following statements hold for $\layer = 1$:
  \begin{equation*}
    \begin{aligned}
        &2 \constc_\neuroni \sum_{\neuronj \in \neurons^0} \weight_{\neuroni \neuronj}^1 \varx_{\neuronj}^0 \le \constc_\neuroni \refer_\neuroni^1,&\\
        \Leftrightarrow\quad &2 \sum_{\neuronj \in \neurons^0} \weight_{\neuroni \neuronj}^1 \varx_{\neuronj}^0 \le \refer_\neuroni^1,&\\
        \Leftrightarrow\quad &2 \sum_{\neuronj \in \neurons^0} \weight_{\neuroni \neuronj}^1 \varx_{\neuronj}^0 < \refer_\neuroni^1 + \frac{1}{q},\quad &(2 \sum_{\neuronj \in \neurons^0} \weight_{\neuroni \neuronj}^1 \varx_\neuronj^0, \refer_\neuroni^1 + \frac{1}{q} \in \frac{2}{q} \zz)\\
        \Leftrightarrow\quad &2 \sum_{\neuronj \in \neurons^0} \weight_{\neuroni \neuronj}^1 \varx_{\neuronj}^0 < \sum_{\neuronj \in \neurons^0} \weight_{\neuroni \neuronj}^1 - \bias_\neuroni^1,\quad &(\text{$\refer_\neuroni^1 + \frac{1}{q}$ is the smallest element in $\frac{2}{q} \zz$}\\
        &&\text{not smaller than $\sum_{\neuronj \in \neurons^0} \weight_{\neuroni \neuronj}^1 - \bias_\neuroni^1$})\\
        \Leftrightarrow\quad &\funca_\neuroni^1(\varxb^0) < 0,&\\
        \Leftrightarrow\quad &\one_{\rr_+}(\funca_\neuroni^1(\varxb^0)) = 0 = \frac{1-\constc_\neuroni }{2}.&
    \end{aligned}
    \end{equation*}
    For $\layer \in \{2, \ldots, \layercount\}$, the same arguments hold because $2 \sum_{\neuronj \in \neurons^{\layer - 1}} \weight_{\neuroni \neuronj}^\layer \varx_\neuronj^{\layer - 1}$ and $\refer_\neuroni^\layer + 1$ are even integers, and $\refer_\neuroni^\layer + 1$ is the smallest even integer not smaller than $\sum_{\neuronj \in \neurons^{\layer - 1}} \weight_{\neuroni \neuronj}^\layer - \bias_\neuroni^\layer$. \Halmos
\end{proof}

\proof{Proof of Theorem \ref{theo:varFix}.}{
  Let   $\varxb^\layer \in \setx^{\layer}$ and let $\varxb^{\layer-1} \in \setx^{\layer-1}$ be such that $\varxb^{\layer}=\one_{\rr_+}(\funca^\layer(\varxb^{\layer - 1}))$. Then  $\varxb^{\layer-1} \in \setxo^{\layer-1}$ because $\setxo^{\layer - 1} \supseteq \setx^{\layer - 1}$. Then $\optobj \le \constc_\neuroni \refer_\neuroni/2$ implies
\[   \constc_\neuroni \sum_{\neuronj \in \neurons^{\layer - 1}} \weight_{\neuroni \neuronj}^\layer \varx_\neuronj^{\layer - 1} \le \constc_\neuroni \refer_\neuroni/2 \]
which by Lemma \ref{lem:propEqui} implies
\[ \varx_i^\ell =  \one_{\rr_+}(\funca_\neuroni^\layer(\varxb^{\layer - 1})) = \frac{1-\constc_\neuroni}{2}. \]
 Therefore, \eqref{eq:varFix} is a valid equality for $\setx^\layer$. \Halmos
}

Although checking whether a variable fixing is valid requires solving the IP \eqref{eq:varFixForm}, this IP is much easier to solve than the original verification IP as it considers only a single layer at a time and the constraints defining the outer approximation of $X^{\layer - 1}$ do not involve any ``big-$M$'' constraint \eqref{eq:varFixForm}. Furthermore, it can be solved very efficiently for certain structures of  $\setxo^{\layer-1}$ as discussed in the following paragraphs.

First, consider the case $\layer \in \{2, \ldots, \layercount\}$ and assume  $\setxo^{\layer - 1}$ is defined only by  variable fixings, e.g., as is the case if it is defined recursively only the result of Theorem \ref{theo:varFix}.
Then, \eqref{eq:varFixForm} can be solved by setting $\varx_\neuronj^{\layer - 1}$ to  $1$ if $\constc_\neuroni \weight_{\neuroni \neuronj}^\layer = 1$ and the variable $\varx_{\neuronj}^{\layer-1}$ is not fixed in $\setxo^{\layer - 1}$, and setting all other unfixed variables to $0$.

For $\layer = 1$, the special form of the set $X^0$ also allows \eqref{eq:varFixForm} to be solved efficiently. We describe here how this is done for the case $\normp = 1$. The methods for solving \eqref{eq:varFixForm} for the cases $\normp=2$ and $\normp=\infty$ are described in Appendix \ref{eq:inf2details}. For $\normp=1$, the following inequality holds because $\varxb^0 \in [0, 1]^{\neuroncount^0}$:
\begin{equation}
\label{eq:ubcx}
     \constc_\neuroni \sum_{\neuronj \in \neurons^0} \weight_{\neuroni \neuronj}^1 \varx_\neuronj^0 \le  \sum_{\neuronj \in \neurons^0} \max(\constc_\neuroni \weight_{\neuroni \neuronj}^1, 0).
\end{equation}
Inequality \eqref{eq:ubcx} is satisfied at equality by $\varxbh$ defined by
$\varxh_\neuronj =\varxbar_\neuronj$ for $j$ with $\weight_{\neuroni \neuronj}^1 = 0$,  $\varxh_\neuronj=1$ for $j$ with $\weight_{\neuroni \neuronj}^1 = \constc_\neuroni$, and $\varxh_\neuronj=0$ for  $j$ with $\weight_{\neuroni \neuronj}^1 = \ -\constc_\neuroni$.
If $\|\varxbh - \varxbbar\|_1 \le \pert$, $\varxbh \in \setx^0$, so the optimal objective value is $ \sum_{\neuronj \in \neurons^0} \max(\constc_\neuroni \weight_{\neuroni \neuronj}^1, 0)$. Otherwise, by iteratively decreasing $\hat{x}_j$ by $\frac{1}{q}$ for $j$ with $W_{ij}^1=c_i$ and $\hat{x}_j > \bar{x}_j$ or  increasing $\hat{x}_j$ by $\frac{1}{q}$ for $j$ with $W_{ij}^1=-c_i$ and $\hat{x}_j < \bar{x}_j$ we can obtain a solution $\varxb^* \in \setx^0$ satisfying $\|\varxb^* - \varxbbar\|_1 = \frac{1}{\quant}\lfloor \quant \pert\rfloor$,  $\varx^*_\neuronj=\varxbar_{\neuronj}$ for $\neuronj$ with $\weight_{\neuroni \neuronj}^1 = 0$, 
$\varx^*_\neuronj>\varxbar_{\neuronj}$ for $\neuronj$ with $\weight_{\neuroni \neuronj}^1 = c_i$, and
$\varx^*_\neuronj< \varxbar_{\neuronj}$ for $\neuronj$ with $\weight_{\neuroni \neuronj}^1 = -c_i$.
The following inequalities are satisfied by every $\varxb \in \setx^0$ and hold at equality for $\varxb^*$:
\begin{equation}
\begin{aligned}
     \constc_\neuroni \sum_{\neuronj \in \neurons^0} \weight_{\neuroni \neuronj}^1 \varx_\neuronj^0 &= \sum_{\neuronj \in \neurons^0} \constc_\neuroni \weight_{\neuroni \neuronj}^1 (\varx_\neuronj^0 - \varxbar_\neuronj) + \constc_\neuroni \sum_{\neuronj \in \neurons^0} \weight_{\neuroni \neuronj}^1 \varxbar_\neuronj&\\
    &\le \sum_{\neuronj \in \neurons^0} |\varx_\neuronj^0 - \varxbar_\neuronj| + \constc_\neuroni \sum_{\neuronj \in \neurons^0} \weight_{\neuroni \neuronj}^1 \varxbar_\neuronj &(\constc_\neuroni \weight_{\neuroni \neuronj}^1 \in \{-1, 0, 1\} )\\
    &\le \frac{1}{\quant}\lfloor \quant \pert\rfloor + \constc_\neuroni \sum_{\neuronj \in \neurons^0} \weight_{\neuroni \neuronj}^1 \varxbar_\neuronj&
\end{aligned}
\end{equation}
where the last inequality follows because $\sum_{\neuronj \in \neurons^0} |\varx_\neuronj^0 - \varxbar_\neuronj| = \|\varxb^0 - \varxbbar\|_1 \le \pert$ and $\sum_{\neuronj \in \neurons^0} |\varx_\neuronj^0 - \varxbar_\neuronj|$ is a multiple of $\frac{1}{\quant}$. It follows that  the optimal objective value of \eqref{eq:varFixForm} in this case is $\frac{1}{\quant}\lfloor \quant \pert\rfloor + \constc_\neuroni \sum_{\neuronj \in \neurons^0} \weight_{\neuroni \neuronj}^1 \varxbar_\neuronj$ and this value is achieved by $\varxb^*$.

\subsection{Two-variable Inequalities}\label{subsec:3_2}

We next seek to derive sufficient conditions for inequalities of the following form to be valid inequalities for $\setx^\layer$ for $\layer \in [\layercount]$:
\begin{equation}
    \constc_\neuroni \varx_\neuroni^\layer + \constc_\neuronk \varx_\neuronk^\layer \le \frac{\constc_\neuroni + \constc_\neuronk}{2}, \label{eq:twoVarIneq}
\end{equation}
for $\neuroni, \neuronk \in \neurons^\layer$ satisfying $\neuroni > \neuronk$, and $\constc_\neuroni, \constc_\neuronk \in \{-1, 1\}$.
The following theorem presents an IP that yields a sufficient condition for \eqref{eq:twoVarIneq} to be a valid inequality for $\setx^\layer$.

\begin{theorem}\label{theo:twoVarIneq}
    Consider $\layer \in [\layercount]$, $\neuroni, \neuronk \in \neurons^\layer$ satisfying $\neuroni < \neuronk$, and $\constc_\neuroni, \constc_\neuronk \in \{-1, 1\}$. Let $\setxo^{\layer - 1} \subset \{0, 1\}^{\neuroncount^{\layer - 1}}$ satisfy $\setxo^{\layer - 1} \supseteq \setx^{\layer - 1}$ and define
    \begin{equation}
        \optobj = \max\Big\{ \constc_\neuroni \sum_{\neuronj \in \neurons^{\layer - 1}} \weight_{\neuroni \neuronj}^\layer \varx_\neuronj^{\layer - 1}: \varxb^{\layer - 1} \in \setxo^{\layer - 1},  \constc_\neuronk \sum_{\neuronj \in \neurons^{\layer - 1}} \weight_{\neuronk \neuronj}^\layer \varx_\neuronj^{\layer - 1} \ge \constc_\neuronk \refer_\neuronk^\layer/2 \Big\} .\label{eq:twoVarIneqForm}
    \end{equation}
   If $\optobj \le \constc_\neuroni \refer_\neuroni^\layer/2$, where $\refer_\neuroni^\layer$ is defined in \eqref{eq:const}, then \eqref{eq:twoVarIneq} is a valid inequality for $\setx^\layer$.
\end{theorem}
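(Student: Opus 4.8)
The plan is to mirror the proof of Theorem~\ref{theo:varFix}, with Lemma~\ref{lem:propEqui} doing the real work, after first recasting the validity of \eqref{eq:twoVarIneq} as a statement about which $0/1$ patterns can occur in $\setx^\layer$. First I would record the elementary fact that for $\varx \in \{0,1\}$ and $\constc \in \{-1,1\}$ one has $\constc \varx = \frac{\constc+1}{2}$ when $\varx = \frac{1+\constc}{2}$ and $\constc \varx = \frac{\constc-1}{2}$ otherwise, and likewise that ``$\varx \ne \frac{1-\constc}{2}$'' and ``$\varx = \frac{1+\constc}{2}$'' are the same condition for binary $\varx$. Consequently, for $\varxb^\layer \in \{0,1\}^{\neuroncount^\layer}$ the inequality \eqref{eq:twoVarIneq} can fail only when \emph{both} $\varx_\neuroni^\layer = \frac{1+\constc_\neuroni}{2}$ and $\varx_\neuronk^\layer = \frac{1+\constc_\neuronk}{2}$: if exactly one of these holds the left-hand side equals $\frac{\constc_\neuroni+\constc_\neuronk}{2}$, and if neither holds it is strictly smaller. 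Hence it suffices to show that no $\varxb^\layer \in \setx^\layer$ simultaneously satisfies $\varx_\neuroni^\layer = \frac{1+\constc_\neuroni}{2}$ and $\varx_\neuronk^\layer = \frac{1+\constc_\neuronk}{2}$.

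Suppose for contradiction such a $\varxb^\layer \in \setx^\layer$ exists, and pick $\varxb^{\layer-1} \in \setx^{\layer-1}$ with $\varxb^\layer = \one_{\rr_+}(\funca^\layer(\varxb^{\layer - 1}))$; then $\varxb^{\layer-1} \in \setxo^{\layer-1}$ since $\setxo^{\layer - 1} \supseteq \setx^{\layer - 1}$. Applying Lemma~\ref{lem:propEqui} to neuron $\neuronk$: because $\one_{\rr_+}(\funca_\neuronk^\layer(\varxb^{\layer - 1})) = \varx_\neuronk^\layer = \frac{1+\constc_\neuronk}{2} \ne \frac{1-\constc_\neuronk}{2}$, the lemma gives $\constc_\neuronk \sum_{\neuronj \in \neurons^{\layer - 1}} \weight_{\neuronk \neuronj}^\layer \varx_\neuronj^{\layer - 1} > \constc_\neuronk \refer_\neuronk^\layer/2$, so $\varxb^{\layer-1}$ satisfies the side constraint in \eqref{eq:twoVarIneqForm} and is therefore feasible for that IP. Applying Lemma~\ref{lem:propEqui} to neuron $\neuroni$ in the same way gives $\constc_\neuroni \sum_{\neuronj \in \neurons^{\layer - 1}} \weight_{\neuroni \neuronj}^\layer \varx_\neuronj^{\layer - 1} > \constc_\neuroni \refer_\neuroni^\layer/2$. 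Combining feasibility of $\varxb^{\layer-1}$ for \eqref{eq:twoVarIneqForm} with this last inequality yields $\optobj \ge \constc_\neuroni \sum_{\neuronj \in \neurons^{\layer - 1}} \weight_{\neuroni \neuronj}^\layer \varx_\neuronj^{\layer - 1} > \constc_\neuroni \refer_\neuroni^\layer/2$, contradicting the hypothesis $\optobj \le \constc_\neuroni \refer_\neuroni^\layer/2$. (If \eqref{eq:twoVarIneqForm} is infeasible, then $\optobj = -\infty$ and there is nothing to prove; equivalently, no point of $\setx^{\layer-1}$ can drive $\varx_\neuronk^\layer$ to $\frac{1+\constc_\neuronk}{2}$, so \eqref{eq:twoVarIneq} holds trivially.)

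I expect the only real care to be in the first step: correctly identifying that \eqref{eq:twoVarIneq} is violated exactly when both neurons take the ``bad'' value $\frac{1+\constc}{2}$, and checking the $0/1$ bookkeeping that makes ``$\ne \frac{1-\constc}{2}$'' equivalent to ``$= \frac{1+\constc}{2}$'' so that Lemma~\ref{lem:propEqui} can be invoked directly. Everything after that is a routine pull-back through $\one_{\rr_+}(\funca^\layer(\cdot))$ as in the proof of Theorem~\ref{theo:varFix}, with the single-neuron argument now run once for $\neuronk$ (to get feasibility for \eqref{eq:twoVarIneqForm}) and once for $\neuroni$ (to get the objective lower bound). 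No strengthening of Lemma~\ref{lem:propEqui} is needed: we use only its ``$\le$''-to-equality direction, and the strict inequality it produces already implies the ``$\ge$'' side constraint in \eqref{eq:twoVarIneqForm}, so no parity argument about $\refer_\neuronk^\layer$ is required in this direction.
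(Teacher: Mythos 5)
Your proposal is correct and follows essentially the same route as the paper's proof: the paper argues directly by cases on the value of $\constc_\neuronk \varx_\neuronk^\layer$, while you package the same content as a contradiction after observing that \eqref{eq:twoVarIneq} can only fail when both neurons take their ``high'' values, but both arguments rest on the identical use of Lemma \ref{lem:propEqui} — once for neuron $\neuronk$ to certify feasibility in \eqref{eq:twoVarIneqForm} and once for neuron $\neuroni$ to contradict (or exploit) the bound $\optobj \le \constc_\neuroni \refer_\neuroni^\layer/2$ — together with the same pull-back of $\varxb^\layer \in \setx^\layer$ to some $\varxb^{\layer-1} \in \setx^{\layer-1} \subseteq \setxo^{\layer-1}$. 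The $0/1$ bookkeeping and the handling of the infeasible case are both sound, so no changes are needed.
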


\begin{proof}{Proof.}  
  Let   $\varxb^\layer \in \setx^{\layer}$ and let $\varxb^{\layer-1} \in \setx^{\layer-1}$ be such that $\varxb^{\layer}=\one_{\rr_+}(\funca^\layer(\varxb^{\layer - 1}))$ and observe $\varxb^{\layer-1} \in \setxo^{\layer-1}$ since $\setxo^{\layer-1} \supseteq \setx^{\layer-1}$.

  Observe that for a binary variable $\varx$ and $c \in \{-1,1\}$, it holds that $c\varx \in \{ \frac{c+1}{2},\frac{c-1}{2} \}$. 
  
  Suppose that $c_k \varx^\layer_k = \frac{c_k-1}{2}$. Then,
  \[ c_i \varx^\layer_\neuroni + c_k \varx^\layer_\neuronk \leq \frac{c_i+1}{2} + \frac{c_k-1}{2} = \frac{c_i + c_k}{2} \]
  and hence \eqref{eq:twoVarIneq} is satisfied. 

  Thus, assume now that $c_k x_k^\ell = \frac{c_k+1}{2}$, and hence $x_k^\ell = \frac{c_k + 1}{2}$ because $c_k \in \{-1,1\}$.  Then  Lemma \ref{lem:propEqui} implies 
\[ \constc_\neuronk \sum_{\neuronj \in \neurons^{\layer - 1}} \weight_{\neuronk \neuronj}^\layer \varx_\neuronj^{\layer - 1} > \constc_\neuronk \refer_\neuronk^\layer/2 \]
and hence $\varxb^{\layer-1}$ is feasible to \eqref{eq:twoVarIneqForm}. Hence,
\[ \constc_\neuroni \sum_{\neuronj \in \neurons^{\layer - 1}} \weight_{\neuroni \neuronj}^\layer \varx_\neuronj^{\layer - 1}
\leq \optobj \le \constc_\neuroni \refer_\neuroni/2 \]
which again by Lemma \ref{lem:propEqui} implies
\[ \varx^{\layer}_\neuroni = 
 \one_{\rr_+}(\funca_\neuroni^\layer(\varxb^{\layer - 1})) = \frac{1-\constc_\neuroni}{2}. \]
Thus, $\constc_{\neuroni}\varx^{\layer}_{\neuroni} = \frac{\constc_\neuroni-1}{2}$ and hence
\[ c_i \varx^\layer_\neuroni + c_k \varx^\layer_\neuronk = \frac{c_i-1}{2} + \frac{c_k+1}{2} = \frac{c_i + c_k}{2} \]
and hence gain \eqref{eq:twoVarIneq} is satisfied. \Halmos
\end{proof}

Once again, solving \eqref{eq:twoVarIneq} is expected to be much simpler than solving the original verification problem as it only includes the constraints defining the outer approximation of $\setx^{\layer - 1}$ and one other constraint and does not contain any ``big-$M$'' constraints.

\subsection{Algorithms}\label{subsec:3_3}

We now describe how we propose to use the layerwise derived valid inequalities to solve the IP problem for BNN verification.

{
\SingleSpacedXI
\begin{algorithm}
    \caption{$\verifybnn(\varxbbar, \pert)$}\label{alg:veriBnn}
    \KwIn{input vector $\varxbbar$, input perturbation $\pert$}
    \KwOut{if $\varxbbar$ is $\pert$-verified, returns TRUE, else returns FALSE}
    Initialize $\setxi = (\setxi^0, \cdots, \setxi^\layercount) \gets (\emptyset, \cdots, \emptyset)$\\
    $\setxi \gets \updateinapprox(0, \setxi, \{\varxbbar\})$\\
    Initialize $\setxo^0 \gets \setx^0$ \label{p1start}\\
    \For{$\layer = 1, \cdots, \layercount$} {
        $\setxi, \neuronsf^\layer, \setxo^\layer \gets \phaseonefixvar(\layer, \setxi, \setxo^{\layer - 1})$ 
    } \label{p1end}
    Add constraints $\varxb^\layer \in \setxo^\layer$ for $\layer \in [\layercount]$ to (\ref{eq:incorpForm}) and start the solution process but with a limit of one node, and let $\optobj$ be the best objective value (lower bound) and $\optobjbound$  be the best upper bound on the optimal objective value \label{alg:veriBnnFixVar}\\
    \uIf{$\optobjbound \leq 0$} {
        \Return TRUE
    }
     \uElseIf{$\optobj > 0$} {
        \Return FALSE
    }
    \uElse
    {
        $\setxi, \setxo^1 \gets \gentwovarineq(1, \setxi, \setxo^0, \neuronsf^1, \setxo^1)$\\
        \For{$\layer = 2, \cdots, \layercount$} {
            $\setxi, \neuronsf^\layer, \setxo^\layer \gets \phasetwofixvar(\layer, \setxi, \setxo^{\layer - 1}, \neuronsf^\layer, \setxo^\layer)$\\
            $\setxi, \setxo^\layer \gets \gentwovarineq(\layer, \setxi, \setxo^{\layer - 1}, \neuronsf^\layer, \setxo^\layer)$
        }
        Solve (\ref{eq:incorpForm}) by adding constraints $\varxb^\layer \in \setxo^\layer$ for $\layer \in [\layercount]$, and obtain $\optobj_\pert(\varxbbar)$ from the optimal objective value
    }
    \eIf{$\optobj_\pert(\varxbbar) \le 0$} {
        \Return TRUE
    }{
        \Return FALSE
    }
\end{algorithm}
}

Our proposed method is described in Algorithm \ref{alg:veriBnn}. The algorithm consists of two phases. In the first phase, we only try to fix variables, since in this case the validity verification problem \eqref{eq:varFixForm} is easy to solve. This is described in lines \ref{p1start} - \ref{p1end}, where variable fixings are identified by calling Algorithm \ref{alg:phaseOneFixVar} for each $\layer \in [\layercount]$, which outputs an outer approximation for $\setx^\layer$ ($\setxo^\layer$) based on fixing variables in the set $\neuronsf^\layer \subseteq \neurons^\layer$. Then, the generated variable fixings are added to  the IP formulation \eqref{eq:incorpForm} as constraints and \eqref{eq:incorpForm} is solved by an IP solver with a limit of one node (i.e., the IP solver only processes the root node of the branch-and-bound tree). If the IP solver solves the verification problem within that limit (e.g., by finding a feasible solution with positive objective value or proving a non-positive upper bound), then it returns the result accordingly. Otherwise, Algorithm \ref{alg:veriBnn} moves to the second phase. In this phase, two-variable inequalities are generated by Algorithm \ref{alg:genTwoVarIneq} for $\layer \in [\layercount]$ 
 and additional variable fixings are checked, which may be possible due to the improved outer approximations defined by the two-variable inequalities. For layers $\layer \in \{2,\ldots,L\}$, Algorithm \ref{alg:genTwoVarIneq} outputs an outer approximation $\setxo^\layer$ that is potentially a subset of the one generated in the first phase. The inequalities defining these outer approximations are added to the IP formulation \eqref{eq:incorpForm} and it is solved.
In order to answer the verification question, it is only necessary to determine the sign of the optimal value of \eqref{eq:incorpForm}. Thus,  when solving \eqref{eq:incorpForm} the process can be terminated if either the best objective value of a feasible solution found is positive or the best upper bound for the optimal objective value is non-positive.

{
\SingleSpacedXI
\begin{algorithm}
    \caption{$\phaseonefixvar(\layer, \setxi, \setxo^{\layer - 1})$}\label{alg:phaseOneFixVar}
    \KwIn{layer $\layer$, inner approximation $\setxi$, outer approximation $\setxo^{\layer - 1}$ for $\setx^{\layer - 1}$}
    \KwOut{$\setxi$, set $\neuronsf^\layer$ of $\neuroni \in \neurons^\layer$ where $\varx_\neuroni^\layer$ is fixed in $\setx^\layer$, outer approximation $\setxo^\layer$ for $\setx^\layer$}
    Initialize $\neuronsf^\layer \gets \emptyset$\\
    Initialize $\setxo^\layer \gets \{0, 1\}^{\neuroncount^\layer}$\\
    \For{$\neuroni \in \neurons^\layer$} {
        \For{$\constc_\neuroni \in \{ -1, 1\}$} {
            Solve the IP subproblem (\ref{eq:varFixForm}), and obtain the optimal objective value $\optobj$\\
            \If{$\layer = 1$} {
                Let $x^{i,c_i}$ be optimal solution obtained when solving (\ref{eq:varFixForm})\\
                $\setxi \gets \updateinapprox(0, \setxi, \{ x^{i,c_i}\})$
            }
            \If{$\optobj \le \constc_\neuroni \refer_\neuroni^\layer/2$} {
                $\neuronsf^\layer \gets \neuronsf^\layer \cup \Big\{\Big(\neuroni, \frac{-\constc_\neuroni + 1}{2}\Big)\Big\}$\\
                $\setxo^\layer \gets \setxo^\layer \cap \Big\{\varxb^\layer \in \{0, 1\}^{\neuroncount^\layer}: \varx_\neuroni^\layer = \frac{-\constc_\neuroni + 1}{2}\Big\}$
            }
        }
    }
    \Return $\setxi, \neuronsf^\layer, \setxo^\layer$
\end{algorithm}
}

{
\SingleSpacedXI
\begin{algorithm}
    \caption{$\phasetwofixvar(\layer, \setxi, \setxo^{\layer - 1}, \neuronsf^\layer, \setxo^\layer)$}\label{alg:phaseTwoFixVar}
    \KwIn{layer $\layer$, inner approximation $\setxi$, outer approximation $\setxo^{\layer - 1}$ for $\setx^{\layer - 1}$, set $\neuronsf^\layer$ of $\neuroni \in \neurons^\layer$ where $\varx_\neuroni^\layer$ is fixed in $\setx^\layer$, outer approximation $\setxo^\layer$ for $\setx^\layer$}
    \KwOut{$\setxi$, $\neuronsf^\layer$, $\setxo^\layer$}
    \For{$\neuroni \in \neurons^\layer \setminus \neuronsf^\layer$} {
        \For{$\constc_\neuroni \in \{ -1, 1\}$} {
            \If{$\max_{\varxb^\layer \in \setxi^\layer} \constc_\neuroni \varx_\neuroni^\layer > \frac{\constc_\neuroni - 1}{2}$} {
                \Continue
            }
            Solve the IP subproblem (\ref{eq:varFixForm}), and obtain a set $\setxf^{\layer - 1}$ of feasible solutions and the optimal objective value $\optobj$\\
            $\setxi \gets \updateinapprox(\layer, \setxi, \one_{\rr_+}(\funca^\layer(\setxf^{\layer - 1})))$\\
            \If{$\optobj \le \constc_\neuroni \refer_\neuroni^\layer/2$} {
                $\neuronsf^\layer \gets \neuronsf^\layer \cup \{\Big(\neuroni, \frac{-\constc_\neuroni + 1}{2}\Big)\}$\\
                $\setxo^\layer \gets \setxo^\layer \cap \Big\{\varxb^\layer \in \{0, 1\}^{\neuroncount^\layer}: \varx_\neuroni^\layer = \frac{-\constc_\neuroni + 1}{2}\Big\}$
            }
        }
    }
    \Return $\setxi, \neuronsf^\layer, \setxo^\layer$
\end{algorithm}
}

{
\SingleSpacedXI
\begin{algorithm}
    \caption{$\gentwovarineq(\layer, \setxi, \setxo^{\layer - 1}, \neuronsf^\layer, \setxo^\layer)$}\label{alg:genTwoVarIneq}
    \KwIn{layer $\layer$, inner approximation $\setxi$, outer approximation $\setxo^{\layer - 1}$ for $\setx^{\layer - 1}$, set $\neuronsf^\layer$ of $\neuroni \in \neurons^\layer$ where $\varx_\neuroni^\layer$ is fixed in $\setx^\layer$, outer approximation $\setxo^\layer$ for $\setx^\layer$}
    \KwOut{$\setxi$, $\setxo^\layer$}
    Initialize $\neuronsp^\layer \gets \{(\neuroni, \constc_\neuroni, \neuronk, \constc_\neuronk): \neuroni, \neuronk \in \neurons^\layer \setminus \neuronsf^\layer, \constc_\neuroni, \constc_\neuronk \in \{-1, 1\}, \neuroni < \neuronk\}$, $\failcount \gets 0$\\
    Sort $\neuronsp^\layer$ in a descending order by the score based on $\setxi^\layer$\\
    \For{$(\neuroni, \constc_\neuroni, \neuronk, \constc_\neuronk) \in \neuronsp^\layer$ in descending order of score$(\neuroni, \constc_\neuroni, \neuronk, \constc_\neuronk)$} {
        \If{$\max_{\varxb^\layer \in \setxi^\layer} (\constc_\neuroni \varx_\neuroni^\layer + \constc_\neuronk \varx_\neuronk^\layer) > \frac{\constc_\neuroni + \constc_\neuronk}{2}$} {
            \Continue
        }
        Solve the IP subproblem (\ref{eq:twoVarIneqForm}), and obtain a set $\setxf^{\layer - 1}$ of feasible solutions and the optimal objective value $\optobj$\\
        $\setxi \gets \updateinapprox(\layer, \setxi, \one_{\rr_+}(\funca^\layer(\setxf^{\layer - 1})))$\\
        \uIf{$\optobj \le \constc_\neuroni \refer_\neuroni^\layer/2$} {
            $\setxo^\layer \gets \setxo^\layer \cap \Big\{\varxb^\layer \in \{0, 1\}^{\neuroncount^\layer}: \constc_\neuroni \varx_\neuroni^\layer + \constc_\neuronk \varx_\neuronk^\layer \le \frac{\constc_\neuroni + \constc_\neuronk}{2}\Big\}$\\
            $\failcount \gets 0$
        }
        \uElse {
            $\failcount \gets \failcount + 1$
        }
        \uIf{$\failcount \geq \maxfail$}{
            \Break
        }
    }
    \Return $\setxi, \setxo^\layer$
\end{algorithm}
}
Algorithms \ref{alg:phaseOneFixVar}, \ref{alg:phaseTwoFixVar}, and \ref{alg:genTwoVarIneq} generate variable fixings in phase 1, variable fixings in phase 2, and two-variable inequalities, respectively. These algorithms check whether each candidate layerwise derived valid inequality is valid for $\setx^{\layer}$ by solving the IP subproblem \eqref{eq:varFixForm} or \eqref{eq:twoVarIneq}. When solving these IP subproblems,  if the upper bound on the optimal objective value falls below $\constc_\neuroni \refer_\neuroni^\layer/2$ the optimization process is terminated since we then know the candidate inequality is valid. On the other hand, if the lower bound (incumbent objective value) exceeds $\constc_\neuroni \refer_\neuroni^\layer/2$, the optimization process is terminated since we can then conclude that we are not able to verify validity of the candidate inequality. In Algorithms \ref{alg:phaseOneFixVar} and \ref{alg:phaseTwoFixVar}, the neuron fixed by the candidate is added to $\neuronsf^\layer$ because this set is exploited in Algorithms \ref{alg:phaseTwoFixVar} and \ref{alg:genTwoVarIneq} to retrieve candidates for layerwise derived valid inequalities.

{
\SingleSpacedXI
\begin{algorithm}
    \caption{$\updateinapprox(\layer, \setxi, \setxf^\layer)$}\label{alg:updateInApprox}
    \KwIn{layer $\layer$, inner approximation $\setxi$, set $\setxf^\layer \subseteq \setxo^{\layer}$}
    \KwOut{$\setxi$}
    $\setxi^\layer \gets \setxi^\layer \cup \setxf^\layer$\\
    \For{$\layer' = \layer + 1, \cdots, \layercount$} {
        Initialize $\setxf^{\layer'} \gets \one_{\rr_+}(\funca^{\layer'}(\setxf^{\layer' - 1}))$\\
        $\setxi^{\layer'} \gets \setxi^{\layer'} \cup \setxf^{\layer'}$
    }
    \Return $\setxi$
\end{algorithm}
}

To limit the time spent checking validity of candidates, Algorithms \ref{alg:phaseTwoFixVar} and \ref{alg:genTwoVarIneq} use an inner approximation $\setxi^\layer$ for $\one_{\rr_+}(\funca^\layer(\setxo^{\layer - 1}))$ to detect candidates that will not lead to a valid inequality, and hence avoid solving \eqref{eq:varFixForm} or \eqref{eq:twoVarIneq} for those candidates. For each candidate layerwise derived inequality, if the maximum of the left-hand side over $\setxi^\layer$ is larger than the right-hand side, this candidate is not valid for $\setx^{\layer}$, so solving the IP subproblem can be avoided. In Algorithm \ref{alg:veriBnn}, $\setxi = (\setxi^0, \cdots, \setxi^\layercount)$ is initialized by propagating the input feature vector $\varxbbar$ through the BNN. In Algorithm \ref{alg:phaseOneFixVar} these sets are expanded by propagating the optimal solution obtained when solving \eqref{eq:varFixForm} at layer 1 (which are in $\setx^0$ by definition) through the BNN, thus yielding solutions in $\setx^\layer$ which are added to $\setxi^\layer$ for $\layer=0,\ldots,L$. In Algorithms \ref{alg:phaseTwoFixVar} and \ref{alg:genTwoVarIneq}, for each layer $\layer \in \{1,\ldots, L\}$, each time we solve \eqref{eq:varFixForm} or \eqref{eq:twoVarIneq} to check validity of a candidate inequality we collect the set of feasible solutions $\setxf^{\layer-1}$ obtained by the solver, which by definition are in the set $\setxo^{\layer-1}$. These are then propagated forwarded as $\setxf^{\layer'} = \one_{\rr_+}(\funca^{\layer'}(\setxf^{\layer' - 1}))$ for $\layer'=\layer,\ldots,L$ and these sets are added to $\setxi^{\layer'}$. Since the set $\setxo^{\ell-1}$ is completely determined when processing layer $\ell$ in Algorithm \ref{alg:phaseTwoFixVar}, this process ensures that $\setxi^{\ell} \subseteq \setxo^{\ell}$ for all $\ell$, and hence if a candidate valid inequality is violated by a vector in $\setxi^{\ell}$ we can conclude that solving \eqref{eq:varFixForm} or \eqref{eq:twoVarIneq} cannot lead to a verification that the inequality is valid.

As a final strategy to limit the computational time spent solving \eqref{eq:twoVarIneq} on candidates that do not yield valid inequalities, 
Algorithm \ref{alg:genTwoVarIneq} includes a heuristic stopping rule that quits the process if it seems unlikely to yield more additional valid inequalities. The idea is to check the candidate inequalities in a sequence defined by a score that correlates with how likely they are to yield a valid inequality, and then terminate once the number of consecutive failed attempts exceeds a pre-specified limit `$\maxfail$'. The score that we use for a candidate inequality $ (i, c_i, k, c_k) \in \neuronsp^\layer$ also leverages the inner approximations we have built and is defined as:
$$
\text{score}(\neuroni, \constc_\neuroni, \neuronk, \constc_\neuronk) =    \frac{|\{\varxbh^\layer \in \setxi^\layer: \varxh_\neuroni^\layer = \frac{-\constc_\neuroni + 1}{2}\}|+ |\{\varxbh^\layer \in \setxi^\layer: \varxh_\neuronk^\layer = \frac{-\constc_\neuronk + 1}{2}\}|}{|\setxi^\layer|}.
$$
To understand the intuition for this score, consider the case of $c_i=c_k=1$, so that the candidate inequality is of the form $x^\layer_i + x^\layer_k \leq 1$, and the score sums the fraction of solutions in the inner approximation $\setxi^\ell$ which have $x^\layer_i = 0$ and which have $x^\layer_k = 0$. A high score suggests most solutions in $\setxo^\layer$ have either $x^\layer_i = 0$ or $x^\layer_k = 0$ and hence the candidate inequality might be satisfied.
The intuition behind the other cases is similar.

\section{Computational Study}\label{sec:4}

We pursue a computational study to investigate how IP methods work to solve the BNN verification problem for multiple test instances. The following five IP methods are considered in the computational study:
\begin{itemize}
    \item Many-IP: solve the IP problem \eqref{eq:indivForm} for all $\class$,
    \item 1-IP: solve the IP problem \eqref{eq:incorpForm},
    \item 1-IP+HG: solve \eqref{eq:incorpForm} by employing a constraint generation approach with \eqref{eq:singleConvHullIneq},
    \item 1-IP+Fix: solve \eqref{eq:incorpForm} by employing the variant of Algorithm \ref{alg:veriBnn} where non-root nodes are explored in solving \eqref{eq:incorpForm} in phase 1, and phase 2 is skipped,
    \item 1-IP+Fix+2Var: solve \eqref{eq:incorpForm} by employing Algorithm \ref{alg:veriBnn}.
\end{itemize}
Many-IP and 1-IP are compared to explore the impact of the technique for obtaining a linear objective. The methods 1-IP, 1-IP+HG, 1-IP+Fix, and 1-IP+Fix+2Var are compared to explore the impact of the techniques for generating layerwise derived valid inequalities.

\subsection{Test Instances}\label{subsec:4_1}

{
\SingleSpacedXI
\begin{table}
    \centering
    \begin{tabular}{c|c c c c c c}
        \hline
        \textbf{Network} & \multicolumn{1}{c}{$\mathbf{\layercount}$} & \multicolumn{1}{c}{$\mathbf{\neuroncount^1}$} & \multicolumn{1}{c}{$\mathbf{\neuroncount^2}$} & \multicolumn{1}{c}{$\mathbf{\neuroncount^3}$} & \multicolumn{1}{c}{$\mathbf{\neuroncount^4}$} & \textbf{Error Rate}\\
        \hline
        \textbf{1} & 2 & 100 & 100 & * & * & 7.53\%\\
        \textbf{2} & 2 & 200 & 100 & * & * & 5.52\%\\
        \textbf{3} & 2 & 300 & 200 & * & * & 4.19\%\\
        \textbf{4} & 3 & 100 & 100 & 100 & * & 7.33\%\\
        \textbf{5} & 3 & 200 & 100 & 100 & * & 5.08\%\\
        \textbf{6} & 3 & 300 & 200 & 100 & * & 3.68\%\\
        \textbf{7} & 4 & 200 & 100 & 100 & 100 & 5.04\%\\
        \textbf{8} & 4 & 300 & 200 & 100 & 100 & 3.46\%\\
        \textbf{9} & 4 & 500 & 300 & 200 & 100 & 2.32\%\\
        \hline
    \end{tabular}
    
    \caption{Networks in computational study}\label{tab:net}
\end{table}
}

Unless stated otherwise, the maximum perturbation from $\varxbbar$ is defined using an $\ell_1$ norm in our test instances.

Each test instance in the computational study consists of a BNN, $\varxbbar$, and $\pert$. Nine BNNs are trained for the computational study by using \cite{hubara2016binarized}'s method based on a gradient descent method with the MNIST train dataset. The MNIST dataset consists of feature vectors representing handwritten digits from 0 to 9, so $\neuroncount^{\layercount + 1} = 10$. These feature vectors are originally 784-dimensional non-negative integer vectors whose coordinates are at most 255, but they are scaled to 784-dimensional non-negative real vectors whose coordinates are at most 1, so $\neuroncount^0 = 784$ and $\quant = 255$. The number of hidden layers, the number of neurons in hidden layers, and the error rate of each network are reported in Table \ref{tab:net}. The error rate of each network is computed as the portion of feature vectors misclassified by this network in the MNIST test dataset.

For each network, we consider 10 instances, defined by 10 different feature vectors  $\varxbbar$. For each digit from 0 to 9, one feature vector in the MNIST test dataset whose $\classbar$ is this digit is randomly chosen.

{
\SingleSpacedXI
\begin{algorithm}
    \caption{Perturbation Selection}\label{alg:pertSel}
    \KwIn{$\pertinit$, $\maxiter$}
    \KwOut{$\pertl$,$\pertu$: lower and upper bound on maximum value of $\pert$ for which $\bar{x}$ is $\pert$-verified}
    $\pertl \gets 0, \pertu \gets NULL$\\
    $\pert \gets \pertinit$\\
    \For{$i = 1, \ldots, \maxiter$}{
    Attempt to solve \eqref{eq:bnnVeri} with a time limit for $\varxbar$ and $\epsilon$ and let $\bar{z}$ be the best upper bound obtained. \\
    \eIf{$\bar{z} \leq 0$}{
        $\pertl \gets \pert$\\
        \eIf{$\pertu$ is NULL}{
            $\pert \gets 2 \pert$
        }{
            $\pert \gets \frac{\pert + \pertu}{2}$
        }
    }{
        $\pertu \gets \pert$\\
        $\pert \gets \frac{\pert + \pertl}{2}$
    }
    }
    \Return $\pert$, $\pertl$, $\pertu$
\end{algorithm}
}
For each network and feature vector $\varxbbar$, six values are obtained for $\pert$ by employing Algorithm \ref{alg:pertSel} with $\maxiter = 6$. Algorithm \ref{alg:pertSel} is a binary search that attempts to find the largest $\pert$ under which $\varxbbar$ can be $\pert$-verified using a given method and time limit in each iteration. In Algorithm \ref{alg:pertSel}, if $\varxbbar$ is $\pert$-verified for the current $\pert$, $\pert$ is increased based on the smallest input perturbation $\pertu$ under which $\varxbbar$ may not be $\pertu$-verified. Otherwise, $\pert$ is decreased to the average of $\pert$ and the largest input perturbation $\pertl$ under which $\varxbbar$ is $\pertl$-verified. Different IP methods for the BNN verification may result in different sequences of $\pert$ in Algorithm \ref{alg:pertSel} based on their ability to solve the verification problem at a given $\pert$ within the time limit. This approach is used for determining the $\pert$ values in the test instances in order to find challenging instances, i.e., those in which determining whether $\varxbbar$ is $\pert$-verified is nontrivial. The initial value $\pertinit$ is set to $1$ in the case that the maximum perturbation from $\varxbbar$ is defined by an $\ell_1$ norm, $\frac{1}{255}$ in the case that the maximum perturbation is defined by $\ell_\infty$ norm, and $\frac{1}{32}$ in the case that the maximum perturbation is defined by $\ell_2$ norm. 

\subsection{Implementation Details}\label{subsec:4_2}

In the IP method 1-IP+HG, \eqref{eq:singleConvHullIneq} are generated by solving the LP relaxation problem of the IP problem \eqref{eq:incorpForm} and adding violated inequalities \eqref{eq:singleConvHullLowerBoundIneq} and \eqref{eq:singleConvHullUpperBoundIneq} in an iterative manner. In each iteration, for each $\layer \in \{2, \ldots, \layercount\}$ and $\neuroni \in \neurons^\layer$, an inequality \eqref{eq:singleConvHullUpperBoundIneq} with the largest violation is added to \eqref{eq:incorpForm}. Likewise, \eqref{eq:singleConvHullLowerBoundIneq} with the largest violation is added to \eqref{eq:incorpForm} as a constraint. This iteration is repeated until either no violated inequalities are found or the optimal objective value of the LP relaxation problem does not improve by 1\% over the last 10 iterations. Our test instances are different than those used in \cite{han2021single} (ours have more layers in the BNN and non-binary inputs) and we obtain qualitatively different results than those presented in \cite{han2021single}. Thus, to validate our implementation, in Appendix \ref{app:singneuron} we present results of additional experiments with the method 1-IP+HG on the instances used in \cite{han2021single} which indicates that our implementation achieves qualitatively similar results to what is reported in \cite{han2021single} on those instances.

In the IP method 1-IP+Fix and 1-IP+Fix+2Var, in Algorithm \ref{alg:phaseOneFixVar}, the IP subproblem \eqref{eq:varFixForm} is solved using the method described at the end of  Section \ref{subsec:3_1} (i.e., without an IP solver). 

In 1-IP+Fix+2Var, two-variable inequalities for the first hidden layer are not generated, and variable fixings for the second hidden layer are not generated in phase 2 because two-variable inequalities for the first hidden layer are unlikely to be generated for the test instances. The networks are dense, and $\pert$ is much smaller than $\neuroncount^0 = 784$ in test instances, so for $\neuroni, \neuronk \in \neurons^1$ satisfying $\neuroni < \neuronk$ and $\constc_\neuroni, \constc_\neuronk \in \{-1, 1\}$, it is likely to find $\varxbh^0 \in \setx^0$ where $2 \constc_\neuroni \sum_{\neuronj \in \neurons^0} \weight_{\neuroni \neuronj}^1 \varxh_\neuronj^0$ and $2 \constc_\neuronk \sum_{\neuronj \in \neurons^0} \weight_{\neuronk \neuronj}^1 \varxh_\neuronj^0$ achieve their maximum over $\setx^0$. If both $\varx_\neuroni^1$ and $\varx_\neuronk^1$ are not fixed, $\varxbh^0$ violates \eqref{eq:twoVarIneq}, so two-variable inequalities for the first hidden layer are rarely generated. As a result of not generating two-variable inequalities for the first hidden layer, variable fixings for the second hidden layer cannot be generated in phase 2. We use $\maxfail = 100$ as the number of consecutive failures after which we terminate attempting to generate two-variable inequalities.

The time limit for all IP methods is 3600 seconds. In the IP method Many-IP, the time limit to solve the IP problem \eqref{eq:indivForm} for each $\class$ is set to 400 seconds because there are nine alternative classes. In 1-IP+HG, 1-IP+Fix, and 1-IP+Fix+2Var, a time limit on the phase for generating valid inequalities of 2700 seconds is imposed.  The time limit to solve \eqref{eq:incorpForm} after generating the inequalities is set to 3600 seconds \textit{subtract} the time spent generating valid inequalities. If the time limit is hit, the best upper bound for the optimal objective value is used instead of the best objective value to answer the BNN verification problem because $\varxbbar$ is $\pert$-verified if and only if the best upper bound is non-positive.

All IP methods are implemented in Python, and Gurobi 10.0.3 is used as the IP solver. All experiments in the computational study are run on an Ubuntu desktop with 32 GB RAM and 16 Intel Core i7-10700 CPUs running at 2.90 GHz.

\subsection{Results}\label{subsec:4_3}

We first collect the maximum $\pert$ under which $\varxbbar$ is $\pert$-verified for each IP method, network, and $\varxbbar$ in the case that the maximal perturbation from $\varxbbar$ is defined using an $\ell_1$ norm. The maximum $\pert$ is obtained from using Algorithm \ref{alg:pertSel} with $\maxiter = 6$. For the same test instance, one IP method may succeed in verifying the BNN, but another IP method fails because it may not be able to verify the BNN within the time limit. As a result, the set of $\pert$ defining the six instances may differ by IP methods for the same network and $\varxbbar$, and the maximum $\pert$ may differ.

{
\SingleSpacedXI
\begin{table}
    \centering
    \begin{tabular}{c|S S S S S}
        \hline
        \multirow{3}{*}{\textbf{Network}} & \textbf{Many-IP} & \textbf{1-IP} & \textbf{1-IP} & \textbf{1-IP} & \textbf{1-IP}\\
        & & & \textbf{+HG} & \textbf{+Fix} & \textbf{+Fix}\\
        & & & & & \textbf{+2Var}\\
        \hline
        \textbf{1} & 7.23 & 7.23 & 7.23 & 7.23 & 7.23\\
        \textbf{2} & 6.15 & 6.48 & 6.48 & 6.48 & 6.48\\
        \textbf{3} & 1.60 & 5.45 & 5.32 & 6.28 & 6.28\\
        \textbf{4} & 6.07 & 6.28 & 6.28 & 6.18 & 6.28\\
        \textbf{5} & 5.55 & 6.35 & 6.35 & 6.55 & 7.35\\
        \textbf{6} & 2.95 & 4.30 & 3.95 & 4.43 & 5.45\\
        \textbf{7} & 4.51 & 5.06 & 5.06 & 5.29 & 6.49\\
        \textbf{8} & 2.51 & 3.89 & 3.78 & 3.99 & 5.28\\
        \textbf{9} & 0.64 & 2.17 & 2.06 & 2.34 & 3.88\\
        \hline
    \end{tabular}
    
    \caption{Average maximum $\pert$ under which $\varxbbar$ is $\pert$-verified by each method within the time limit.}\label{tab:maxPert}
\end{table}
}
For each IP method and network, the arithmetic mean of the maximum $\pert$ over ten $\varxbbar$ is presented in Table \ref{tab:maxPert}. Many-IP achieves the smallest maximum $\pert$, and 1-IP+HG achieves a smaller maximum $\pert$ than the 1-IP. Except for Network 4,  1-IP+Fix results in a larger maximum $\pert$ than 1-IP. Method 1-IP+Fix+2Var yields the largest maximum $\pert$ in every case. These results indicate that the using the proposed variable fixing approach yields modest improvement in the ability to find the maximum $\pert$ at which an input can be verified, and using two-variable layer-wise derived inequalities yields significantly more improvement.
{
\SingleSpacedXI
\begin{table}
    \centering
    \begin{tabular}{c|c|S S}
        \hline
        \multirow{2}{*}{\textbf{Network}} & \multirow{2}{*}{\begin{tabular}{c} \textbf{\# of}\\ \textbf{Instances} \end{tabular}} & \textbf{Many-IP} & \textbf{1-IP}\\
        & & &\\
        \hline
        \textbf{1} & 60 & 153.1(3) & 29.4\\
        \textbf{2} & 57 & 721.9(12) & 110.5\\
        \textbf{3} & 20 & 1420.8(10) & 358.9\\
        \textbf{4} & 59 & 269.7(8) & 58.4(2)\\
        \textbf{5} & 56 & 986.5(18) & 286.5(8)\\
        \textbf{6} & 39 & 1941.8(18) & 568.3(7)\\
        \textbf{7} & 56 & 972.3(22) & 279.7(12)\\
        \textbf{8} & 34 & 1654.4(17) & 503.1(7)\\
        \textbf{9} & 18 & 2852.2(13) & 710.8(3)\\
        \hline
    \end{tabular}
    
    \caption{Verification time (seconds) of Many-IP and 1-IP}\label{tab:manyIpOneIpVeriTime}
\end{table}
}

We next investigate in more detail the ability of the different methods to solve verification problems for various values of $\pert$.
We first compare Many-IP and 1-IP. To do so, for each network and feature vector we consider the subset of $\pert$ values that are solved by both of these methods, and
compute the shifted geometric mean of solution times to solve the IP problem \eqref{eq:incorpForm} (for 1-IP) or to solve \eqref{eq:indivForm} for all $\class$ (for Many-IP), with a shift of 1.
These results are presented in Table \ref{tab:manyIpOneIpVeriTime}, where for Many-IP, the number in parenthesis is the number of test instances where the time limit is hit in solving \eqref{eq:indivForm} for at least one $\class$ and for 1-IP, the number in parenthesis is the number of test instances where the time limit is hit in solving \eqref{eq:incorpForm}. We find that 1-IP solves the verification problem much more quickly than Many-IP, indicating that our technique for obtaining a linear objective leads to faster BNN verification. Considering the superiority of 1-IP over Many-IP, we exclude Many-IP from further comparisons.
{
\SingleSpacedXI
\begin{table}
    \centering
    \begin{tabular}{c|c|S S S S}
        \hline
        \multirow{3}{*}{\textbf{Network}} & \multirow{3}{*}{\begin{tabular}{c} \textbf{\# of}\\ \textbf{Instances} \end{tabular}} & \textbf{1-IP} & \textbf{1-IP} & \textbf{1-IP} & \textbf{1-IP}\\
        & & & \textbf{+HG} & \textbf{+Fix} & \textbf{+Fix}\\
        & & & & & \textbf{+2Var}\\
        \hline
        \textbf{1} & 60 & 29.4 & 31.6 & 5.6 & 7.0\\
        \textbf{2} & 60 & 119.6 & 124.3 & 16.8 & 16.2\\
        \textbf{3} & 52 & 779.5(10) & 842.5(13) & 58.4(4) & 62.7(4)\\
        \textbf{4} & 59 & 58.4(2) & 75.4(1) & 15.2(2) & 11.3\\
        \textbf{5} & 55 & 296.1(9) & 377.9(9) & 61.7(7) & 30.1(1)\\
        \textbf{6} & 46 & 612.2(13) & 942.1(14) & 130.4(12) & 56.3(2)\\
        \textbf{7} & 52 & 267.8(12) & 376.7(15) & 84.3(12) & 36.1(4)\\
        \textbf{8} & 36 & 582.5(12) & 722.1(12) & 119.9(11) & 36.0(2)\\
        \textbf{9} & 25 & 1183.8(11) & 1445.2(11) & 276.1(9) & 68.3(1)\\
        \hline
    \end{tabular}
    
    \caption{Verification time (seconds) of IP methods based on 1-IP}\label{tab:oneIpVeriTime}
\end{table}
}

In Table \ref{tab:oneIpVeriTime} we report the geometric average verification times over the instances tested by all IP methods based on 1-IP. The set of test instances in this case is based on the set of $\pert$ values that were solved by all 1-IP methods.
This set of test instances may differ from the set used in the Many-IP and 1-IP comparison due to the exclusion of Many-IP from this comparison.  The number in parenthesis is the number of test instances where the time limit is hit when solving \eqref{eq:incorpForm}. We find that 1-IP+HG's verification times are larger than 1-IP's, and 1-IP+Fix's are shorter than 1-IP's. Thus, we conclude that using inequalities \eqref{eq:singleConvHullIneq} does not accelerate BNN verification, but variable fixings do. We also find that 1-IP+Fix+2Var's verification times are similar to 1-IP+Fix's for Networks 1-4, and 1-IP+Fix+2Var's verification times are shorter than 1-IP+Fix's for Networks 5-9. This suggests that the two-variable layerwise derived inequalities are more helpful for BNN verification of networks with more hidden layers.
{
\SingleSpacedXI
\begin{table}
    \centering
    \begin{tabular}{c|S S S S}
        \hline
        \multirow{3}{*}{\textbf{Network}} & \multicolumn{1}{c}{\textbf{1-IP}} & \multicolumn{1}{c}{\textbf{1-IP}} & \multicolumn{1}{c}{\textbf{1-IP}} & \multicolumn{1}{c}{\textbf{1-IP}}\\
        & & \multicolumn{1}{c}{\textbf{+HG}} & \multicolumn{1}{c}{\textbf{+Fix}} & \multicolumn{1}{c}{\textbf{+Fix}}\\
        & & & & \multicolumn{1}{c}{\textbf{+2Var}}\\
        \hline
        \textbf{1} & \SI{116.4}{\percent} & \SI{116.4}{\percent} & \SI{32.5}{\percent} & \SI{18.2}{\percent}\\
        \textbf{2} & \SI{120.8}{\percent} & \SI{120.8}{\percent} & \SI{36.0}{\percent} & \SI{19.9}{\percent}\\
        \textbf{3} & \SI{131.2}{\percent} & \SI{132.9}{\percent} & \SI{50.3}{\percent} & \SI{30.5}{\percent}\\
        \textbf{4} & \SI{125.4}{\percent} & \SI{124.9}{\percent} & \SI{49.1}{\percent} & \SI{13.9}{\percent}\\
        \textbf{5} & \SI{137.4}{\percent} & \SI{137.8}{\percent} & \SI{67.4}{\percent} & \SI{20.5}{\percent}\\
        \textbf{6} & \SI{150.7}{\percent} & \SI{153.3}{\percent} & \SI{82.4}{\percent} & \SI{25.6}{\percent}\\
        \textbf{7} & \SI{144.1}{\percent} & \SI{150.2}{\percent} & \SI{82.0}{\percent} & \SI{31.7}{\percent}\\
        \textbf{8} & \SI{178.8}{\percent} & \SI{179.3}{\percent} & \SI{93.8}{\percent} & \SI{26.9}{\percent}\\
        \textbf{9} & \SI{180.2}{\percent} & \SI{180.2}{\percent} & \SI{109.2}{\percent} & \SI{13.8}{\percent}\\
        \hline
    \end{tabular}
    
    \caption{LP gap of IP methods based on 1-IP}\label{tab:oneIpLpGap}
\end{table}
}

We next investigate the LP gaps of \eqref{eq:incorpForm} over instances tested by all IP methods based on 1-IP. The LP gap is defined as $(\optobjlp - \optobj)/\objupperbound$ where $\optobjlp$ is the optimal objective value of the LP relaxation problem of \eqref{eq:incorpForm} after adding any valid inequalities introduced by the method, $\optobj$ is the best objective value from a feasible solution to \eqref{eq:incorpForm} obtained by the method, and 
\[ \objupperbound = \max_{\substack{\varxb^\layercount \in \{0, 1\}^{\neuroncount^\layercount}\\ \class \in \neurons^{\layercount + 1}}} (\funca_\class^{\layercount + 1}(\varxb^\layercount) - \funca_{\classbar}^{\layercount + 1}(\varxb^\layercount)). \] 
We use $\objupperbound$ as the denominator instead of $|\optobj|$ because $\optobj$ may be near zero for some instances, which would skew interpretation of these relative optimality gaps when averaged over different instances. The quantity $\objupperbound$ is positive by definition and is a natural upper bound on the optimal value of \eqref{eq:incorpForm}. Table \ref{tab:oneIpLpGap} reports the average LP gaps for each method. We find that the LP gaps of 1-IP+HG and 1-IP are similar, 1-IP+Fix yields significant improvement over 1-IP, and 1-IP-Fix+2Var yields further significant improvement beyond that.  Although 1-IP+HG does not improve LP gaps on these instances, we remark that in the supplemental experiments reported in Appendix \ref{app:singneuron} we found that on the instances used in \cite{han2021single} the use of the single-neuron valid inequalities did yield modest reduction in the LP gap, although even on these instances this reduction still did not translate into reduced time to solve the verification problem.

{
\SingleSpacedXI
\begin{table}
    \centering
    \begin{tabular}{l|S S S S}
        \hline
        & \multicolumn{1}{c}{\textbf{1-IP}} & \multicolumn{1}{c}{\textbf{1-IP}} & \multicolumn{1}{c}{\textbf{1-IP}} & \multicolumn{1}{c}{\textbf{1-IP}}\\
        & & \multicolumn{1}{c}{\textbf{+HG}} & \multicolumn{1}{c}{\textbf{+Fix}} & \multicolumn{1}{c}{\textbf{+Fix}}\\
        & & & & \multicolumn{1}{c}{\textbf{+2Var}}\\
        \hline
        \textbf{Preprocessing time (seconds)} & 0.0 & 10.8 & 7.0 & 15.2\\
        \textbf{Verification time (seconds)} & 211.7 & 258.9 & 40.6 & 25.5\\
        \textbf{\# of solved instances (veri.)} & 376 & 370 & 388 & 431\\
        \textbf{\# of nodes (veri.)} & 9228.2 & 9501.0 & 78.0 & 7.3\\
        \textbf{Optimization time (seconds)} & 298.2 & 352.0 & 62.6 & 36.7\\
        \textbf{\# of solved instances (opt.)} & 347 & 347 & 359 & 406\\
        \textbf{\# of nodes (opt.)} & 16389.5 & 16540.8 & 241.1 & 31.4\\
        \textbf{Optimality gap} & \SI{22.5}{\percent} & \SI{24.4}{\percent} & \SI{19.2}{\percent} & \SI{4.8}{\percent}\\
        \hline
    \end{tabular}
    
    \caption{Other metrics of IP methods based on 1-IP (445 instances)}\label{tab:oneIp}
\end{table}
}

Table \ref{tab:oneIp} presents additional results for the IP methods based on 1-IP. These results are averaged over all 445 instances solved by all these methods.
The row `Preprocessing time' presents the shifted geometric mean of times to generate layerwise derived valid inequalities (resp. \eqref{eq:singleConvHullIneq}) for 1-IP+Fix and 1-IP+Fix+2Var (resp. 1-IP+HG).
The row `verification time' is the shifted geometric mean of times to solve the verification problem -- if a method does not solve an instance within the time limit, then the time limit is used for that instance. The row `\# of solved instances (veri.)' reports the number of instances for which each IP method solved the verification problem within the time limit. Row `\# of nodes (veri.)' is the shifted geometric mean of the number of nodes in the branch-and-bound tree when solving the verification problem (again, for instances hitting the time limit, this is just the number of nodes explored within the limit). In all the verification times the solution of \eqref{eq:incorpForm} is terminated as soon as the verification question is answered (e.g., if the upper bound becomes non-positive or the lower bound becomes positive). To provide further insight into the quality of these formulations we also attempted to solve \eqref{eq:incorpForm} without terminating once the verification question is answered. The results of this experiment are reported in the rows `Optimization time', ` (\# of solved instances (opti.)', `(\# of nodes (opti.))', which are defined analogously as the verification results. Row `optimality gap' presents the arithmetic mean of relative optimality gaps obtained in solving \eqref{eq:incorpForm} without the early termination, where the relative optimality gap is defined as $(\optobjbound - \optobj)/\objupperbound$, where $\optobjbound$ is the best bound for the optimal objective value of \eqref{eq:incorpForm} obtained by the method. A shift is set to 1 in every shifted geometric mean. These results confirm the significant time reduction obtained using variable fixing and the proposed two-variable inequalities, and indicate that this reduction is obtained thanks to the dramatic reduction in the number of branch-and-bound nodes required to either answer the verification problem or to solve the optimization problem. The results also reinforce that the single-neuron inequalities used in 1-IP+HG do not yield improvement.

{
\SingleSpacedXI
\begin{table}
    \centering
    \begin{tabular}{c|S S S S S}
        \hline
        \multirow{3}{*}{\textbf{Norm}} & \textbf{Many-IP} & \textbf{1-IP} & \textbf{1-IP} & \textbf{1-IP} & \textbf{1-IP}\\
        & & & \textbf{+HG} & \textbf{+Fix} & \textbf{+Fix}\\
        & & & & & \textbf{+2Var}\\
        \hline
        $\mathbf{\ell_\infty}$ & 0.018 & 0.026 & 0.020 & 0.025 & 0.032\\
        $\mathbf{\ell_2}$ & 0.029 & 0.051 & 0.051 & 0.247 & 0.312\\
        \hline
    \end{tabular}
    
    \caption{Maximum $\pert$ under which $\varxbbar$ is $\pert$-verified for $\ell_\infty$ norm and $\ell_2$ norm}\label{tab:addMaxPert}
\end{table}
}

All results so far have been reported for the case in which the maximum perturbation from $\varxbbar$ is defined using the $\ell_1$ norm. To illustrate the versatility of the IP approach, we also conducted analogous experiments using the $\ell_\infty$ and $\ell_2$ norm. The detailed results of these experiments are presented in Appendix \ref{app:inf2norm}, and we present a summary here. Table \ref{tab:addMaxPert} presents the arithmetic mean of the maximum $\pert$ over ten feature vectors $\varxbbar$ for each IP method on Network 5. Once gain, Many-IP achieves the smallest maximum $\pert$ in both cases. Method 1-IP+HG again yields smaller maximum $\pert$ than 1-IP in the case of $\normp = \infty$ and the same in the case of $\normp = 2$. 1-IP+Fix yields similar maximum $\pert$ to 1-IP in the case of $\normp = \infty$ and larger maximum $\pert$ than 1-IP in the case of $\normp = 2$. Thus, it seems that generating variable fixings alone does not improve BNN verification by the IP methods in the case of $\normp = \infty$, but it does in the case of $\normp = 2$. 1-IP+Fix+2Var yields the largest maximum $\pert$ in both cases, showing that generating layerwise derived valid inequalities enables verifying the BNN against a higher range of $\pert$ in the case of $\normp = \infty$ and $\normp = 2$.
\section{Conclusion}\label{sec:5}

In this paper, we investigate an IP method that exploits the technique for obtaining a linear objective and a technique for generating layerwise derived valid inequalities to solve the BNN verification problem. Our computational study shows that our IP method verifies BNNs against a higher range of input perturbation than existing IP methods. The technique for obtaining a linear objective leads to solving the BNN verification problem faster by solving a single IP problem instead of multiple IP problems for each alternative class. The technique for generating layerwise derived valid inequalities enables more efficient BNN verification by yielding smaller LP gaps at the expense of solving IP subproblems involving a single layer.

One future direction on IP methods for the BNN verification problem is investigating valid inequalities that involve more than two variables, as the inequalities we investigated contain at most two variables. 
Another future direction related to the BNN verification problem is counterfactual explanations for BNNs. As discussed in Section \ref{sec:1}, these counterfactual explanations can be obtained by solving the BNN verification problem for various input perturbations. Thus, a possible direction of research for counterfactual explanations for BNNs is to consider whether information from these related instances can be shared to accelerate the overall process. For example, it may be possible to reduce the number of candidates for layerwise derived valid inequalities to check.

\bibliographystyle{informs2014}
\bibliography{reference}

\begin{thebibliography}{26}
\providecommand{\natexlab}[1]{#1}
\providecommand{\url}[1]{\texttt{#1}}
\providecommand{\urlprefix}{URL }

\bibitem[{Amir et~al.(2021)Amir, Wu, Barrett, \protect\BIBand{} Katz}]{amir2021smt}
Amir G, Wu H, Barrett C, Katz G (2021) An smt-based approach for verifying binarized neural networks. Groote JF, Larsen KG, eds., \emph{Tools and Algorithms for the Construction and Analysis of Systems}, 203--222 (Springer).

\bibitem[{Balas(1985)}]{balas1985disjunctive}
Balas E (1985) Disjunctive programming and a hierarchy of relaxations for discrete optimization problems. \emph{SIAM Journal on Algebraic Discrete Methods} 6(3):466--486.

\bibitem[{Bernardelli et~al.(2023)Bernardelli, Gualandi, Lau, \protect\BIBand{} Milanesi}]{bernardelli2023bemi}
Bernardelli AM, Gualandi S, Lau HC, Milanesi S (2023) The bemi stardust: A structured ensemble of binarized neural networks. Sellmann M, Tierney K, eds., \emph{Learning and Intelligent Optimization}, 443--458 (Springer).

\bibitem[{Contardo et~al.(2024)Contardo, Fukasawa, Rousseau, \protect\BIBand{} Vidal}]{contardo2024optimal}
Contardo C, Fukasawa R, Rousseau LM, Vidal T (2024) Optimal counterfactual explanations for k-nearest neighbors using mathematical optimization and constraint programming. Basu A, Mahjoub AR, Salazar~Gonz{\'a}lez JJ, eds., \emph{Combinatorial Optimization}, 318--331 (Springer).

\bibitem[{Fischetti \protect\BIBand{} Jo(2018)}]{fischetti2018deep}
Fischetti M, Jo J (2018) Deep neural networks and mixed integer linear optimization. \emph{Constraints} 23(3):296--309.

\bibitem[{Geiger \protect\BIBand{} Team(2020)}]{geiger2020larq}
Geiger L, Team P (2020) Larq: An open-source library for training binarized neural networks. \emph{Journal of Open Source Software} 5(45):1746.

\bibitem[{Han \protect\BIBand{} G{\'o}mez(2021)}]{han2021single}
Han S, G{\'o}mez A (2021) Single-neuron convexification for binarized neural networks. https://optimization-online.org/wp-content/uploads/2021/05/8419.pdf, accessed May 27, 2021.

\bibitem[{Hubara et~al.(2016)Hubara, Courbariaux, Soudry, El-Yaniv, \protect\BIBand{} Bengio}]{hubara2016binarized}
Hubara I, Courbariaux M, Soudry D, El-Yaniv R, Bengio Y (2016) Binarized neural networks. Lee D, Sugiyama M, von Luxburg U, Guyon I, Garnett R, eds., \emph{Adv. in Neural Information Processing Systems}, volume~29, 4107–--4115 (Curran Associates, Inc.).

\bibitem[{Ivashchenko et~al.(2023)Ivashchenko, Choi, Nguyen, \protect\BIBand{} Tran}]{ivashchenko2023verifying}
Ivashchenko M, Choi SW, Nguyen LV, Tran HD (2023) Verifying binary neural networks on continuous input space using star reachability. \emph{The IEEE International Conf. on Formal Methods in Software Engineering}, 7--17 (IEEE).

\bibitem[{Jia \protect\BIBand{} Rinard(2020)}]{jia2020efficient}
Jia K, Rinard M (2020) Efficient exact verification of binarized neural networks. Larochelle H, Ranzato M, Hadsell R, Balcan MF, Lin HT, eds., \emph{Adv. in Neural Information Processing Systems}, volume~33, 1782--1795 (Curran Associates, Inc.).

\bibitem[{Khalil et~al.(2019)Khalil, Gupta, \protect\BIBand{} Dilkina}]{khalil2018combinatorial}
Khalil EB, Gupta A, Dilkina B (2019) Combinatorial attacks on binarized neural networks. \emph{International Conf. on Learning Representations}, \urlprefix\url{https://openreview.net/forum?id=S1lTEh09FQ}.

\bibitem[{Kov{\'a}sznai et~al.(2021)Kov{\'a}sznai, Gajd{\'a}r, \protect\BIBand{} Narodytska}]{kovasznai2021portfolio}
Kov{\'a}sznai G, Gajd{\'a}r K, Narodytska N (2021) Portfolio solver for verifying binarized neural networks. \emph{Annales Mathematicae et Informaticae} 53:183--200.

\bibitem[{Kung et~al.(2018)Kung, Zhang, van~der Wal, Chai, \protect\BIBand{} Mukhopadhyay}]{kung2018efficient}
Kung J, Zhang D, van~der Wal G, Chai S, Mukhopadhyay S (2018) Efficient object detection using embedded binarized neural networks. \emph{Journal of Signal Processing Systems} 90(6):877--890.

\bibitem[{Lubczyk \protect\BIBand{} Neto(2024)}]{lubczyk2024neuron}
Lubczyk D, Neto J (2024) Neuron pairs in binarized neural networks robustness verification via integer linear programming. Basu A, Mahjoub AR, Salazar~Gonz{\'a}lez JJ, eds., \emph{Combinatorial Optimization}, 305--317 (Springer).

\bibitem[{Ma et~al.(2019)Ma, Xiong, Hu, \protect\BIBand{} Ma}]{ma2019efficient}
Ma Y, Xiong H, Hu Z, Ma L (2019) Efficient super resolution using binarized neural network. \emph{The IEEE/CVF Conf. on Computer Vision and Pattern Recognition Workshops}, 694–--703 (IEEE).

\bibitem[{Martinez et~al.(2020)Martinez, Yang, Bulat, \protect\BIBand{} Tzimiropoulos}]{martinez2020training}
Martinez B, Yang J, Bulat A, Tzimiropoulos G (2020) Training binary neural networks with real-to-binary convolutions. \emph{International Conf. on Learning Representations}.

\bibitem[{McDanel et~al.(2017)McDanel, Teerapittayanon, \protect\BIBand{} Kung}]{mcdanel2017embedded}
McDanel B, Teerapittayanon S, Kung HT (2017) Embedded binarized neural networks. Gunningberg P, Voigt T, Mottola L, Lu C, eds., \emph{Proc. of the International Conf. on Embedded Wireless Systems and Networks}, 168--173 (Junction Publishing).

\bibitem[{Mothilal et~al.(2020)Mothilal, Sharma, \protect\BIBand{} Tan}]{mothilal2020explaining}
Mothilal R, Sharma A, Tan C (2020) Explaining machine learning classifiers through diverse counterfactual explanations. Hildebrandt M, Castillo C, Celis E, Ruggieri S, Taylor L, Zanfir-Fortuna G, eds., \emph{Conf. on Fairness, Accountability, and Transparency}, 607--617 (ACM).

\bibitem[{Narodytska et~al.(2018)Narodytska, Kasiviswanathan, Ryzhyk, Sagiv, \protect\BIBand{} Walsh}]{narodytska2018verifying}
Narodytska N, Kasiviswanathan S, Ryzhyk L, Sagiv M, Walsh T (2018) Verifying properties of binarized deep neural networks. McIlraith S, Weinberger K, eds., \emph{Proc. of the AAAI Conf. on Artificial Intelligence}, volume~32, 6615–--6624 (AAAI).

\bibitem[{Shih et~al.(2019)Shih, Darwiche, \protect\BIBand{} Choi}]{shih2019verifying}
Shih A, Darwiche A, Choi A (2019) Verifying binarized neural networks by angluin-style learning. Janota M, Lynce I, eds., \emph{Theory and Applications of Satisfiability Testing}, 354--370 (Springer).

\bibitem[{Shridhar et~al.(2020)Shridhar, Jain, Agarwal, \protect\BIBand{} Kleyko}]{shridhar2020end}
Shridhar K, Jain H, Agarwal A, Kleyko D (2020) End to end binarized neural networks for text classification. Moosavi NS, Fan A, Shwartz V, Glava{\v{s}} G, Joty S, Wang A, Wolf T, eds., \emph{Proc. of SustaiNLP: Workshop on Simple and Efficient Natural Language Processing}, 29--34 (Association for Computational Linguistics).

\bibitem[{Tang et~al.(2017)Tang, Hua, \protect\BIBand{} Wang}]{tang2017train}
Tang W, Hua G, Wang L (2017) How to train a compact binary neural network with high accuracy? Singh S, Markovitch S, eds., \emph{Proc. of the AAAI Conf. on Artificial Intelligence}, volume~31, 2625–--2631 (AAAI).

\bibitem[{Toro~Icarte et~al.(2019)Toro~Icarte, Illanes, Castro, Cire, McIlraith, \protect\BIBand{} Beck}]{toro2019training}
Toro~Icarte R, Illanes L, Castro M, Cire A, McIlraith S, Beck C (2019) Training binarized neural networks using mip and cp. Schiex T, de~Givry S, eds., \emph{Principles and Practice of Constraint Programming}, 401--417 (Springer).

\bibitem[{Vivier-Ardisson et~al.(2024)Vivier-Ardisson, Forel, Parmentier, \protect\BIBand{} Vidal}]{vivier2024cf}
Vivier-Ardisson G, Forel A, Parmentier A, Vidal T (2024) Cf-opt: Counterfactual explanations for structured prediction. https://arxiv.org/pdf/2405.18293, accessed May 28, 2024.

\bibitem[{Wachter et~al.(2018)Wachter, Mittelstadt, \protect\BIBand{} Russell}]{wachter2018counterfactual}
Wachter S, Mittelstadt B, Russell C (2018) Counterfactual explanations without opening the black box: Automated decisions and the gdpr. \emph{Harvard Journal of Law and Technology} 31(2):841--888.

\bibitem[{Zhang et~al.(2021)Zhang, Zhao, Chen, Song, \protect\BIBand{} Chen}]{zhang2021bdd4bnn}
Zhang Y, Zhao Z, Chen G, Song F, Chen T (2021) Bdd4bnn: A bdd-based quantitative analysis framework for binarized neural networks. Silva A, Leino KRM, eds., \emph{Computer Aided Verification}, 175--200 (Springer).

\end{thebibliography}

\setcounter{section}{0}
\renewcommand{\thesection}{\Alph{section}}
\section{Appendix}

\subsection{Proof of Lemma \ref{lem:linPropConstr}}\label{subsec:a_1}

We prove following Lemma \ref{lem:linPropConstr} on the formulation for the layer propagation constraint \eqref{eq:origFormPropConstr} introduced in Section \ref{subsubsec:2_1_1} with the definition on $\lb_\neuroni^\layer$, $\ub_\neuroni^\layer$, and $\refer_\neuroni^\layer$ for $\layer \in [\layercount]$ and $\neuroni \in \neurons^\layer$.

\linpropconstr*

\begin{proof}{Proof of Lemma \ref{lem:linPropConstr}.}
    The following constraints are equivalent to \eqref{eq:origFormPropConstr}:
    \begin{align*}
        \varx_\neuroni^\layer = 1 &\Rightarrow \funca_\neuroni^\layer(\varxb^{\layer - 1}) \ge 0\\
        &\Leftrightarrow 2 \sum_{\neuronj \in \neurons^{\layer - 1}} \weight_{\neuroni \neuronj}^\layer \varx_\neuronj^{\layer - 1} \ge \sum_{\neuronj \in \neurons^{\layer - 1}} \weight_{\neuroni \neuronj}^\layer - \bias_\neuroni^\layer, \quad \forall \layer \in [\layercount], \neuroni \in \neurons^\layer,\\
        \varx_\neuroni^\layer = 0 &\Rightarrow \funca_\neuroni^\layer(\varxb^{\layer - 1}) < 0\\
        &\Leftrightarrow 2 \sum_{\neuronj \in \neurons^{\layer - 1}} \weight_{\neuroni \neuronj}^\layer \varx_\neuronj^{\layer - 1} < \sum_{\neuronj \in \neurons^{\layer - 1}} \weight_{\neuroni \neuronj}^\layer - \bias_\neuroni^\layer, \quad \forall \layer \in [\layercount], \neuroni \in \neurons^\layer.
    \end{align*}

    For $\neuroni \in \neurons^1$ and $\neuronj \in \neurons^0$, $\weight_{\neuroni \neuronj}^1 \varx_\neuronj^0$ is a multiple of $\frac{1}{\quant}$. Also, $\refer_\neuroni^1 + \frac{1}{\quant}$ is the smallest multiple of $\frac{2}{\quant}$ not smaller than $\sum_{\neuronj \in \neurons^0} \weight_{\neuroni \neuronj}^1 - \bias_\neuroni^1$, and $\refer_\neuroni^1 - \frac{1}{\quant}$ is the largest multiple of $\frac{2}{\quant}$ smaller than $\sum_{\neuronj \in \neurons^0} \weight_{\neuroni \neuronj}^1 - \bias_\neuroni^1$. Hence, the above constraints can be written as follows for $\layer = 1$:
    \begin{align*}
        \varx_\neuroni^1 = 1 &\Rightarrow 2 \sum_{\neuronj \in \neurons^0} \weight_{\neuroni \neuronj}^1 \varx_\neuronj^0 \ge \refer_\neuroni^1 + \frac{1}{\quant}, \quad \forall \neuroni \in \neurons^1,\\
        \varx_\neuroni^1 = 0 &\Rightarrow 2 \sum_{\neuronj \in \neurons^0} \weight_{\neuroni \neuronj}^1 \varx_\neuronj^0 \le \refer_\neuroni^1 - \frac{1}{\quant}, \quad \forall \neuroni \in \neurons^1.
    \end{align*}

    For $\layer \in \{2, \ldots, \layercount\}$, $\neuroni \in \neurons^\layer$, and $\neuronj \in \neurons^{\layer - 1}$, $\weight_{\neuroni \neuronj}^\layer \varx_\neuronj^{\layer - 1}$ is an integer. Also, $\refer_\neuroni^\layer + 1$ is the smallest even integer not smaller than $\sum_{\neuronj \in \neurons^{\layer - 1}} \weight_{\neuroni \neuronj}^\layer - \bias_\neuroni^\layer$, and $\refer_\neuroni^\layer - 1$ is the largest even integer smaller than $\sum_{\neuronj \in \neurons^{\layer - 1}} \weight_{\neuroni \neuronj}^\layer - \bias_\neuroni^\layer$. Hence, the above constraints can be written as follows for $\layer \in \{2, \ldots, \layercount\}$:
    \begin{align*}
        \varx_\neuroni^\layer = 1 &\Rightarrow 2 \sum_{\neuronj \in \neurons^{\layer - 1}} \weight_{\neuroni \neuronj}^\layer \varx_\neuronj^{\layer - 1} \ge \refer_\neuroni^\layer + 1, \quad \forall \neuroni \in \neurons^\layer,\\
        \varx_\neuroni^\layer = 0 &\Rightarrow 2 \sum_{\neuronj \in \neurons^{\layer - 1}} \weight_{\neuroni \neuronj}^\layer \varx_\neuronj^{\layer - 1} \le \refer_\neuroni^\layer - 1, \quad \forall \neuroni \in \neurons^\layer.
    \end{align*}

    For $\layer \in [\layercount]$, $\neuroni \in \neurons^\layer$, and $\neuronj \in \neurons^{\layer - 1}$, $\weight_{\neuroni \neuronj}^\layer \in \{-1, 0, 1\}$\ and $\varx_\neuronj^{\layer - 1} \in [0, 1]$. Hence, $\lb_\neuroni^\layer$ is a lower bound for $2 \sum_{\neuronj \in \neurons^{\layer - 1}} \weight_{\neuroni \neuronj}^\layer \varx_\neuronj^{\layer - 1}$ because it is minus two times the number of $-1$ in $\{\weight_{\neuroni \neuronj}^\layer: \neuronj \in \neurons^{\layer - 1}\}$. Also, $\ub_\neuroni^\layer$ is an upper bound for $2 \sum_{\neuronj \in \neurons^{\layer - 1}} \weight_{\neuroni \neuronj}^\layer \varx_\neuronj^{\layer - 1}$ because it is two times the number of $1$ in $\{\weight_{\neuroni \neuronj}^\layer: \neuronj \in \neurons^{\layer - 1}\}$.
    
    By exploiting this lower bound and upper bound, it can be concluded that $(\varxb^0, \varxb^1) \in \setx^0 \times \{0, 1\}^{\neuroncount^1}$ satisfies \eqref{eq:origFormPropConstr} if and only if it satisfies \eqref{eq:propInpLowerBoundConstr} and \eqref{eq:propInpUpperBoundConstr}, and $(\varxb^{\layer - 1}, \varxb^\layer) \in \{0, 1\}^{\neuroncount^{\layer - 1}} \times \{0, 1\}^{\neuroncount^\layer}$ satisfies \eqref{eq:origFormPropConstr} if and only if it satisfies \eqref{eq:propHidLowerBoundConstr} and \eqref{eq:propHidUpperBoundConstr} for $\layer \in \{2, \ldots, \layercount\}$. \Halmos
\end{proof}

\subsection{Proof of Lemma \ref{lem:oneip}}

\begin{proof}{Proof.}
    First, from a feasible solution $(\varxbh^0, \ldots, \varxbh^\layercount, \classh)$ to \eqref{eq:origForm} where $\class$ is considered as a decision variable, we obtain a feasible solution to \eqref{eq:incorpForm} with the same objective value. From $(\varxbh^0, \ldots, \varxbh^\layercount, \classh)$, $\varybh$ is defined as $\quant \varxbh^0$. Also, $\varzh_\class$ is defined as $\one_{\{\classh\}}(\class)$, and $\varvh_{\class \neuroni}$ is defined as $\varzh_\class \varxh_\neuroni^\layercount$.

    By Lemma \ref{lem:linPropConstr}, $(\varxbh^0, \ldots, \varxbh^\layercount, \varybh, \varzbh, \varvbh)$ satisfies \eqref{eq:propInpLowerBoundConstr}-\eqref{eq:propInpUpperBoundConstr} and \eqref{eq:propHidLowerBoundConstr}-\eqref{eq:propHidUpperBoundConstr} for $\layer \in \{2, \ldots, \layercount\}$. By the definition of $\varybh$, $(\varxbh^0, \ldots, \varxbh^\layercount, \varybh, \varzbh, \varvbh)$ satisfies \eqref{eq:inpQuantConstr}-\eqref{eq:inpUpperBoundConstr} and \eqref{eq:incorpFormYVar}. By the definition of $\varzbh$ and $\varvbh$, $(\varxbh^0, \ldots, \varxbh^\layercount, \varybh, \varzbh, \varvbh)$ satisfies \eqref{eq:incorpFormClassConstr}-\eqref{eq:incorpFormProdSumConstr} and \eqref{eq:incorpFormZVar}-\eqref{eq:incorpFormVVar}. Also, $(\varxbh^0, \ldots, \varxbh^\layercount, \varybh, \varzbh, \varvbh)$ satisfies \eqref{eq:incorpFormProdConstr} because $\varzbh$ and $\varxbh^\layercount$ are binary, so $(\varxbh^0, \ldots, \varxbh^\layercount, \varybh, \varzbh, \varvbh)$ is a feasible solution to \eqref{eq:incorpForm}.

    The definitions of $\varzbh$ and $\varvbh$ implies $\varzh_\class = 0$ for $\class \ne \classh$, $\varzh_{\classh} = 1$, and $\varvh_{\class \neuroni} = \varzh_\class \varxh_\neuroni^\layercount$. With this implication, the objective value of $(\varxbh^0, \ldots, \varxbh^\layercount, \classh)$ in \eqref{eq:origForm} is same as the objective value of $(\varxbh^0, \ldots, \varxbh^\layercount, \varybh, \varzbh, \varvbh)$ in \eqref{eq:incorpForm} by following steps:
    \begin{equation}
        \begin{aligned}
            \funca_{\classh}^{\layercount + 1}(\varxbh^\layercount) - \funca_{\classbar}^{\layercount + 1}(\varxbh^\layercount) &= \sum_{\class \in \neurons^{\layercount + 1} \setminus \{\classbar\}} \varzh_\class(\funca_\class^{\layercount + 1}(\varxbh^\layercount) - \funca_{\classbar}^{\layercount + 1}(\varxbh^\layercount))\\
            &= \sum_{\class \in \neurons^{\layercount + 1} \setminus \{\classbar\}} \varzh_\class\Big(\sum_{\neuroni \in \neurons^\layercount} (\weight_{\class \neuroni}^\layercount - \weight_{\classbar \neuroni}^\layercount)(2 \varxh_\neuroni^\layercount - 1) + (\bias_\class^\layercount - \bias_{\classbar}^\layercount)\Big)\\
            &= \sum_{\class \in \neurons^{\layercount + 1} \setminus \{\classbar\}} \sum_{\neuroni \in \neurons^\layercount} 2(\weight_{\class \neuroni}^\layercount - \weight_{\classbar \neuroni}^\layercount)\varzh_\class \varxh_\neuroni^\layercount\\
            &\quad + \sum_{\class \in \neurons^{\layercount + 1} \setminus \{\classbar\}} \Big(-\sum_{\neuroni \in \neurons^\layercount} (\weight_{\class \neuroni}^\layercount - \weight_{\classbar \neuroni}^\layercount) + (\bias_\class^\layercount - \bias_{\classbar}^\layercount)\Big)\varzh_\neuroni\\
            &= \sum_{\class \in \neurons^{\layercount + 1} \setminus \{\classbar\}} \sum_{\neuroni \in \neurons^\layercount} 2(\weight_{\class \neuroni}^\layercount - \weight_{\classbar \neuroni}^\layercount)\varvh_{\class \neuroni}\\
            &\quad + \sum_{\class \in \neurons^{\layercount + 1} \setminus \{\classbar\}} \Big(-\sum_{\neuroni \in \neurons^\layercount} (\weight_{\class \neuroni}^\layercount - \weight_{\classbar \neuroni}^\layercount) + (\bias_\class^\layercount - \bias_{\classbar}^\layercount)\Big)\varzh_\neuroni.
        \end{aligned}
    \label{eq:linObj}
    \end{equation}

    Next, from a feasible solution $(\varxbh^0, \ldots, \varxbh^\layercount, \varybh, \varzbh, \varvbh)$ to \eqref{eq:incorpForm}, we obtain a feasible solution to \eqref{eq:origForm} with the same objective value where $\class$ is considered as a decision variable. From $(\varxbh^0, \ldots, \varxbh^\layercount, \varybh, \varzbh, \varvbh)$, $\classh$ is defined as $\class$ satisfying $\varzh_\class = 1$, whose uniqueness is guaranteed by \eqref{eq:incorpFormClassConstr} and \eqref{eq:incorpFormZVar}.

    By \eqref{eq:propInpLowerBoundConstr}-\eqref{eq:propHidUpperBoundConstr} and Lemma \ref{lem:linPropConstr}, $(\varxbh^0, \ldots, \varxbh^\layercount, \classh)$ satisfies the layer propagation constraints \eqref{eq:origFormPropConstr}. By \eqref{eq:inpQuantConstr}-\eqref{eq:inpPertConstr}, \eqref{eq:incorpFormXInpVar}, and \eqref{eq:incorpFormYVar}, $(\varxbh^0, \ldots, \varxbh^\layercount, \classh)$ satisfies the input perturbation constraint \eqref{eq:origFormXInpVar}. Hence, $(\varxbh^0, \ldots, \varxbh^\layercount, \classh)$ is a feasible solution to \eqref{eq:origForm}.

    By \eqref{eq:linObj}, the objective value of $(\varxbh^0, \ldots, \varxbh^\layercount, \varybh, \varzbh, \varvbh)$ in \eqref{eq:incorpForm} is same as the objective value of $(\varxbh^0, \ldots, \varxbh^\layercount, \classh)$ in \eqref{eq:origForm}. The first equality holds because $\varzh_\class = 0$ for $\class \neq \classh$, and $\varzh_{\classh} = 1$. The last equality holds because $\varvh_{\class \neuroni} = \varzh_\class \varxh_\neuroni^\layercount$. For $\class \ne \classh$, $\varvh_{\class \neuroni} \ge 0$ by \eqref{eq:incorpFormVVar}, and $\varvh_{\class \neuroni} \le \varzh_\class = 0$ by \eqref{eq:incorpFormProdConstr}, so $\varvh_{\class \neuroni} = 0 = \varzh_\class \varxh_\neuroni^\layercount$. Also, $\varvh_{\classh \neuroni} = \varzh_{\classh} \varxh_\neuroni^\layercount$ by following steps:
    \begin{align*}
        \varvh_{\classh \neuroni} &= \sum_{\class \in \neurons^{\layercount + 1} \setminus \{\classbar\}} \varvh_{\class \neuroni} &(\text{For all $\class$ other than $\classh$, $\varvh_{\class \neuroni} = 0$})\\
        &= \varxh_\neuroni^\layercount &(\text{By \eqref{eq:incorpFormProdSumConstr}})\\
        &= \varzh_{\classh} \varxh_\neuroni^\layercount.
    \end{align*}

    For these reasons, \eqref{eq:incorpForm} is equivalent to \eqref{eq:origForm}. \Halmos
\end{proof}

\subsection{Proof of Convex Hull Characterization for a Single Neuron}

We prove Theorem \ref{theo:singleConvHullExt} on the characterization for the convex hull \eqref{eq:singleConvHull} for a single neuron introduced in Section \ref{subsec:2_3}, which is an extension of the main result in \cite{han2021single}. To simplify the statement of this theorem, $\layer \in \{2, \ldots, \layercount\}$ and $\neuroni \in \neurons^\layer$ are fixed, and the following notations are defined for $\neuronj \in \neurons^{\layer - 1}$ with $\lb_\neuroni^\layer$, $\ub_\neuroni^\layer$, and $\refer_\neuroni^\layer$ defined in Section \ref{subsubsec:2_1_1}:
\begin{equation*}
\begin{aligned}
    \varxb & := \varxb^{\layer - 1},\\
    \varu & := \varx_\neuroni^\layer,\\
    \weightl_\neuronj & := \weight_{\neuroni \neuronj}^\layer,\\
    \lb & := \lb_\neuroni^\layer = \sum_{\neuronj \in \neurons^{\layer - 1}} (\weight_{\neuroni \neuronj}^\layer - |\weight_{\neuroni \neuronj}^\layer|),\\
    \ub &:= \ub_\neuroni^\layer = \sum_{\neuronj \in \neurons^{\layer - 1}} (\weight_{\neuroni \neuronj}^\layer + |\weight_{\neuroni \neuronj}^\layer|),\\
    \refer &:= \refer_\neuroni^\layer = 2 \bigg\lceil\frac{\sum_{\neuronj \in \neurons^{\layer - 1}} \weight_{\neuroni \neuronj}^\layer - \bias_\neuroni^\layer}{2}\bigg\rceil - 1.
\end{aligned}
\end{equation*}
Then, this theorem can be restated as follows:

\noindent\textbf{Theorem \ref{theo:singleConvHullExt} (Restated)} {\itshape The set of $(\varxb, \varu) \in [0, 1]^{\neuroncount^{\layer - 1}} \times [0, 1]$ satisfying the following inequalities for all $\neuronjs \subset \{\neuronj \in \neurons^{\layer - 1}: \weightl_\neuronj \ne 0\}$ is \eqref{eq:singleConvHull}:
\begin{subequations}\label{eq:simpSingleConvHullIneq}
    \begin{align}
        &\sum_{\neuronj \in \neuronjs} (\weightl_\neuronj(2 \varx_\neuronj - 1) - (2 \varu - 1)) \ge (\refer - \ub + 1)\varu,\label{eq:simpSingleConvHullIneqLowerBound}\\
        &\sum_{\neuronj \in \neuronjs} (\weightl_\neuronj(2 \varx_\neuronj - 1) - (2 \varu - 1)) \le (\refer - \lb - 1)(-\varu + 1).\label{eq:simpSingleConvHullIneqUpperBound}
    \end{align}
\end{subequations}}

\begin{proof}{Proof.}
    Consider the following sets:
    \begin{equation*}
    \begin{aligned}
        \setx^+ &= \{(\varxb, 1): \varxb \in \{0, 1\}^{\neuroncount^{\layer - 1}}, \funca_\neuroni^\layer(\varxb) \ge 0\},\\
        \setx^- &= \{(\varxb, 0): \varxb \in \{0, 1\}^{\neuroncount^{\layer - 1}}, \funca_\neuroni^\layer(\varxb) < 0\}.
    \end{aligned}
    \end{equation*}
    By the proof of Lemma \ref{lem:linPropConstr}, $\setx^+$ and $\setx^-$ can be written as follows:
    \begin{equation*}
    \begin{aligned}
        \setx^+ &= \Big\{(\varxb, 1): \varxb \in \{0, 1\}^{\neuroncount^{\layer - 1}}, \sum_{\substack{\neuronj \in \neurons^{\layer - 1}\\ \weightl_\neuronj \ne 0}} \weightl_\neuronj \varx_\neuronj \ge \frac{\refer + 1}{2}\Big\},\\
        \setx^- &= \Big\{(\varxb, 0): \varxb \in \{0, 1\}^{\neuroncount^{\layer - 1}}, \sum_{\substack{\neuronj \in \neurons^{\layer - 1}\\ \weightl_\neuronj \ne 0}} \weightl_\neuronj \varx_\neuronj \le \frac{\refer - 1}{2}\Big\}.
        \end{aligned}
    \end{equation*}

    By exploiting total unimodularity of the constraints defining $\setx^+$ and $\setx^-$ arising from $\weightl_\neuronj \in \{-1, 0, 1\}$ and $\frac{\refer + 1}{2} \in \zz$, $\conv(\setx^+)$ and $\conv(\setx^-)$ can be obtained as follows:
    \begin{equation*}
\begin{aligned}
        \conv(\setx^+) &= \bigg\{(\varxb, 1): \varxb \in [0, 1]^{\neuroncount^{\layer - 1}}, \sum_{\substack{\neuronj \in \neurons^{\layer - 1}\\ \weightl_\neuronj \ne 0}} \weightl_\neuronj \varx_\neuronj \ge \frac{\refer + 1}{2}\bigg\},\\
        \conv(\setx^-)& = \bigg\{(\varxb, 0): \varxb \in [0, 1]^{\neuroncount^{\layer - 1}}, \sum_{\substack{\neuronj \in \neurons^{\layer - 1}\\ \weightl_\neuronj \ne 0}} \weightl_\neuronj \varx_\neuronj \le \frac{\refer - 1}{2}\bigg\}.
        \end{aligned}
    \end{equation*}

    These convex hulls can be used to describe \eqref{eq:singleConvHull} as follows where the last statement holds because of compactness of $\conv(\setx^+)$ and $\conv(\setx^-)$ arising from finiteness of $\setx^+$ and $\setx^-$:
    \begin{equation*}
    \begin{aligned}
        \conv(\setx^+ \cup \setx^-) &= \conv(\conv(\setx^+) \cup \conv(\setx^-))\\
        &= \cl\conv(\conv(\setx^+) \cup \conv(\setx^-)).
    \end{aligned}
    \end{equation*}

    By \cite{balas1985disjunctive}'s work on disjunctive programming, $(\varxb, \varu) \in \rr^{\neuroncount^{\layer - 1}} \times \rr$ is in \eqref{eq:singleConvHull} if and only if there exist $(\varxb^+, \varu^+), (\varxb^-, \varu^-) \in \rr^{\neuroncount^{\layer - 1}} \times \rr$ and $\lambda \in [0, 1]$ satisfying the following inequalities:
    \begin{equation*}
    \begin{aligned}
        \varxb &= \varxb^+ + \varxb^-,\\
        \varu &= \varu^+ + \varu^-,\\
        0 \le \varxb^+ &\le \lambda \cdot \ones,\\
        \varu^+ &= \lambda,\\
        2 \sum_{\substack{\neuronj \in \neurons^{\layer - 1}\\ \weightl_\neuronj \ne 0}} \weightl_\neuronj \varx_\neuronj^+ &\ge (\refer + 1)\lambda,\\
        0 \le \varxb^- &\le (1 - \lambda) \cdot \ones,\\
        \varu^- &= 0,\\
        2 \sum_{\substack{\neuronj \in \neurons^{\layer - 1}\\ \weightl_\neuronj \ne 0}} \weightl_\neuronj \varx_\neuronj^- &\le (\refer - 1)(1 - \lambda).
    \end{aligned}
    \end{equation*}
    Existence of $(\varxb^+, \varu^+), (\varxb^-, \varu^-) \in \rr^{\neuroncount^{\layer - 1}} \times \rr$ and $\lambda \in [0, 1]$ satisfying the above inequalities is identical to existence of $\varxb^+ \in \rr^{\neuroncount^{\layer - 1}}$ satisfying the following inequalities, which are provided by substituting $\varu^-$ with $0$, $\varu^+$ and $\lambda$ with $\varu$, and $\varxb^-$ with $\varxb - \varxb^+$:
    \begin{align}
        -\frac{\varu}{2} \cdot \ones \le \varxb^+ - \frac{\varu}{2} \cdot \ones &\le \frac{\varu}{2} \cdot \ones,\label{eq:plusRange}\\
        -\Big(\frac{1 - \varu}{2}\Big) \cdot \ones \le \varxb - \varxb^+ - \Big(\frac{1 - \varu}{2}\Big) \cdot \ones &\le \Big(\frac{1 - \varu}{2}\Big) \cdot \ones,\label{eq:minusRange}\\
        2 \sum_{\substack{\neuronj \in \neurons^{\layer - 1}\\ \weightl_\neuronj \ne 0}} \weightl_\neuronj \varx_\neuronj^+ &\ge (\refer + 1)\varu,\label{eq:plusConstr}\\
        2 \sum_{\substack{\neuronj \in \neurons^{\layer - 1}\\ \weightl_\neuronj \ne 0}} \weightl_\neuronj \varx_\neuronj^+ &\ge 2 \sum_{\substack{\neuronj \in \neurons^{\layer - 1}\\ \weightl_\neuronj \ne 0}} \weightl_\neuronj \varx_\neuronj - (\refer - 1)(1 - \varu).\label{eq:minusConstr}
    \end{align}
    For $\neuronj$ satisfying $\weightl_\neuronj \neq 0$, \eqref{eq:plusRange}-\eqref{eq:minusRange} are equivalent to the following inequalities because $\weightl_\neuronj \in \{-1, 1\}$:
   \begin{equation*}
    \begin{aligned}
        -\frac{\varu}{2} \cdot \ones \le \weightl_\neuronj\Big(\varxb^+ - \frac{\varu}{2}\Big) \cdot \ones &\le \frac{\varu}{2} \cdot \ones,\\
        -\frac{(1 - \varu)}{2} \cdot \ones \le \weightl_\neuronj\Big(\varxb - \varxb^+ - (\frac{1 - \varu}{2})\Big) \cdot \ones &\le \frac{(1 - \varu)}{2} \cdot \ones.
    \end{aligned}
    \end{equation*}
    By replacing $\weightl_\neuronj \varx_\neuronj^+$ with its maximum attained from the above inequalities in \eqref{eq:plusConstr}-\eqref{eq:minusConstr} and employing Fourier-Motzkin Elimination to \eqref{eq:plusRange}-\eqref{eq:minusRange} to remove $\varxb^+ - \frac{\varu}{2}$, existence of $\varxb^+ \in \rr^{\neuroncount^{\layer - 1}}$ satisfying \eqref{eq:plusRange}-\eqref{eq:minusConstr} is equivalent to whether the following inequalities are satisfied:
    \begin{equation*}
    \begin{aligned}
        \max\Big(-\frac{\varu}{2}, \varx_\neuronj + \frac{\varu}{2} - 1\Big) &\le \min\Big(\frac{\varu}{2}, \varx_\neuronj - \frac{u}{2}\Big), \quad \forall \neuronj \in \neurons^{\layer - 1},\\
        \sum_{\substack{\neuronj \in \neurons^{\layer - 1}\\ \weightl_\neuronj \ne 0}} \min((\weightl_\neuronj + 1)\varu, 2 \weightl_\neuronj \varx_\neuronj + (-\weightl_\neuronj + 1)(1 - \varu)) &\ge (\refer + 1)\varu,\\
        \sum_{\substack{\neuronj \in \neurons^{\layer - 1}\\ \weightl_\neuronj \ne 0}} \min((\weightl_\neuronj + 1)\varu, 2 \weightl_\neuronj \varx_\neuronj + (-\weightl_\neuronj + 1)(1 - \varu)) &\ge 2 \sum_{\substack{\neuronj \in \neurons^{\layer - 1}\\ \weightl_\neuronj \ne 0}} \weightl_\neuronj \varx_\neuronj - (\refer - 1)(1 - \varu).
    \end{aligned}
    \end{equation*}
    These inequalities are satisfied if and only if the following inequalities are satisfied because $|\weightl_\neuronj| = 1$ if $\weightl_\neuronj \ne 0$, $\sum_{\substack{\neuronj \in \neurons^{\layer - 1}\\ \weightl_\neuronj \ne 0}} (-\weightl_\neuronj + 1) = \sum_{\neuronj \in \neurons^{\layer - 1}} (-\weightl_\neuronj + |\weightl_\neuronj|) = -\lb$, and $\sum_{\substack{\neuronj \in \neurons^{\layer - 1}\\ \weightl_\neuronj \ne 0}} (\weightl_\neuronj + 1) = \sum_{\neuronj \in \neurons^{\layer - 1}} (\weightl_\neuronj + |\weightl_\neuronj|) = \ub$:
    \begin{gather*}
        0 \le \varx_\neuronj \le 1, \quad \forall \neuronj \in \neurons^{\layer - 1},\\
        0 \le \varu \le 1,\\
        \sum_{\substack{\neuronj \in \neurons^{\layer - 1}\\ \weightl_\neuronj \ne 0}} \min(\weightl_\neuronj(2 \varx_\neuronj - 1), 2 \varu - 1) \ge (\refer - \lb + 1)\varu - \sum_{\neuronj \in \neurons^{\layer - 1}} |\weightl_\neuronj|,\\
        \sum_{\substack{\neuronj \in \neurons^{\layer - 1}\\ \weightl_\neuronj \ne 0}} \max(\weightl_\neuronj(2 \varx_\neuronj - 1), 2 \varu - 1) \le (\ub - \refer + 1)\varu + \refer - \sum_{\neuronj \in \neurons^{\layer - 1}} \weightl_\neuronj - 1.
    \end{gather*}
    If $\weightl_\neuronj \ne 0$, $|\weightl_\neuronj| = 1$, so $|\{\neuronj \in \neurons^{\layer - 1}: \weightl_\neuronj \ne 0\}| = \sum_{\neuronj \in \neurons^{\layer - 1}} |\weightl_\neuronj|$. With this equality, by replacing $\min$ and $\max$ in the last two inequalities with linear expressions involving $\neuronjs \subset \{\neuronj \in \neurons^{\layer - 1}: \weightl_\neuronj \ne 0\}$, the set of $(\varxb, \varu)$ satisfying the last two inequalities can be written as the set of $(\varxb, \varu)$ satisfying the following linear inequalities for all $\neuronjs \subset \{\neuronj \in \neurons^{\layer - 1}: \weightl_\neuronj \ne 0\}$:
    \begin{gather*}
        \sum_{\neuronj \in \neuronjs} (\weightl_\neuronj(2 \varx_\neuronj - 1) - (2 \varu - 1)) \ge (\refer - \lb + 1)\varu - \sum_{\neuronj \in \neurons^{\layer - 1}} |\weightl_\neuronj| - \Big(\sum_{\neuronj \in \neurons^{\layer - 1}} |\weightl_\neuronj|\Big)(2 \varu - 1),\\
        \sum_{\neuronj \in \neuronjs} (\weightl_\neuronj(2 \varx_\neuronj - 1) - (2 \varu - 1)) \le (\ub - \refer + 1)\varu + \refer - \sum_{\neuronj \in \neurons^{\layer - 1}} \weightl_\neuronj - 1 - \Big(\sum_{\neuronj \in \neurons^{\layer - 1}} |\weightl_\neuronj|\Big)(2 \varu - 1).
    \end{gather*}
    
    These inequalities are equivalent to \eqref{eq:simpSingleConvHullIneqLowerBound} and \eqref{eq:simpSingleConvHullIneqUpperBound} because $\sum_{\neuronj \in \neurons^{\layer - 1}} |\weightl_\neuronj| = \frac{\ub - \lb}{2}$ and $\sum_{\neuronj \in \neurons^{\layer - 1}} (\weightl_\neuronj - |\weightl_\neuronj|) = \lb$, so \eqref{eq:singleConvHull} is the set of $(\varxb, \varu) \in [0, 1]^{\neurons^{\layer - 1}} \times [0, 1]$ satisfying \eqref{eq:simpSingleConvHullIneq} for all $\neuronjs \subset \{\neuronj \in \neurons^{\layer - 1}: \weightl_\neuronj \ne 0\}$. \Halmos
\end{proof}

\subsection{Implementation Details for Infinity-norm and 2-norm}
\label{eq:inf2details}

We explain implementation details for the case that the maximum perturbation from $\varxbbar$ is defined using an $\ell_\infty$ norm or $\ell_2$ norm. These details are about how to solve the IP subproblem \eqref{eq:varFixForm} without an IP solver in the case of $\layer = 1$ and how to implement the input perturbation constraint \eqref{eq:inpPertConstr}.

We first consider solving \eqref{eq:varFixForm} for the case of layer $\layer = 1$ and $\normp = \infty$. In this case, the optimal solution is obtained by setting $\varx_\neuronj^0$ to $\min(\varxbar_\neuronj + \pert, 1)$ for $\neuronj$ satisfying $\constc_\neuroni \weight_{\neuroni \neuronj}^1 = 1$, setting $\varx_\neuronj^0$ to $\max(\varxbar_\neuronj - \pert, 0)$ for $\neuronj$ satisfying $\constc_\neuroni \weight_{\neuroni \neuronj}^1 = -1$, and  setting $\varx_\neuronj^0$ to $\varxbar_\neuronj$ for $\neuronj$ satisfying $\constc_\neuroni \weight_{\neuroni \neuronj}^1 = 0$. All constraints defining $\setx^0$ are of the form $|\varx_\neuronj^0 - \varxbar_\neuronj| \le \pert$, so this solution is feasible, and it immediate that no other solution can yield a better objective value.

We next consider solving \eqref{eq:varFixForm} for the case of layer $\layer = 1$ and $\normp = 2$. First, observe that given a feasible solution $\varxbh^0 \in \frac{1}{\quant} \zz_+^{\neuroncount^0} \cap [0, 1]^{\neuroncount^0}$ to \eqref{eq:varFixForm}, if for some $j \in N^0$ it holds that $\constc_\neuroni \weight_{\neuroni \neuronj}^1(\varxh_\neuronj^0 - \varxbar_\neuronj) \le 0$, then another feasible solution with the same or better objective value can be obtained by replacing $\varxh_\neuronj^0$ with $\varxbar_\neuronj$. Hence, we can restrict our search for an optimal solution of \eqref{eq:varFixForm} to solutions $\varxbh^0$ that satisfy:
\begin{align*}
    \varxh_\neuronj^0 \ge \varxbar_\neuronj,  &\quad 
    \text{for $j$ such that } \constc_\neuroni \weight_{\neuroni \neuronj}^1 = 1,\\
    \varxh_\neuronj^0 \le \varxbar_\neuronj, &\quad   \text{for $j$ such that } \constc_\neuroni \weight_{\neuroni \neuronj}^1 = -1,\\
    \varxh_\neuronj^0 = \varxbar_\neuronj, &\quad   \text{for $j$ such that } \weight_{\neuroni \neuronj}^1 = 0.
\end{align*}
Let $N_0^+ = \{j \in N_0 : \weight_{\neuroni \neuronj} \neq 0\}$. Thus, we can restrict our attention to solutions defined by a vector $z \in \{0,1,\ldots,q\}^{N_0^+}$ as follows:
$$
    \varxh_\neuronj^{0}(z) = \begin{cases} \min(\varxbar_\neuronj + \frac{z_\neuronj}{\quant}, 1) & \text{if } \constc_\neuroni \weight_{\neuroni \neuronj}^1 = 1, \\ \max(\varxbar_\neuronj - \frac{z_\neuronj}{\quant}, 0) & \text{if } \constc_\neuroni \weight_{\neuroni \neuronj}^1 = -1, \\ \varxbar_\neuronj & \text{if } \weight_{\neuroni \neuronj}^1 = 0 . \end{cases}
$$
Observe that the objective in \eqref{eq:varFixForm} and the expression $\| \varxbh(z) - \varxbbar \|_2$ are monotone increasing in $z$, and hence an optimal solution will be such that increasing $z_j$ for any $j \in N_0^+$ would be infeasible. Next, for $z \in \{0,1,\ldots,q\}^{N_0^+}$, define the set
$$ N_0^S(z) = \Bigl\{ j \in N_0^+ : (c_iW_{ij}^1 = 1 \text{ and } \varxh_\neuronj^0(z) \leq 1 - 1/q ) \text{ or } (c_iW_{ij}^1 = -1 \text{ and } \varxh_\neuronj^0(z) \geq 1/q ) \Bigr\}. $$
If there exists $z \in \{0,1,\ldots,q\}^{N_0^+}$ in which $\varxbh(z)$ is feasible to \eqref{eq:varFixForm} and $N_0^S(z) = \emptyset$, then $\varxbh(z)$ is optimal to \eqref{eq:varFixForm} since this solution obtains the best possible objective. Otherwise, we claim that there exists an optimal solution $\varxbh(z)$ that satisfies:
\[ \max\{ z_j : j \in N_0^S(z)\} - \min \{z_j : j \in N_0^S(z) \} \leq 1. \]
Consider a solution that violates this condition. Another feasible solution with the same objective value can be obtained by decreasing $z_{j_1}$ by one and increasing $z_{j_2}$ by one, where $j_1 \in \arg \max\{ z_j : j \in N_0^S(z)\}$ and $j_2 \in \arg \min \{z_j : j \in N_0^S(z) \}$. Repeating this process will eventually yield a vector $z$ which satisfies this condition since there only finitely many elements achieving the max and min in the condition, and an element will necessarily be removed from one or the other as long as the max is at least two larger than the min. 

These arguments imply that an optimal solution to \eqref{eq:varFixForm} can be obtained by first finding the maximum $m$ such that the solution $\varxbh(z)$ is satisfies $\| \varxbh(z) - \varxbbar \|_2 \leq \epsilon$, where $z_j = m$ for $j \in N_0^+$. Having found this solution (e.g., by binary search), one would then iteratively select some $j \in N_0^S(z)$ and increase $z_j$, and repeat (without selecting any $j$ more than once) until no more can be increased without violating $\| \varxbh(z) - \varxbbar \|_2 \leq \epsilon$.

Finally, we describe how the constraint \eqref{eq:inpPertConstr} is formulated when using a MIP solver to solve the verification problem in the case of $p=2$. We define a decision variable $\varu_\neuronj$ to represent $|\varx_\neuronj^0 - \varxbar_\neuronj|$ for each $\neuronj \in \neurons^0$. The following inequalities are added as constraints because \eqref{eq:inpPertConstr} is satisfied if and only if there exists $\varub \in \rr_+^{\neuroncount^0}$ satisfying the following inequalities:
\begin{align*}
    \varu_\neuronj &\ge \varx_\neuronj^0 - \varxbar_\neuronj, \quad \forall \neuronj \in \neurons^0,\\
    \varu_\neuronj &\ge -\varx_\neuronj^0 + \varxbar_\neuronj, \quad \forall \neuronj \in \neurons^0,\\
    \sum_{\neuronj \in \neurons^0} \varu_\neuronj^2 &\le \pert^2.
\end{align*}
We remark that we found through preliminary empirical study that this formulation technique was computationally superior to the more direct approach of simply formulating the constraint as follows:
\[ \sum_{j \in N_0} (x_j^0 - \varxbar_\neuronj)^2 \leq \epsilon^2. \]

\subsection{Detailed Computational Results for Infinity-norm and 2-norm}
\label{app:inf2norm}

We present more detailed computational results for the case that the maximum perturbation from $\varxbbar$ is defined using an $\ell_\infty$ norm or $\ell_2$ norm. Test instances for these cases are formed with Network 5 in Table \ref{tab:net}, $\varxbbar$,  and $\pert$ that are chosen as explained in Section \ref{subsec:4_1}.

{
\SingleSpacedXI
\begin{table}
    \centering
    \begin{tabular}{c|c|S S}
        \hline
        \multirow{2}{*}{\textbf{Norm}} & \multirow{2}{*}{\begin{tabular}{c} \textbf{\# of}\\ \textbf{instances} \end{tabular}} & \textbf{Many-IP} & \textbf{1-IP}\\
        & & &\\
        \hline
        $\mathbf{\ell_\infty}$ & 47 & 143.1(16) & 78.1(6)\\
        $\mathbf{\ell_2}$ & 17 & 502.9(12) & 311.6(2)\\
        \hline
    \end{tabular}
    
    \caption{Verification time (seconds) of Many-IP and 1-IP for $\ell_\infty$ norm and $\ell_2$ norm}\label{tab:addManyIpOneIpVeriTime}
\end{table}
}
We first investigate the time required to use the IP problem \eqref{eq:incorpForm} (resp. \eqref{eq:indivForm} for all $\class$) to solve the the verification problem.
The shifted geometric mean (with a shift of 1) of times over all instances tested by both IP method Many-IP and 1-IP is computed for each method and displayed in Table \ref{tab:addManyIpOneIpVeriTime}  for the case of $\normp = \infty$ and $\normp = 2$, along with the number of test instances. For Many-IP, the number in parenthesis is the number of test instances where the time limit is hit in solving \eqref{eq:indivForm} for at least one $\class$. For 1-IP, the number in parenthesis is the number of test instances where the time limit is hit in solving \eqref{eq:incorpForm}. We find that 1-IP's verification times are smaller than Many-IP's, indicating that our technique for obtaining a linear objective also leads to faster BNN verification in the case of $\normp = \infty$ and $\normp = 2$.

{
\SingleSpacedXI
\begin{table}
    \centering
    \begin{tabular}{l|S S S S}
        \hline
        & \multicolumn{1}{c}{\textbf{1-IP}} & \multicolumn{1}{c}{\textbf{1-IP}} & \multicolumn{1}{c}{\textbf{1-IP}} & \multicolumn{1}{c}{\textbf{1-IP}}\\
        & & \multicolumn{1}{c}{\textbf{+HG}} & \multicolumn{1}{c}{\textbf{+Fix}} & \multicolumn{1}{c}{\textbf{+Fix}}\\
        & & & & \multicolumn{1}{c}{\textbf{+2Var}}\\
        \hline
        \textbf{Preprocessing time (seconds)} & 0.0 & 10.6 & 7.5 & 27.4\\
        \textbf{Verification time (seconds)} & 98.6 & 147.2 & 100.6 & 44.9\\
        \textbf{\# of solved instances (veri.)} & 38 & 35 & 37 & 44\\
        \textbf{\# of nodes (veri.)} & 1297.6 & 1283.6 & 1265.3 & 17.8\\
        \textbf{Optimization time (seconds)} & 115.5 & 178.4 & 117.5 & 57.4\\
        \textbf{\# of solved instances (opt.)} & 37 & 33 & 37 & 42\\
        \textbf{\# of nodes (opt.)} & 2182.8 & 2587.9 & 2133.7 & 86.4\\
        \textbf{LP gap} & \SI{153.4}{\percent} & \SI{156.3}{\percent} & \SI{103.4}{\percent} & \SI{35.2}{\percent}\\
        \textbf{Optimality gap} & \SI{31.8}{\percent} & \SI{46.8}{\percent} & \SI{33.6}{\percent} & \SI{11.0}{\percent}\\
        \hline
    \end{tabular}
    
    \caption{Metrics of IP methods based on 1-IP for $\ell_\infty$ norm (49 instances)}\label{tab:oneIpInfinityNorm}
\end{table}
}

Table \ref{tab:oneIpInfinityNorm} presents results analogous to those of Table \ref{tab:oneIp} in the main sections, but on instances with $\normp = \infty$. We find that in this case, neither 1-IP+HG nor 1-IP+Fix yield improvement compared to 1-IP, so generating \eqref{eq:singleConvHullIneq} and variable fixings do not improve BNN verification. Generating variable fixings decreases LP gaps, but this decrease does not lead to more effective BNN verification in the case of $\normp = \infty$. On the other hand, 1-IP+Fix+2Var yields improvement compared to the other IP methods based on 1-IP. It implies that our technique for generating layerwise derived valid inequalities, mainly two-variable inequalities, also leads to more efficient BNN verification in the case of $\normp = \infty$.

{
\SingleSpacedXI
\begin{table}
    \centering
    \begin{tabular}{l|S S S S}
        \hline
        & \multicolumn{1}{c}{\textbf{1-IP}} & \multicolumn{1}{c}{\textbf{1-IP}} & \multicolumn{1}{c}{\textbf{1-IP}} & \multicolumn{1}{c}{\textbf{1-IP}}\\
        & & \multicolumn{1}{c}{\textbf{+HG}} & \multicolumn{1}{c}{\textbf{+Fix}} & \multicolumn{1}{c}{\textbf{+Fix}}\\
        & & & & \multicolumn{1}{c}{\textbf{+2Var}}\\
        \hline
        \textbf{Preprocessing time (seconds)} & 0.0 & 35.1 & 6.7 & 9.2\\
        \textbf{Verification time (seconds)} & 1215.6 & 1305.9 & 9.3 & 10.3\\
        \textbf{\# of solved instances (veri.)} & 12 & 12 & 22 & 22\\
        \textbf{\# of nodes (veri.)} & 9844.0 & 9811.1 & 0.8 & 0.7\\
        \textbf{Optimization time (seconds)} & 1549.9 & 1632.5 & 12.2 & 12.4\\
        \textbf{\# of solved instances (opt.)} & 11 & 11 & 21 & 22\\
        \textbf{\# of nodes (opt.)} & 16647.4 & 16607.7 & 7.1 & 5.0\\
        \textbf{LP gap} & \SI{165.1}{\percent} & \SI{165.1}{\percent} & \SI{12.5}{\percent} & \SI{2.9}{\percent}\\
        \textbf{Optimality gap} & \SI{46.5}{\percent} & \SI{46.6}{\percent} & \SI{1.8}{\percent} & \SI{0.0}{\percent}\\
        \hline
    \end{tabular}
    
    \caption{Metrics of IP methods based on 1-IP for $\ell_2$ norm (22 instances)}\label{tab:oneIpTwoNorm}
\end{table}
}
In the case of $\normp = 2$, the same metrics are computed over all 22 instances tested by all IP methods based on 1-IP to compare these IP methods. Table \ref{tab:oneIpTwoNorm} shows these metrics. 1-IP+HG does not result in improvement compared to 1-IP, which means that generating \eqref{eq:singleConvHullIneq} also does not improve BNN verification in the case of $\normp = 2$. 1-IP+Fix and 1-IP+Fix+2Var yield similar improvements compared to 1-IP, which implies that generating variable fixings improves BNN verification in the case of $\normp = 2$, but on these instances there does not appear to be further improvement from two-variable inequalities.
{
\SingleSpacedXI
\begin{table}
    \centering
    \begin{tabular}{l|S S}
        \hline
        & \multicolumn{1}{c}{\textbf{1-IP}} & \multicolumn{1}{c}{\textbf{1-IP}}\\
        & \multicolumn{1}{c}{\textbf{+Fix}} & \multicolumn{1}{c}{\textbf{+Fix}}\\
        & & \multicolumn{1}{c}{\textbf{+2Var}}\\
        \hline
        \textbf{Preprocessing time (seconds)} & 7.0 & 24.0\\
        \textbf{Verification time (seconds)} & 66.3 & 41.5\\
        \textbf{\# of solved instances (veri.)} & 35 & 42\\
        \textbf{\# of nodes (veri.)} & 33.6 & 4.8\\
        \textbf{Optimization time (seconds)} & 82.3 & 82.9\\
        \textbf{\# of solved instances (opt.)} & 32 & 34\\
        \textbf{\# of nodes (opt.)} & 90.7 & 75.7\\
        \textbf{LP gap} & \SI{51.8}{\percent} & \SI{24.0}{\percent}\\
        \textbf{Optimality gap} & \SI{29.5}{\percent} & \SI{16.9}{\percent}\\
        \hline
    \end{tabular}
    
    \caption{Metrics of 1-IP+Fix and 1-IP+Fix+2Var for $\ell_2$ norm (49 instances)}\label{tab:oneIpFixTwoNorm}
\end{table}
}

To better compare 1-IP+Fix and 1-IP+Fix+2Var, we report in Table \ref{tab:oneIpFixTwoNorm} the results comparing these instances on all 49 instances solved by these two methods (many of these were excluded in Table \ref{tab:oneIpTwoNorm} since they were not solved by 1-IP or 1-IP+HG). From this table we observe that spending more time in generating more layerwise derived valid inequalities, mainly two-variable inequalities, results in reducing LP gaps and solving the verification problem more quickly on these instances. 

\subsection{Computational Study on Convex Hull Characterization for a Single Neuron}
\label{app:singneuron}

We pursue an additional computational study to validate our implementation for IP methods exploiting Theorem \ref{theo:singleConvHullExt} on convex hull characterization for a single neuron. As we see in in Table \ref{tab:oneIpLpGap}, the IP method 1-IP+HG employing the valid inequalities presented Theorem \ref{theo:singleConvHullExt} does not yield significant decreases in LP gaps compared to the IP method 1-IP. However, using these valid inequalities resulted in smaller LP gaps in \cite{han2021single}. One possible reason for this discrepancy is that \eqref{eq:singleConvHullIneq} cannot be used for $\layer = 1$ in our computational study because $\quant = 255$, but these valid inequalities can be used in the computational study of \cite{han2021single} because $\quant = 1$. To examine whether this guess is correct, an additional computational study is conducted on test instances based on ones in \cite{han2021single} to assess the impact of exploiting Theorem \ref{theo:singleConvHullExt}. In this computational study, the IP method Many-IP and Many-IP+HG, which solves the IP problem \eqref{eq:indivForm} by employing a constraint generation approach with \eqref{eq:singleConvHullIneq} as 1-IP+HG for each $\class$, are compared because $\class$ is fixed in \cite{han2021single}.

{
\SingleSpacedXI
\begin{table}
    \centering
    \begin{tabular}{c|c c c}
        \hline
        \textbf{Network} & \multicolumn{1}{c}{$\mathbf{\layercount}$} & \multicolumn{1}{c}{$\mathbf{\neuroncount^1}$} & \multicolumn{1}{c}{$\mathbf{\neuroncount^2}$}\\
        \hline
        \textbf{1} & 1 & 32 & *\\
        \textbf{2} & 1 & 64 & *\\
        \textbf{3} & 1 & 128 & *\\
        \textbf{4} & 2 & 32 & 32\\
        \textbf{5} & 2 & 64 & 64\\
        \textbf{6} & 2 & 128 & 128\\
        \hline
    \end{tabular}
    
    \caption{Networks in additional computational study}\label{tab:addNet}
\end{table}
}

Each test instance consists of a BNN, $\varxbbar$, and $\pert$. Six BNNs pretrained from \cite{han2021single} are used in this computational study. These networks are trained on the MNIST training dataset by using \cite{hubara2016binarized}'s method, which implies $\neuroncount^0 = 784$ and $\neuroncount^{\layercount + 1} = 10$. The numbers of hidden layers and the number of neurons in hidden layers of each network are reported in Table \ref{tab:addNet}. Feature vectors in the MNIST dataset are scaled to binary vectors by converting coordinates smaller than 128 to 0 and the other coordinates to 1, which indicates $\quant = 1$. Ten feature vectors are selected for $\varxbbar$. For each digit from 0 to 9, one feature vector in the MNIST test dataset whose $\classbar$ is this digit is randomly chosen. Positive integers from 1 to 5 are used as $\pert$.

{
\SingleSpacedXI
\begin{table}
    \centering
    \begin{tabular}{c|c|S S|S S}
        \hline
        \multirow{2}{*}{\textbf{Network}} & \multirow{2}{*}{$\mathbf{\pert}$} & \multicolumn{2}{c|}{\textbf{LP value}} & \multicolumn{2}{c}{\textbf{Verification time (seconds)}}\\
        \cline{3-6}
        & & \multicolumn{1}{c}{\textbf{Many-IP}} & \multicolumn{1}{c|}{\textbf{Many-IP+HG}} & \multicolumn{1}{c}{\textbf{Many-IP}} & \multicolumn{1}{c}{\textbf{Many-IP+HG}}\\
        \hline
        \multirow{5}{*}{\textbf{1}} & 1.0 & 24.5 & 6.2 & 1.2 & 1.7\\
        & 2.0 & 24.6 & 6.3 & 1.3 & 5.0\\
        & 3.0 & 24.7 & 7.9 & 1.4 & 5.8\\
        & 4.0 & 24.8 & 7.7 & 1.4 & 6.7\\
        & 5.0 & 24.8 & 7.9 & 1.5 & 8.6\\
        \hline
        \multirow{5}{*}{\textbf{2}} & 1.0 & 50.2 & 12.9 & 2.3 & 3.8\\
        & 2.0 & 50.4 & 16.9 & 4.2 & 9.9\\
        & 3.0 & 50.5 & 16.4 & 4.3 & 13.4\\
        & 4.0 & 50.7 & 18.5 & 4.3 & 16.9\\
        & 5.0 & 50.8 & 22.9 & 4.1 & 17.2\\
        \hline
        \multirow{5}{*}{\textbf{3}} & 1.0 & 80.7 & 15.9 & 4.5 & 8.6\\
        & 2.0 & 80.9 & 19.1 & 9.0 & 27.5\\
        & 3.0 & 81.2 & 20.3 & 9.2 & 34.1\\
        & 4.0 & 81.4 & 29.4 & 9.2 & 36.6\\
        & 5.0 & 81.6 & 31.8 & 9.3 & 49.3\\
        \hline
        \multirow{5}{*}{\textbf{4}} & 1.0 & 22.1 & 4.5 & 1.2 & 5.0\\
        & 2.0 & 22.1 & 11.0 & 1.4 & 21.1\\
        & 3.0 & 22.1 & 14.4 & 1.5 & 20.2\\
        & 4.0 & 22.1 & 16.6 & 1.4 & 22.6\\
        & 5.0 & 22.1 & 18.1 & 1.5 & 25.6\\
        \hline
        \multirow{5}{*}{\textbf{5}} & 1.0 & 29.9 & 12.7 & 2.5 & 8.6\\
        & 2.0 & 29.9 & 19.9 & 6.6 & 53.2\\
        & 3.0 & 29.9 & 23.6 & 6.9 & 47.5\\
        & 4.0 & 29.9 & 25.7 & 7.2 & 68.4\\
        & 5.0 & 29.9 & 27.0 & 7.5 & 84.8\\
        \hline
        \multirow{5}{*}{\textbf{6}} & 1.0 & 45.0 & 26.4 & 5.3 & 16.3\\
        & 2.0 & 45.0 & 36.2 & 28.7 & 1347.6(3)\\
        & 3.0 & 45.0 & 40.0 & 37.1 & 1312.6(3)\\
        & 4.0 & 45.0 & 42.2 & 50.4 & 1287.8(5)\\
        & 5.0 & 45.0 & 43.3 & 64.9 & 1175.9(4)\\
        \hline
    \end{tabular}
    
    \caption{Metrics of Many-IP and Many-IP+HG}\label{tab:manyIp}
\end{table}
}
For each network and $\pert$, the arithmetic mean of optimal objective values of the LP relaxation problem of \eqref{eq:indivForm} over all $\varxbbar$ and $\class$ (LP value) and the shifted geometric mean of times to solve the verification problem for all $\class$ with a shift of 1 over all $\varxbbar$ (verification time) are computed for Many-IP and Many-IP+HG to compare these IP methods. Table \ref{tab:manyIp} reports LP values and verification times. The number in parenthesis for verification time is the number of $\varxbbar$ where the time limit is hit in solving \eqref{eq:indivForm} for at least one $\class$ for each network and $\pert$. Many-IP+HG yields smaller LP values than Many-IP as in \cite{han2021single}, so our additional computational study validates our implementation for IP methods employing the valid inequalities in Theorem \ref{theo:singleConvHullExt}. In particular, the gap in LP values between Many-IP and Many-IP+HG becomes smaller for larger $\layercount$ and $\pert$ as in \cite{han2021single}. However, Many-IP+HG results in larger verification times than Many-IP, which implies that even though using the inequalities in Theorem \ref{theo:singleConvHullExt} improves LP relaxation values, it does not reduce BNN verification time.

\end{document}